\pgfplotsset{compat=1.18}
\tikzset{
    -Latex,auto,node distance =1 cm and 1 cm,semithick,
    state/.style ={ellipse, draw, minimum width = 0.7 cm},
    point/.style = {circle, draw, inner sep=0.04cm,fill,node contents={}},
    bidirected/.style={Latex-Latex,dashed},
    el/.style = {inner sep=2pt, align=right, sloped}
}
\algrenewcommand\textproc{}
\newcommand{\p}{{\mathbb{P}}}
\newcommand{\B}{\mathcal{B}}
\newcommand{\prob}{{\mathbb{P}}}
\DeclareMathOperator*{\argmin}{arg\,min}
\DeclareMathOperator*{\argmax}{arg\,max}
\newtheorem{theorem}{Theorem}
\newtheorem{prop}{Proposition}
\newtheorem{lemma}{Lemma}
\newtheorem{remark}{Remark}
\newtheorem{assumption}{Assumption}[section]
\newtheorem{example}{Example}
\newtheorem{definition}{Definition}
\newcommand{\RNum}[1]{\uppercase\expandafter{\romannumeral #1\relax}}
\newcommand{\E}{\mathbb{E}}
\newcommand{\ZB}[1]{{\color{black}{ #1}}}
\newcommand{\neighbor}[1]%
{\overline{#1}}
\begin{document}
\title{ \bf A Tale of Two Cities: Pessimism and Opportunism in Offline Dynamic Pricing}
\date{}

\author{Zeyu Bian$^{1,2}$, Lan Wang$^{2}$, and Zhengling Qi$^{3}$ \\ 
$^1$Department of Statistics, Florida State University\\
$^2$Department of Management Science, University of Miami \\
$^3$ Department of Decision Sciences, The George Washington University}

\maketitle

\begin{abstract}
We study offline dynamic pricing when historical data provide incomplete coverage of the price space such that some candidate prices, including the optimal one, may be entirely unobserved.
This setting is common in practice and is especially difficult in dynamic environments. Existing offline reinforcement learning methods typically rely on full or partial coverage and can therefore perform poorly in such settings. We develop a nonparametric partial-identification framework for offline dynamic pricing that exploits the monotonicity of demand in price to bound the value of unobserved prices. Within this framework, we formulate two dynamic decision rules: a pessimistic policy that maximizes worst-case revenue and an opportunistic policy that minimizes worst-case regret. These rules are tailored to a sequential no-coverage environment and are not direct extensions of existing pessimistic offline RL or static opportunistic approaches. We establish finite-sample regret bounds for both policies, recovering the standard rate when the optimal price is covered and quantifying the additional cost when it is not. We also develop efficient algorithms and show, through simulations and an airline-ticket application, that our methods outperform standard offline RL baselines in no-coverage settings. Managerially, the framework provides a practical mapping from a firm’s risk posture to its pricing policy: firms seeking revenue stability and downside protection should prefer the pessimistic policy, whereas firms willing to bear measured risk for potential gains from under-explored prices should prefer the opportunistic policy.
\end{abstract}

\noindent\textbf{Keywords:} Offline dynamic pricing; Partial identification; Opportunism; Pessimism; Nonparametric demand model

\section{Introduction} \label{sec:intro}
Dynamic pricing \citep{gallego1994optimal,den2015dynamic} studies how firms adjust prices over time in response to demand and remaining inventory. Although the literature is vast, it focuses primarily on \textit{online} settings, where firms continue to experiment while making decisions. However, in many applications, such experimentation is costly, time-consuming, or infeasible. Poor prices can generate immediate revenue losses, distort demand, and impose high operational costs. Firms therefore often need to rely on historical transaction data when designing pricing policies. This makes offline dynamic pricing, which learns a pricing strategy from pre-collected data without further interaction, both practically important and fundamentally distinct from online learning \citep{levine2020offline}. 

\ZB{
Offline dynamic pricing is substantially more challenging than offline static pricing, recently studied by, for example, \citet{qi2022offline,bu2023offline}. In a static setting, the firm chooses a single price based on its immediate effect on revenue. In a dynamic problem, by contrast, current price affects not only current sales but also future states through inventory depletion, the remaining selling horizon, and future payoff. The object of interest is therefore not a one-shot revenue function, but a sequential decision problem. This dynamic structure makes the offline problem especially difficult: limited information about a price in one state can distort value estimation in later states and, in turn, the optimal policy itself.

Existing offline RL methods typically require the historical data to provide sufficient support and coverage over actions or prices \citep{antos2008learning,munos2008finite,farahmand2010error,yu2020mopo}. However, this requirement is often violated in practice, since historical pricing data are typically generated from a limited set of prices chosen for operational or business reasons, rather than through deliberate exploration of the full price space.

In this paper, we study offline pricing precisely in such no-coverage settings, where some candidate prices, including the optimal one, are entirely absent from the data. This lack of coverage poses a fundamental challenge and warrants special attention. When coverage fails, standard offline RL methods can yield suboptimal pricing decisions and, consequently, lower revenues. A direct application of existing offline RL algorithms is therefore inadequate for the problem we study.}

The no-coverage setting creates a serious challenge for both practice and theory. In practice, standard offline methods may disregard unobserved prices even when those prices are in fact optimal. In theory, once some prices are missing from the data, demand and Q-functions are no longer point identified, and the optimal dynamic pricing policy may fail to be identified as well. Recent pessimistic offline RL methods \citep{yu2020mopo,buckman2020importance,jin2021pessimism} relax full coverage by requiring only that the actions chosen by the optimal policy be represented in the offline data. In pricing applications, however, even this weaker condition is untestable ex ante and may easily be violated. Motivated by this gap, we study offline dynamic pricing \textit{without any coverage assumption}. Our goal is to develop a framework that remains valid in the empirically relevant setting where historical data are informative, but inherently incomplete.

Our approach exploits a fundamental economic regularity in pricing: all else equal, demand is nonincreasing in price. This monotonicity allows us to partially identify the demand function, and hence revenue at prices that are never observed in the historical data. Rather than extrapolating from a parametric demand specification, we construct interval bounds for the value of unobserved prices using neighboring observed prices. The decision problem is then no longer to maximize a point estimate of revenue, but to choose among prices whose values are only partially identified. This partial-identification perspective enables policy learning beyond the support of the observed data while maintaining a fully nonparametric demand model.

Within this framework, we develop two decision rules: a pessimistic rule and an opportunistic rule. The pessimistic policy chooses the price that maximizes worst-case revenue, offering protection against downside risk when uncertainty about unobserved prices is substantial. The opportunistic policy instead chooses the price that minimizes worst-case regret relative to the best feasible alternative, allowing the firm to pursue upside when a purely pessimistic rule would be too conservative.
These rules are not direct extensions of existing approaches. Prior pessimistic methods \citep{yu2020mopo,buckman2020importance,jin2021pessimism,uehara2022cppo} are typically developed under at least partial coverage, whereas prior opportunistic rules \citep{savage1951theory,manski2007minimax,stoye2009minimax} are mainly designed for static problems, where regret is defined relative to a one-period benchmark and uncertainty does not propagate dynamically. In our setting, by contrast, some prices may be entirely absent from the offline data, rendering both current and continuation revenues only partially identified. This uncertainty then propagates through the Bellman recursion. We therefore formulate dynamic versions of pessimism and opportunism tailored to offline pricing without coverage. We show that both policies admit strong theoretical guarantees in this challenging setting. Their formal definitions and analysis are developed in Sections~\ref{sec:pess2} and \ref{sec:mm}.

\ZB{\textbf{\textit{Managerial insights}}.
Our framework delivers a practical mapping from a firm’s risk posture to its pricing policy when historical data provide incomplete price coverage. Firms seeking revenue stability and protection against downside risk should prefer the pessimistic policy, whereas firms willing to bear measured risk for potential upside from under-explored prices should prefer the opportunistic policy. Beyond policy selection, the framework has two broader managerial uses. First, the estimated offline policy can serve as a warm start for subsequent online learning, reducing costly and risky experimentation at deployment. Second, it can serve as a benchmark for evaluating pricing rules already used in practice. By comparing the estimated offline policy with an incumbent heuristic, a firm can gauge whether the heuristic is likely to be suboptimal, and by how much, thereby informing whether to adopt a new policy or invest in additional data collection and model refinement.}

\subsection{Main Results and Contributions}

Offline dynamic pricing has received limited attention in the literature, despite its practical importance. Existing offline reinforcement learning methods, including standard dynamic programming \citep{bellman1957markovian} and recent pessimistic approaches \citep{yu2020mopo,buckman2020importance,jin2021pessimism}, typically rely on some form of coverage in the historical data. In pricing applications, however, such assumptions are often implausible because many candidate prices are only weakly observed or entirely unobserved. To the best of our knowledge, this is the first paper that develops a statistically sound offline dynamic pricing technique without data coverage assumption.

We make three major contributions. First, we develop an offline dynamic-pricing method that remains valid without any coverage assumption. Our approach leverages monotonicity in demand to construct a partial-identification framework for policy learning when some prices are missing from the offline data. This yields two decision rules, pessimistic and opportunistic, that are well defined even when the optimal price is not observed. To achieve this, we introduce a partial
identification framework for conducting pessimistic and opportunistic policy learning in the presence
of limited observed prices in offline data.  Importantly, our framework imposes no parametric restrictions on demand, the policy class, or the Q-function class.  In this sense, it provides a grounded solution to a central limitation of existing offline RL methods in pricing environments.

Second, we establish finite-sample regret bounds for both the pessimistic and opportunistic policies. The bounds decompose into two terms: one capturing estimation error for demand at observed prices, and the other capturing the additional cost induced by potentially unobserved optimal prices. When the optimal price is covered by the offline data, the second term vanishes and we recover the standard regret rate of $O_p(\sqrt{\log N / N})$, where 
$N$ is the number of trajectories. When coverage fails, our results quantify the precise cost of learning under partial identification; moreover, this cost is tight in the sense that it is attainable. More broadly, our analysis brings partial identification and opportunistic decision rules into a dynamic pricing problem, whereas most prior work in this vein has focused on static decision settings. Furthermore, our novel regret analysis for the opportunistic method distinguishes it from the existing literature, which focuses on static settings without dynamics, see, e.g., \citet{manski2007minimax,stoye2012minimax,Cui2021Individualized,christensen2022optimal,masten2023minimax,kido2023locally}. 


Third, we develop efficient algorithms for both approaches and evaluate them in simulation and an airline-ticket application. The results show that our methods outperform existing offline RL baselines in no-coverage settings. Taken together, the paper connects offline dynamic pricing to partial identification in a dynamic environment and provides both methodological and practical tools for pricing with incomplete historical data.

\subsection{Related Literature}

This work relates to three strands of literature: offline reinforcement learning, dynamic pricing, and partial identification.

\noindent \textbf{\textit{Offline RL}}.
RL studies sequential decision-making problems in which the goal is to maximize long-run cumulative rewards \citep{sutton2018}.
Recent work has shown the promise of RL in a range of domains,
including dynamic pricing \citep{den2015dynamic}, artificial intelligence used in gaming \citep{silver2016mastering}, healthcare \citep{liao2022batch,shi2023value,bian2025off}, dynamic treatment regimens \citep{Murphy,gest,wallace2015doubly,shi2018high,bian2023variable}, and robotics \citep{levine2020offline}. 

In offline RL, however, the learner must rely on historical data collected under a fixed behavior policy, without further interaction with the environment, often resulting in datasets that do not fully cover all possible action-state pairs. Classical offline RL methods address this difficulty by imposing a full coverage assumption, requiring the historical data to sufficiently represent all candidate actions \citep{antos2008learning,munos2008finite,farahmand2010error}.
 More recent studies \citep{fujimoto2019off,yu2020mopo,kumar2020conservative, buckman2020importance,jin2021pessimism, uehara2022cppo} show that by incorporating the pessimistic principle, the full coverage assumption can be relaxed to the weaker requirement that the actions chosen by the optimal policy are covered. \citet{chen2023steel} go further and eliminate explicit coverage conditions by exploiting Lebesgue’s decomposition theorem, but their approach still relies on structural assumptions on the Q-function that justify extrapolation.

Our paper differs from this literature in two ways. First, we study a pricing environment in which the optimal price may be entirely unobserved in the data. Second, unlike \citet{jin2021pessimism} and \citet{chen2023steel}, we impose no parametric restrictions on demand, the policy class, or the Q-function class. Our pessimistic policy is therefore not a direct application of existing pessimistic RL methods: it is designed to account explicitly for unobserved prices by exploiting demand monotonicity. As we show in Section \ref{sec:sims}, state-of-the-art RL methods, including the pessimistic approach \citep{jin2021pessimism}, Conservative Q-Learning \citep{kumar2020conservative}, and Batch-Constrained Q-learning (\citep{fujimoto2019off}), perform suboptimally in this setting and are outperformed by our proposed methods.

\noindent \textbf{\textit{Dynamic pricing}}. Existing literature on dynamic pricing mainly focuses on the online setting \citep{broder2012dynamic,keskin2014dynamic,javanmard2019dynamic,nambiar2019dynamic,ban2021personalized,ma2021dynamic,elmachtoub2021power,bastani2022meta,luo2024distribution,liu2025fairness}. For a comprehensive review, see \citet{bitran2003overview,den2015dynamic}, and the references therein. By contrast, work on offline dynamic decision-making is much more limited. \citet{levi2007provably} and \citet{ban2020confidence}
study offline inventory control, whereas our focus is offline dynamic pricing.

\ZB{Among the papers closest to ours,
\citet{qi2022offline} study offline pricing with censored demand from a causal inference perspective and obtain identification
by imposing a full-coverage assumption. 
\citet{bu2023offline} study static offline pricing with censored demand under a linear model and emphasize identification through a distributionally robust optimization framework. 
Our paper differs from both lines of work, as they require a coverage assumption. In contrast, we study a more challenging and empirically relevant setting in which no coverage assumption is imposed. Moreover, we study a dynamic pricing problem rather than a static one, allow candidate prices to be entirely unobserved in the historical data, and impose no parametric structure on demand. In our setting, limited price coverage, rather than censoring, is the central challenge. We address this through a partial-identification framework that exploits demand monotonicity to bound values at unobserved prices and to construct pessimistic and opportunistic pricing policies with regret guarantees. 
}

\noindent \textit{\textbf{Partial identification}}.
Our paper is also related to the literature on partial identification in statistics and economics  \citep[e.g.,][]{manski2007minimax,stoye2012minimax,Cui2021Individualized,christensen2022optimal,masten2023minimax,kido2023locally}.
This literature differs from our work in two important respects. First, it focuses almost exclusively on static decision problems, whereas we study a dynamic setting in which uncertainty propagates through the Bellman recursion. Our regret analysis for the opportunistic policy is new in this sequential environment. Second, the source of non-identification is different. Prior work often considers non-identification arising from missing counterfactual outcomes or unmeasured confounding \citep[see, e.g.,][]{Cui2021Individualized}. In our setting, non-identification arises because some prices are unobserved in the offline data. Our paper therefore connects partial identification to offline dynamic pricing in a sequential, no-coverage environment.

The rest of this paper is organized as follows. In Section \ref{sec:model}, we introduce the offline dynamic pricing model, and discuss the limitations of the existing standard dynamic programming and  pessimistic approaches.
In Section \ref{sec:partial}, we describe how to frame our pricing problem as a partial identification task. In Section \ref{sec:algorithms}, we present our proposed pessimistic and opportunistic strategies; we also establish the regret guarantees for both methods in this section. We conduct numerical studies and a real-data analysis of airline ticket purchases in Section~\ref{sec:sims} to demonstrate the proposed methodologies. Finally, we conclude with a discussion in Section~\ref{sec:discussion}.

\section{Models and  Assumptions} \label{sec:model}

We begin by providing an overview of dynamic pricing and our model formulation in Section \ref{sec: background}. Then we briefly introduce two standard baseline methods for estimating the optimal policy and discuss their limitations in Section \ref{sec:standard}.

\subsection{Background} \label{sec: background}

Consider a decision process $\{ (X_{t}, A_{t}, D_{t}): 1 \le t \le T \}$ with $T$ time points. At each period $t$, $X_{t} \in \mathcal{X} \equiv \{0, 1, 2, 3, \dots, L\}$ denotes the inventory level, where $L$ is a fixed positive integer, $A_{t} \in \mathcal{A} \equiv \{a_1, a_2, \dots, a_K\}$ is the price, and $D_{t}$ represents the demand. A {\textit{policy}} prescribes a pricing strategy that defines the action $A_t$ an agent takes based on the history information $X_t \cup \{ (X_{k}, A_{k}, D_{k}): 1 \le k \le t-1 \} $, and it is a function that maps the history to a probability distribution over prices (actions). Denote $\pi_t$ as any feasible policy at time point $t$, and let $\pi = \{\pi_t\}_{1\leq t\leq T}$ denote a sequence of policies from time $1$ to $T$. The aim of dynamic pricing is to find an optimal policy denoted as $\pi^* \equiv \{\pi^*_t\}_{1\leq t\leq T}$ such that
\begin{gather*}
        \pi^* \in \argmax_{\pi }\E^{\pi}\left[\sum_{t=1}^TR_t\right],
    \end{gather*} where $R_t=\min(D_{t},X_{t})\times A_{t}$ is the reward at time $t$. Here, $\E^\pi$ denotes the expectation under the assumption that the system dynamics follow the policy $\pi$. At each time step $t$, the action $A_t$ is drawn from the policy $\pi_t$, with the transition dynamics governed by the probability distribution $\Pr(X_{t+1}, R_{t}|A_t, X_t, \{X_j,A_j,R_j\}_{1\le j<t})$.

Throughout the paper, we focus on the offline finite horizon RL setting, in which the data consists of $N$ independent trajectories, each with $T$ time periods, and can be summarized as  $\mathcal{D}_N = \{ (X_{i,t}, A_{i,t}, D_{i,t}): 1\le i\le N, 1\le t\le T\}$. Denote the policy used to generate the offline data as $\pi^b$, also known as the behavior policy, i.e., $A_{i,t} \sim \pi^b_t$, which could be history-dependent. Our goal is to leverage the pre-collected offline data $\mathcal D_N$ to estimate the optimal pricing strategy $\pi^\ast$. Next, we impose a modeling assumption on the decision process $\{ (X_{t}, A_{t}, D_{t}): 1 \le t \le T \}$.

\begin{assumption}[Demand Model] \label{assumption:poisson}
    (i) Given the price $A_t$, the demand $D_t$ is independent of $X_t$ and past history $ \{ (X_{k}, A_{k}, D_{k}): 1 \le k \le t-1 \} $. \\
    (ii) The inventory level at $t+1$ is $X_{t+1}=X_{t}-\min(X_{t},D_{t})$, for all $t$, i.e., there is no inventory replenishment.
\end{assumption}

Given Assumption \ref{assumption:poisson}, the Markov assumption \citep{puterman2014markov, sutton2018} automatically holds in our setting. Specifically, for all $t$, \begin{eqnarray*}
    \Pr(X_{t+1}, R_{t}|A_t, X_t, \{X_j,A_j,R_j\}_{1\le j<t})=\mathbb P_t(X_{t+1}, R_{t}|A_{t}, X_{t}),
\end{eqnarray*} where $\mathbb P_t(\cdot, \cdot\;|\cdot,\;\cdot)$
denotes the transition probability function at time $t$. The Markov assumption \citep{puterman2014markov, sutton2018} is frequently imposed in the RL literature, which is the foundation of many state-of-the-art algorithms \citep{fujimoto2019off,levine2020offline,kumar2020conservative}.

By the Markov property, it suffices to focus on finding an optimal Markov policy $\pi^\ast$ that assigns the price based only on 
$X_t$, the current inventory level at time $t$ \citep{puterman2014markov}. To obtain $\pi^\ast$, a popular approach is to compute the optimal Q-function. The optimal Q-function (also called action-value function) with inventory level $x$
and price $a$ at time $t$ is defined as 
$$Q_t^*(x,a)\equiv \max_\pi \E^{\pi}\left[\sum_{k=t}^T R_k|X_t=x,A_t=a\right].
$$ 
In addition, the optimal value function at time $t$ with an inventory level $X_t = x$ is defined as
$$V_t^*(x) \equiv \, \max_{a \in \mathcal A} Q_t^*(x,a).$$ Then under Assumption \ref{assumption:poisson}, an optimal policy $\pi^\ast$ at time $t$ can be obtained via \begin{gather*} 
        \pi_t^*(a|x) = \begin{cases} 
1, & \text{if } a = a^*_t(x), \, \text{where} \, a^*_t(x)= \argmax_{a\in \mathcal{ A}} \, Q_t^*(x,a) \\
0, & \text{otherwise},
\end{cases}
\end{gather*} 
which is \textbf{\textit{deterministic}} and \textbf{\textit{greedy}} \citep{puterman2014markov} with respect to the corresponding optimal Q-function. For simplicity, here we assume the uniqueness of maximizing the optimal Q-function and thus focus on the deterministic policy class.

\ZB{\begin{remark}
    The uniqueness assumption is made mainly for technical simplicity. If the optimal Q-function is not unique, our results can still be interpreted with respect to a deterministic policy defined via $\operatorname{sargmax}$, i.e., choosing the smallest action among all maximizers of the Q-function. Formally, the optimal policy can be written as
$
\pi^{\mathrm{opt}}(a|x)=
\begin{cases}
1, & \text{if } a = \operatorname{sargmax}_{a' \in \mathcal A} Q^{\mathrm{opt}}(x,a'),\\[4pt]
0, & \text{otherwise}.
\end{cases}
$
Such a rule preserves optimality while yielding a single deterministic decision for each state. In our pricing setting, this refinement is economically meaningful: among revenue-equivalent prices, selecting the lower price typically leads to a higher conversion rate. A larger customer base is valuable for long-term brand loyalty and data collection.
\end{remark}
}
    
To compute $Q^\ast_t$, existing model-free RL algorithms often rely on the Bellman optimality equation given below:
\begin{gather} \label{eq: bellman}
        Q_t^*(x,a)= \E \left[  R_t+ V_{t+1}^*(X_{t+1})  |X_t=x,A_t=a \right],
\end{gather}
which can be regarded as a nonlinear conditional moment restriction \citep[e.g.,][]{chen2012estimation}. Moreover, for notation convenience, we define the Bellman operator $\B_t$  at time $t$ as  \begin{gather*}
    (\B_t g)(x,a)=\E \left[  R_t+ g(X_{t+1})   |X_t=x,A_t=a \right],
\end{gather*} 
for any measurable function $g: \mathcal X \rightarrow \mathbb{R}$, where $\E$ is taken with respect to the randomness in the immediate reward and the next state.
Thus, the Bellman optimality equation can be represented in a compact form as  $Q_t^*(x,a)=(\B_t V^*_{t+1})(x,a),$ for $t \geq 1$. 

To conclude this subsection, we now provide the Bellman optimality equation for our specific pricing problem. Throughout the paper, we assume the following boundary condition, which states that $V^\ast_{T+1}(x)=0$ for any $x$ and $Q^\ast_t(0,a)$ is $0$ for any $a$. The following proposition introduces the basic building block of our estimation procedure. Its proof can be found in Section \ref{appendix:prop} of the Appendix. 

\ZB{\begin{prop}\label{prop: bellman}
    Under Assumption \ref{assumption:poisson}, Equation \eqref{eq: bellman} becomes 
    \begin{align*}
        Q_t^*(x,a)=\underbrace{a\left[x-\sum_{d=0}^{x-1}F_t(d|a)\right]}_{\mbox{expected immediate reward}} +\underbrace{\sum_{d=0}^{x-1} \left[V_{t+1}^*(x-d)-V_{t+1}^*(x-d-1)\right] F_t(d|a)}_{\mbox{expected future cumulative reward}},
    \end{align*} where $F_t(d|a)\equiv \Pr(D_t \leq d |A=a)$ is the conditional cumulative distribution function (CDF) of  demand at time $t$ given price $a$. This expression holds for all $t$, $x,$ and $a$.
\end{prop}
}

In the following subsection, we discuss two widely used methods for estimating $Q^\ast_t$ and $\pi^\ast_t$.

\subsection{Baseline Methods: Greedy and Pessimistic Approaches
} \label{sec:standard}

In the following, we briefly introduce two frequently employed standard baseline methods for estimating the optimal policy and discuss their respective limitations.

\ZB{
\textbf{\textit{Greedy approach.}} Recall that the Q-function at each time point can be obtained using the Bellman equation presented in Proposition \ref{prop: bellman}. To estimate $\mathcal B_t$, it is sufficient to estimate the CDF $F_t(d|a)$ given as below:
$$\widehat F_t(d|a)=\frac{1}{N_t(a)}\sum_{i=1}^N \mathds 1(D_{i,t}\leq d, A_{i,t}=a),$$ for all $a \in \mathcal{A}_t^{\mathcal D_N}$, where $\mathcal{A}_t^{\mathcal D_N}=\{a\in \mathcal{A}: N_t(a)>0\}$ denotes the set of prices observed in the offline data at time $t$, and $N_t(a)=\sum_{i=1}^N \mathds{1}(A_{i,t}=a)$. Then one could plug $\widehat F_t(d|a)$ into the Bellman equation in Proposition \ref{prop: bellman} to sequentially estimate the optimal policy in a backward manner from $t = T$ to $t = 1$. Let $\widehat V^{\mathbf{gree}}_t(x)$ and $\widehat Q_t^{\mathbf{gree}}(x,a)$ represent the estimated value function and Q-function at each decision point $t$, respectively, under the estimated greedy policy. The estimated Q-function at time $t$ is obtained by applying the estimated Bellman operator $\widehat \B_t$ to the estimated value function at time $t+1$, i.e.,
 $\widehat Q_t^{\mathbf{gree}}(x,a)=(\widehat \B_t \widehat V_{t+1}^{\mathbf{gree}})(x,a)$. 
The estimated policy is then derived by selecting the greedy action with respect to the estimated Q-function. We summarize the greedy strategy in the following Algorithm \ref{alg:greedy}. }

\begin{algorithm}[t]
	\caption{Pseudocodes for Estimating 
  $\widehat\pi^{\mathbf{gree}}$: A Greedy Approach}\label{alg:greedy}
	\begin{algorithmic}[1]
		\small
		\Function{}{$ \mathcal D_N$}
 \State Compute $\widehat F_t(d|a)$ for all $t$, $d\leq L$, and $a \in \mathcal{A}_t^{\mathcal D_N}$.
  
        \State Set $\widehat V_{T+1}^{\mathbf{gree}} \gets 0$.
        
    \For {$t=T, T-1, \dots , 1$}

        \State $\widehat Q_{t}^{\mathbf{gree}}(x,a)=\begin{cases}
			0, & \text{if $x=0$}\\
   (\widehat\B_{t} \widehat V_{t+1}^{\mathbf{gree}}) (x,a),& \text{otherwise}
		 \end{cases}$.
        
        \State Compute $\widehat a_{t}^{\mathbf{gree}}(x)=\argmax_{a \in \mathcal{A}_t^{\mathcal D_N}} \widehat Q_{t}^{\mathbf{gree}}(x,a)$, and $\widehat V_{t}^{\mathbf{gree}}(x)= \widehat Q_{t}^{\mathbf{gree}}(x,\widehat a_{t}^{\mathbf{gree}}(x))$.

        \State Compute $\widehat \pi_t^{\mathbf{gree}}(a|x) = \begin{cases} 
1, & \text{if } a = \widehat a_{t}^{\mathbf{gree}}(x), \\
0, & \text{otherwise}. \end{cases}$
\EndFor

\State \Return $\widehat\pi^{\mathbf{gree}}=\{\widehat \pi^{\mathbf{gree}}_t \}_{1\leq t\leq T}$.
		\EndFunction
\end{algorithmic}

\end{algorithm}



\textbf{\textit{Vanilla pessimistic approach}}.
Similarly to the greedy approach, the pessimistic approach recursively applies the estimated Bellman operator to the estimated value function to compute the estimated Q-function, from $t =T$ to $t=1$.  For all $t$, denote $\widehat Q_{t}^{\mathbf{vp}}(x,a)$ and $\widehat V_{t}^{\mathbf{vp}}(x)$ as the estimated pessimistic Q-function and value function, respectively. Here, ``$\mathbf{vp}$'' stands for the \textbf{\textit{vanilla pessimistic}} approach.
However, instead of directly maximizing the estimated Q-function with respect to the action, as in the greedy approach, the pessimistic strategy maximizes a penalized version of the estimated Q-function: \begin{gather*}
    \widehat a^{\mathbf{vp}}_t(x)= \argmax_{a\in \mathcal{ A}_t^{\mathcal D_N}} \widehat Q_{t}^{\mathbf{vp}}(x,a),  \quad 
    \mbox{ where } \widehat Q_{t}^{\mathbf{vp}}(x,a)= (\widehat\B_{t} \widehat V_{t+1}^{\mathbf{vp}}) (x,a)-\delta_t(a).
\end{gather*} Here, the term $\delta_t(a)$ characterizes the uncertainty  of estimating the Q-function corresponding to action $a$,
and its specific form will be introduced later.
 Greater uncertainty associated with action $a$  yields  a higher value of $\delta_t(a)$, resulting in a larger penalization, and making the action less favorable for decision-making. The intuition of pessimistic approach is to discourage actions that are less explored in the offline dataset, preventing over-exploration.  We summarize the detailed estimation procedure for the vanilla pessimistic method in Algorithm \ref{alg:babypess}.

\begin{algorithm}[H]
	\caption{Pseudocodes for Estimating the Vanilla Pessimistic Policy
  $\widehat\pi^{\mathbf{vp}}$.}\label{alg:babypess}
	\begin{algorithmic}[1]
		\small
            
		\Function{}{$ \mathcal{ D}_N$}

  \State Compute $\widehat F_t(d|a)$ and $\delta_t(a)$, for all $t$, $d\leq L$, and $a \in \mathcal{A}_t^{\mathcal D_N}$.
    
        \State Set $\widehat V_{T+1}^{\mathbf{vp}} \gets 0$.

        \For {$t=T, T-1, \dots , 1$}

        \State $\widehat Q_{t}^{\mathbf{vp}}(x,a)=\begin{cases}
			0, & \text{if $x=0$}\\
   (\widehat\B_{t} \widehat V_{t+1}^{\mathbf{vp}}) (x,a)-\delta_t(a),& \text{otherwise}
		 \end{cases}$.
        
        \State Compute $\widehat a_{t}^{\mathbf{vp}}(x)=\argmax_{a \in \mathcal{A}^{\mathcal{ D}_N}_t} \widehat Q_{t}^{\mathbf{vp}}(x,a)$, and $\widehat V_{t}^{\mathbf{vp}}(x)= \widehat Q_{t}^{\mathbf{vp}}(x,\widehat a_{t}^{\mathbf{vp}}(x))$.

        \State Compute $\widehat \pi_t^{\mathbf{vp}}(a|x) = \begin{cases} 
1, & \text{if } a = \widehat a_{t}^{\mathbf{vp}}(x), \\
0, & \text{otherwise}. \end{cases}$
\EndFor

\State \Return $\widehat\pi^{\mathbf{vp}}=\{\widehat \pi^{\mathbf{vp}}_t \}_{1\leq t\leq T}$.

		\EndFunction
\end{algorithmic}

\end{algorithm}

\subsection{Limitations of Baseline Methods
}\label{sec:limitation}

Before discussing the limitations of the two baseline methods, we first introduce several key concepts: the regret of a policy, the full coverage assumption, and the partial coverage assumption.

To assess the quality of an estimated policy $\widehat \pi$,  we  use the regret $\mu^{\widehat\pi}$ defined as \begin{align*}
    \mu^{\widehat\pi} \equiv \E\left[V_1^*(X_1)-V_1^{\widehat \pi}(X_1)\right],
\end{align*} where the probability measure for $X_1$ is the same as our offline data distribution. By definition, the regret is always non-negative and a smaller regret indicates  better performance for policy $\widehat \pi$.

\textbf{\textit{Full coverage}}.
Denote $\prob_t^{\pi}(x,a)$ as the marginal distribution of the inventory level and price observed at time $t$, following the policy $\pi$,
and $\prob_t^{\pi^b}(x,a)$ is similarly defined for the behavior policy. The full coverage assumption holds if there exists a universal constant $C$, such that 
\begin{gather*} 
   \sup_{\pi}\max_t \max_{x,a} \frac{\prob_t^{\pi}(x,a)}{\prob_t^{\pi^b}(x,a)} \leq C.
\end{gather*}

This full coverage assumption, also known as the concentrability assumption \citep{antos2008learning,munos2008finite,farahmand2010error}, requires that the offline data-generating process sufficiently covers the whole range of prices that will be deployed by the company.

\textbf{\textit{Partial coverage}}. The partial coverage holds if there exists a universal constant $C$, such that
\begin{gather} \label{eq:partial}
  \max_t \max_{x,a}  \frac{\prob_t^{\pi^*}(x,a)}{\prob_t^{\pi^b}(x,a)} \leq C. 
\end{gather}

In contrast to the full coverage assumption, the partial coverage assumption only requires the probability ratio between the optimal policy and the behavior policy to be upper bounded, rather than for any policy $\pi$. Therefore, the partial coverage assumption is much weaker.

Under the full coverage assumption and Assumption \ref{assumption:poisson} in our pricing setting, it can be shown that the regret for the greedy approach outlined in Algorithm \ref{alg:greedy}  is roughly upper bounded by \begin{gather*}
 \sup_{\pi \in \Pi} \E^{\pi^b}\left[ \sum_{t=1}^T \frac{\prob_t^{\pi}(X_t,A_t)}{\prob_t^{\pi^b}(X_t,A_t)} \sqrt{\log N /N_t(A_t)} \right]+\E^{\pi^b}\left[ \sum_{t=1}^T \frac{\prob_t^{\pi^*}(X_t, A_t)}{\prob_t^{\pi^b}(X_t,A_t)} \sqrt{\log N /N_t(A_t)} \right],
\end{gather*}  where $\Pi=\{\pi: \forall x, a \in \mathcal{A}, \;   \prob_t^{\pi}(x,a)>0 \implies \prob_t^{\pi^b}(x,a)>0 \}$.
Thus, the regret of the greedy method is of order $O_p(\sqrt{\log N/N})$  and vanishes asymptotically. Nonetheless, this full coverage assumption is  unlikely to hold for the offline data in pricing problem, since the seller would not set an unreasonable price whose expected revenues are obviously suboptimal.

To relax the stringent full coverage assumption, researchers have recently employed the pessimistic strategy \citep{yu2020mopo,buckman2020importance,jin2021pessimism}. It can be shown that under Assumption \ref{assumption:poisson} together with the partial coverage assumption, the regret of the pessimistic approach outlined in Algorithm \ref{alg:babypess} is upper bounded by \begin{gather*}
\E^{\pi^b}\left[ \sum_{t=1}^T \frac{\prob_t^{\pi^*}(X_t, A_t)}{\prob_t^{\pi^b}(X_t,A_t)} \sqrt{\log N /N_t(A_t)} \right],
\end{gather*} up to a constant. Therefore, the regret is again of order $O_p(\sqrt{\log N/N})$  and vanishes asymptotically. Although the pessimistic approach relaxes the full coverage assumption required by the greedy approach to the partial coverage assumption, such an assumption can still be easily violated because it is rarely the case that the company implemented the optimal pricing strategy in the past/offline data. Indeed, if a specific price $a$ is missing, then $\widehat F_t(d|a)$ as well as $\delta_t(a)$ cannot be constructed, and both greedy and pessimistic methods would not take the action $a$ into account, leading to sub-optimal decision-making when the unobserved price $a$ coincides with the optimal pricing.

In the next section, we address such a challenge using a partial identification framework by leveraging the monotonic effect of the price on demand.

\section{A Partial Identification Framework} \label{sec:partial}


In this section, we examine the challenging scenario in which no coverage assumption is imposed; that is, we allow any price, including the optimal one, to be unobserved in the offline dataset. In this scenario, both
the standard greedy and pessimistic methods
may fail. To address this challenging setting, we consider a partial identification framework.

\subsection{Motivations}
By leveraging the inherent monotonicity property present in most pricing problems, we can construct a partial-identification bound for the demand model even if its associated price is unobserved. This bound enables us to seek an optimal pricing strategy beyond the observed prices in the offline data.

Intuitively, if a price is never observed in the historical data, the corresponding demand distribution cannot be point-identified. The monotonicity assumption still gives usable information: if higher prices do not increase demand, then the demand at an unobserved middle price must lie between the demands at nearby lower and higher observed prices. Therefore, instead of estimating a single value, we estimate an interval (a partial-identification bound). Policy learning is then performed over these intervals.  In particular, the mean demand $\E(D \mid a)$ at the unobserved price $a$ is bounded by the mean demands at the observed prices $a^{-}$ and $a^{+}$:
\[
\E(D \mid a) \in \left[\E(D \mid a^{+}),\, \E(D \mid a^{-})\right].
\]

Since demand is only partially identifiable for unobserved prices, expected revenue is also only partially identifiable: for each action, the decision maker has an interval, not a point estimate. The practical question is then no longer ``which action has the largest estimated value,'' but ``how should one choose when each action has a range of plausible values?''

To make this concrete, consider a two-armed bandit with actions $a=0$ and $a=1$, and let $Q_0$ and $Q_1$ denote their expected revenues. If either action, interpreted as a price, is never observed in the historical data, then the corresponding demand distribution cannot be identified, and consequently neither can its expected revenue. However, monotonicity and neighboring observed prices still allow us to build bounds $Q_a \in [Q_a^L, Q_a^U]$ for $a \in \{0,1\}$. Figure~\ref{fig:pess vs oppor} visualizes this interval-valued decision problem. Given Figure~\ref{fig:pess vs oppor}, two principled choices naturally arise. A conservative decision maker emphasizes downside protection and compares lower bounds (i.e., $Q_1^L$ versus $Q_0^L$). A less conservative decision maker emphasizes opportunity cost and compares worst-case regret (i.e., $Q_0^U - Q_1^L$ versus $Q_1^U - Q_0^L$) across actions. These correspond to the pessimistic and opportunistic criteria, respectively, which we discuss in detail later.

\begin{figure}[H]
    \centering
\includegraphics[height=6cm,
width=0.8\textwidth]{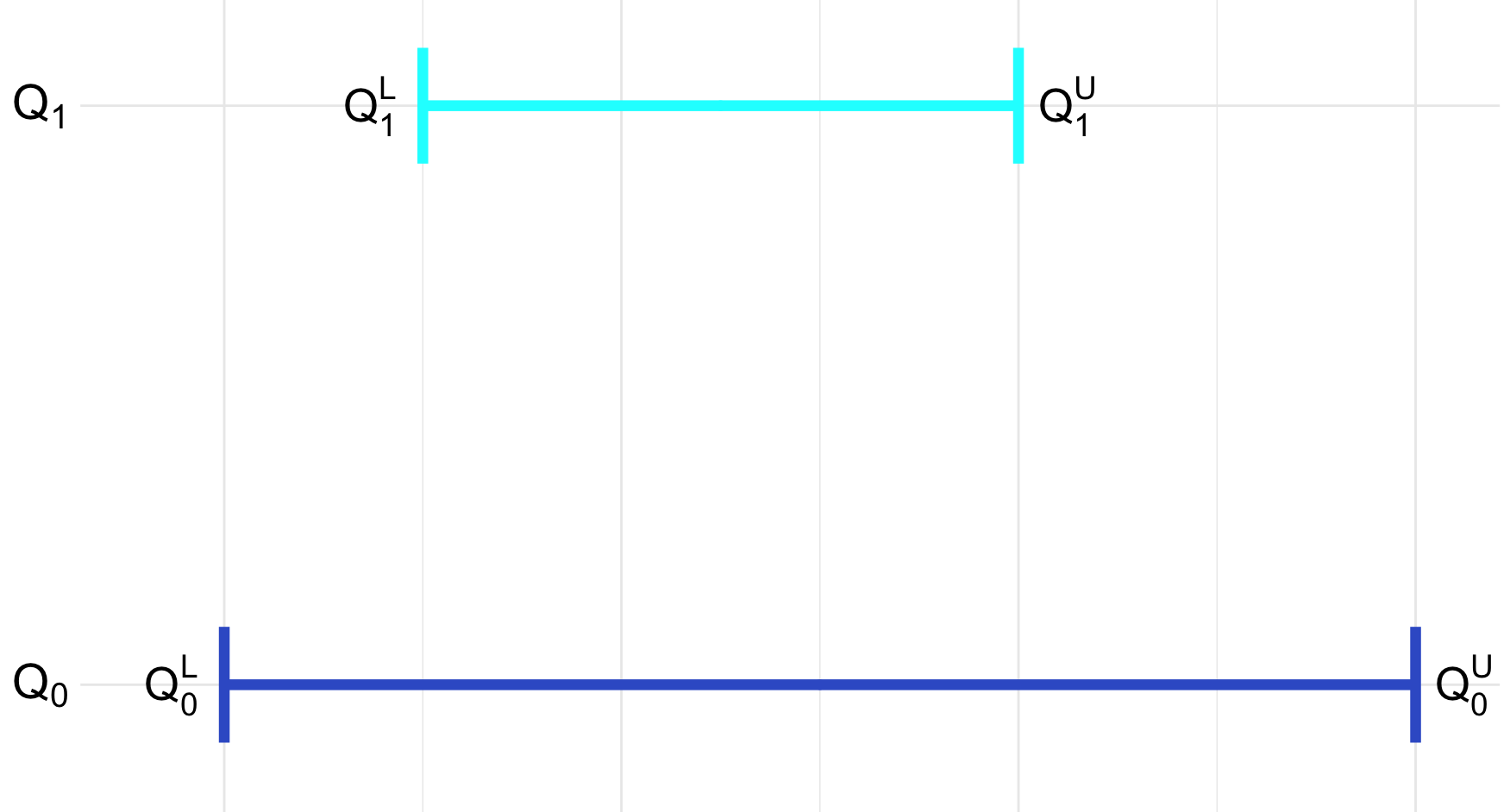}
    \caption{An illustrative example demonstrating the partial identification task, where only intervals of the revenue under the two prices can be obtained: $Q_a \in [Q_a^L,Q_a^U]$, for $a=0 \mbox{ and }1$. }
    \label{fig:pess vs oppor}
\end{figure}

\subsection{Partial Identification}

We begin by defining identifiability.
\begin{definition}[Identifiability]  \label{def:partial}
    A parameter $\lambda$ in a probabilistic model $\{\mathcal{P}_\lambda: \lambda \in \Lambda \}$ is said to be identifiable if the mapping $\lambda \mapsto \mathcal{P}_\lambda$ is injective. In other words, for any $\lambda_1 \neq \lambda_2$, we have $\mathcal{P}_{\lambda_1} \neq \mathcal{P}_{\lambda_2}$. 
\end{definition}

Identifiability ensures that if you observe data from the model, there is no ambiguity in determining which parameter $\lambda$ generated the data.  By Definition \ref{def:partial}, if a specific price $a$ is never observed at time $t$ (i.e., $\p^{\pi^b}_t(a) = 0$), then the conditional expectation $\E(D_t \mid A_t = a)$ is not well-defined and therefore cannot be identified. Consequently, the associated expected revenue is also unidentified. This lack of identifiability poses a fundamental challenge for learning the optimal policy. To address this, our new approach is based on the following monotonic assumption regarding the relationship between the price and the demand, which is natural for pricing problems. 

\ZB{\begin{assumption}[Monotonicity] \label{assumption:mono}  For any $t$, any $a < a^+$, and any $d$, the conditional CDF satisfies 
\[
    F_t(d \mid a) \;\leq\; F_t(d \mid a^+).
\]
\end{assumption}
Assumption \ref{assumption:mono} implies that a higher price corresponds to a higher CDF, which is plausible and testable in many business applications. For example, in retail markets, higher prices for consumer goods such as clothing and electronics often lead to reduced demand. We note that the monotonicity assumption imposed here is weaker than the commonly used  demand models in prior work, including linear demand model \citep{bu2023offline}, the binary purchasing model \citep[see, e.g.,][]{luo2024distribution,bian2026beyond}, as they directly imply monotonicity.

For an unobserved price $a$, the corresponding CDF of demand, $F_t(\cdot|a)$, is not identifiable by Definition \ref{def:partial}. Assumption \ref{assumption:mono} allows us to construct a partial-identification bound for $F_t(\cdot|a)$ by using other observed prices as proxies.
Specifically, at time $t$,
consider the scenario where the prices satisfy $a^+ > a > a^-$, with $\p^{\pi^b}_t(a) = 0$, and where $a^+$ and $a^-$ are in the set of observed prices $\mathcal{A}_t^{\pi^b}\equiv \{a:   \p^{\pi^b}_t(a) > 0\}$ at the population level. Under Assumption \ref{assumption:mono}, we have $ F_t(d|a^-) \leq F_t(d|a) \leq  F_t(d|a^+)$, which is a partial-identification bound because both the upper and lower bounds are identifiable. In addition, we have the following tightest (i.e., sharp) partial-identification bound for $F_t(d|a)$:
\begin{gather*}
   F_t\left( d|a_t^l \right) \leq F_t(d|a) \leq  F_t\left(d|a_t^u\right),  
   \mbox{ where } a_t^l=\max_{a' \in \mathcal{A}_t^{\mathcal{D}_N}: a'< a} a', \mbox{ and }a_t^u =\min_{a' \in \mathcal{A}_t^{\mathcal{D}_N}: a' > a} a' .
\end{gather*}  That is, at each time $t$, its tightest  lower and upper bounds for price $a$ are constructed utilizing the neighboring prices above and below $a$, in the set $\mathcal{A}_t^{\pi^b}$.  We next illustrate two methods of using the offline data to construct the estimated partial identified bounds.}

\begin{remark}
    So far, we have introduced three sets of prices: $\mathcal{A}$, $\mathcal{A}_t^{\mathcal{D}_N}$, and $\mathcal{A}_t^{\pi^b}$. It is important to distinguish them: $\mathcal{A}$ represents the set of all possible prices, regardless of their observability; $\mathcal{A}_t^{\mathcal{D}_N}$ denotes the set of observed prices at time $t$ in the offline data; and $\mathcal{A}_t^{\pi^b}$ indicates the set of observed prices at time $t$ under the offline data distribution following the behavior policy. Note that the two sets $\mathcal{A}_t^{\mathcal{D}_N}$ and $\mathcal{A}_t^{\pi^b}$ are asymptotically equivalent, but they may differ in the finite sample setting, due to the  sampling uncertainty. By applying the union bound, one can show that with probability at least $1-\kappa_t$, the set $\mathcal{A}_t^{\mathcal{D}_N}$ matches  $\mathcal{A}_t^{\pi^b}$, where $\kappa_t=\sum_{a \in \mathcal{A}_t^{\pi^b}}\left(1-\p_t^{\pi^b}(a)\right)^N$.
\end{remark}

\ZB{
	\textbf{\textit{A naive method}}. In practice, when $a$ is unobserved,
the sharp partial identification bound for $F_t(d|a)$ can be estimated using the closest available prices below and above $a$ in the set $\mathcal{A}_t^{\mathcal{D}_N}$: \begin{gather} 
     \notag \text{Estimated lower bound:} \ \, \widehat F_t\left(d|a^l_t \right)=\frac{1}{N_t(a_t^l)}\sum_{i=1}^N  \mathds 1\left(D_{i,t}\leq d, A_{i,t}=a_t^l \right),  \\
    \text{Estimated upper bound:}\ \, \widehat F_t\left(d|a^u_t\right)=\frac{1}{N_t(a_t^u)}\sum_{i=1}^N  \mathds 1\left(D_{i,t}\leq d, A_{i,t}=a_t^u\right).
 \label{eq:crude CI} 
\end{gather}  

Due to the sampling error, $\widehat F_t(d|a_t^l)$ is not necessarily less than or equal to $\widehat F_t(d|a_t^u)$. Furthermore, even if the interval is valid, there is no guarantee that $F_t(d|a)$ will fall within it. To resolve this, one can instead use the interval $\left[\widehat F_t^l(d|a_t^l),\widehat F_t^u(d|a_t^u)\right]$, 
where
\begin{gather*}
\widehat F_t^l(d|a_t^l)=\max\left[\widehat F_t(d|a_t^l)-\delta_t(a_t^l),\varepsilon \right], \widehat F_t^u(a_t^u)=\min\left[\widehat F_t(d|a_t^u)+\delta_t(a_t^u),1-\varepsilon\right],
\end{gather*} 
$\delta_t(a)=c\sqrt{\log N/N_t(a)}$, for some positive constant $c$, and $\varepsilon$ is a sufficiently small positive constant. This ensures that, with high probability, 
$F_t(d|a) \in \left[\widehat F_t^l(d|a_t^l),\widehat F_t^u(d|a_t^u)\right]$.
To see this, note that by Dvoretzky–Kiefer–Wolfowitz inequality, there exists some universal constant $c$ such that with probability at least $1-T/N$: \begin{gather*}
    \widehat F_t(d|a_t^l)-\delta_t(a_t^l)\leq  F_t(d|a_t^l) \leq F_t(d|a)\leq  F_t(d|a_t^u) \leq \widehat F_t(d|a_t^u)+\delta_t(a_t^u),
\end{gather*}  uniformly over $t$ and $d$.

However, the confidence bounds for $F_t(d|a)$ mentioned above are not the tightest possible. This is because there may exist prices $\widetilde{a} > a_t^u$ and $\bar{a} < a_t^l$ with smaller value of $\delta_t$ 
in estimating the corresponding CDFs, such that $\widehat F_t^u(d|\widetilde{a})\leq \widehat F_t^u(d|a_t^u)$ and $\widehat F_t^l(d|\bar{a})\geq \widehat F_t^l(d|a_t^l)$. Therefore, to account for the sampling errors, when $a$ is unobserved in the offline data, one should not just use the nearest prices below and above $a$ to construct the corresponding partial identification bound. Following the same reasoning, even if $a$ is observed from the data, $[\widehat  F_t^l(a),\widehat F_t^u(a)]$ itself might not necessarily be the shortest interval for $F_t(d|a)$. This motivates us to develop a refined approach given below. 

	\textbf{\textit{A refined interval}}. To deal with the issue mentioned above, we propose using the following interval $\left[\widehat F_t^L(d|a),\widehat F_t^U(d|a)\right]$ for $F_t(d|a)$, which is the tightest possible:  \begin{gather} \label{eq:CI}
\begin{split}
   \text{estimated lower bound:} \; \widehat F_t^L(d|a)= \begin{cases}
        \max_{a'< a}\widehat F_t^l(d|a'), & \mbox{ if } \left\{\widehat F_t^l(d|a')\right\}_{a'< a} \mbox{ is non-empty },\\
        \varepsilon,& \mbox{ otherwise}.
    \end{cases} \\
     \text{estimated upper bound:} \;  \widehat F_t^U(d|a)= \begin{cases}
        \min_{a'> a}\widehat F_t^u(d|a'), & \mbox{ if } \left\{\widehat F_t^u(d|a')\right\}_{a'> a} \mbox{ is non-empty },\\
        1-\varepsilon,& \mbox{ otherwise},
    \end{cases}
    \end{split}.
\end{gather}  In other words, the upper (lower) bound of $F_t(d|a)$ for price $a$ is calculated by taking the minimum (maximum) of the upper (lower) bounds for all demand CDFs whose prices are higher (lower) than $a$. Note that in the case that $\max_{a'< a}\widehat F_t^l(d|a')$ or $\min_{a'> a}\widehat F_t^u(d|a')$ is not well defined, i.e., there is no price from which we can borrow information to lower/upper bound $F_t(d|a)$, we use $\varepsilon$ and $1-\varepsilon$ to lower/upper bound $F_t(d|a)$. When $\left\{\widehat F_t^l(d|a')\right\}_{a'< a}$ or $\left\{\widehat F_t^u(d|a')\right\}_{a'> a}$ is non-empty, denote $\widehat a_t^U$ and $\widehat a_t^L$ as the prices used to construct the upper and lower bounds of $F_t(d|a)$, respectively, i.e., $\widehat F^U_t(d|a)= \widehat F_t(d|\widehat a_t^U) +\delta_t(\widehat a_t^U)$, and $\widehat F^L_t(d|a)= \widehat F_t(d|\widehat  a_t^L) -\delta_t(\widehat a_t^L)$. 

By similar arguments as before, we can show that for each $a \in \mathcal A$, $F_t(d|a)$ lies within $\left[\widehat F_t^L(d|a),\widehat F_t^U(d|a)\right]$ with probability at least $1-T/N$, uniformly over $t$ and $d$. Our remaining task is to derive the optimal policy under this partial identification framework. }




\section{Pessimistic and Opportunistic Strategies} \label{sec:algorithms}

In this section, we provide our solutions of finding $\pi^\ast$ under the newly proposed partial identification framework. Specifically, we introduce the pessimistic approach for conservatively estimating an optimal policy in Section \ref{sec:pess2} and the opportunistic approach for finding an optimal policy that minimizes the maximum regret in Section \ref{sec:mm}.

\subsection{A Refined Pessimistic Strategy} \label{sec:pess2}

\ZB{We first define the confidence set $\Omega_t(x,a)$ for the conditional CDF as 
\begin{equation} \label{eq:omega}
   \Omega_t(x,a)=  \Bigl\{
    F = (F^{(0)}, \ldots, F^{(x-1)}) :
    F^{(d)} \in  \left[\widehat F_t^L(d \mid a),  \widehat F_t^U(d \mid a)\right],~
    F^{(0)} \le \cdots \le F^{(x-1)},~ d \le x-1
    \Bigr\}.
\end{equation}
That is, for each coordinate $d$, the component $F^{(d)}$ lies within its corresponding
interval $[\widehat F_t^L(d \mid a),\, \widehat F_t^U(d \mid a)]$, while preserving
the monotonicity property of a valid CDF. 
Given this confidence set $\Omega_t(x,a)$ for the conditional CDF $F_t(\cdot \mid a)$,
we next introduce a refined pessimistic strategy for estimating the optimal policy. 

Specifically, for any stage $t$, inventory level $x$, price $a$, and 
$F = (F^{(0)}, F^{(1)}, \ldots, F^{(x-1)})$, 
let $\widehat Q_t(x,a;F)$ denote the estimated Q-function obtained by substituting 
$F_t(d \mid a)$ in the Bellman equation with a generic constant value $F^{(d)}$ 
for all $d \le x-1$. 
Formally,
\begin{align} \label{eq:Q-function at fixed F}
    \widehat Q_t(x,a;F)
    = a x 
    + \sum_{d=0}^{x-1}
    \bigl[
        \widehat V_{t+1}(x-d)
        - \widehat V_{t+1}(x-d-1)
        - a
    \bigr] F^{(d)},
\end{align}
for any estimated value function $\widehat V_{t+1}(\cdot)$.
When each $F^{(d)}$ equals the true CDF value $F_t(d \mid a)$, 
we recover $\widehat Q_t(x,a;F) = (\mathcal{B}_t \widehat V_{t+1})(x,a)$.
We also denote by $\mathcal{B}_t^F$ the modified Bellman operator defined as
$(\mathcal{B}_t^F \widehat V_{t+1})(x,a) = \widehat Q_t(x,a;F)$.

Recall that based on the pessimistic principle, at each time point, instead of directly maximizing the estimated Q-function, a lower bound of the estimated Q-function is maximized. Here, the way we construct the corresponding lower bound for our specific pricing problem is slightly different from the vanilla pessimistic approach. Specifically, at time point $t$, let $\widehat Q_t^{\mathbf{pess}}(x,a;F) = (\B_{t}^F \widehat V_{t+1}^{\mathbf{pess}}) (x,a)$ be the estimated Q-function following our pessimistic strategy after time $t$, with the demand CDF fixed at $F$. Here, the superscript $\mathbf{pess}$ indexes the policy, i.e., $Q_t^{\mathbf{pess}} \equiv Q_t^{\pi^{\mathbf{pess}}}$. We propose to use $\min_{F \in \Omega_t(x,a)} \widehat Q_t^{\mathbf{pess}}(x,a;F)$ as the lower bound for $Q^\ast_t(x, a)$. Note that the optimization problem $\min_{F \in \Omega_t(x,a)} \widehat Q_t^{\mathbf{pess}}(x,a;F)$ is a linear program, since both the constraint set \eqref{eq:omega} and the objective function \eqref{eq:Q-function at fixed F} are linear in $F$. Therefore, it can be efficiently solved using off-the-shelf optimization software.
Subsequently, the pessimistic action is given by \begin{gather*}
    \widehat a_{t}^{\mathbf{pess}}(x)=\argmax_{a\in \mathcal A}  \widehat Q_t^{\mathbf{pess}}(x,a), \mbox{ with } \widehat Q_t^{\mathbf{pess}}(x,a)=\min_{F \in \Omega_t(x,a)} \widehat Q_t^{\mathbf{pess}}(x,a;F).
\end{gather*}} Correspondingly, the pessimistic value function is estimated by $\widehat V_{t}^{\mathbf{pess}}(x)= \widehat Q_{t}^{\mathbf{pess}}(x,\widehat a_{t}^{\mathbf{pess}}(x))$. Then at time $t-1$, we have $\widehat Q_{t-1}^{\mathbf{pess}}(x,a)=\min_{F \in \Omega_t(x,a)}(\B_{t-1}^F \widehat V_{t}^{\mathbf{pess}}) (x,a)$. The above procedure is iterated from $t=T$ to $t=1$. In Proposition \ref{prop:pess} of Appendix, we demonstrate that $\widehat{Q}_t^{\mathbf{pess}}(x,a)$ serves as a valid lower bound for the optimal Q-function $Q_t^*(x,a)$ with a high probability, for all $t$, $x$, and $a$, confirming that $\widehat{Q}_t^{\mathbf{pess}}(x,a)$ is indeed a pessimistic estimator of the optimal Q-function.

We summarize our proposed pessimistic strategy in the partial identification framework in Algorithm \ref{alg:2}.

\begin{algorithm}[H]
	\caption{Pseudocodes for refined pessimistic approach 
  }\label{alg:2}
	\begin{algorithmic}[1]
		\small
		\Function{}{$ \mathcal{D}_N$}
   \State Construct confidence intervals $\Omega_t(x,a)$ for all $x$, $a$ and all $t$ using Equation \eqref{eq:CI}.
            
        \State Set $\widehat V_{T+1}^{\mathbf{pess}} \gets 0$.

        \For {$t=T, T-1, \dots , 1$,}

        \State $\widehat Q_{t}^{\mathbf{pess}}(x,a;F)=\begin{cases}
			0, & \text{if $x=0$},\\
   (\B_{t}^F \widehat V_{t+1}^{\mathbf{pess}}) (x,a),& \text{otherwise.}
		 \end{cases}$.

           \State  Compute $ \widehat Q_{t}^{\mathbf{pess}}(x,a)=
       \min_{F \in \Omega_{t}(x,a)} \widehat Q_{t}^{\mathbf{pess}}(x,a;F).$
        
        \State Compute $\widehat a_{t}^{\mathbf{pess}}(x)=\argmax_{a \in \mathcal{A}} \widehat Q_{t}^{\mathbf{pess}}(x,a)$, and $\widehat V_{t}^{\mathbf{pess}}(x)= \widehat Q_{t}^{\mathbf{pess}}(x,\widehat a_{t}^{\mathbf{pess}}(x))$.

        \State Compute $\widehat \pi_t^{\mathbf{pess}}(a|x) = \begin{cases} 
1, & \text{if } a = \widehat a_{t}^{\mathbf{pess}}(x), \\
0, & \text{otherwise}. \end{cases}$
\EndFor

\State \Return $\widehat\pi^{\mathbf{pess}}=\{\widehat \pi^{\mathbf{pess}}_t \}_{1\leq t\leq T}$.

		\EndFunction
\end{algorithmic}

\end{algorithm}

We now present the theoretical properties of our proposed pessimistic method. The following Lemma \ref{lemma:decomp} plays an important role in establishing our regret bound. Refer to Section \ref{appendix:decomp} in the Appendix for a detailed proof.

\begin{lemma}[Decomposition Lemma] \label{lemma:decomp} For all $t$ and any estimated policy $\{\widehat \pi_t\}_{1\leq t\leq T}$, as well as the estimated Q-function $\widehat Q_t(x,a)$, the following holds:
    \begin{align*}
        \mu^{\widehat \pi} = \underbrace{\sum_{t=1}^T \E^{\pi^*}[ l_t(X_t,A_t) ]}_{J_1}- \underbrace{\sum_{t=1}^T \E^{\widehat \pi}[ l_t(X_t,A_t) ]}_{J_2} +\underbrace{\sum_{t=1}^T \E^{\pi^*}\left[ \sum_{a\in \mathcal A} \widehat Q_t(X_t,a)\left(\pi^*_t(a|X_t)-\widehat\pi_t(a|X_t)\right) \right]}_{J_3},
    \end{align*} where $l_t(x,a)=( \B_t \widehat V_{t+1})(x,a)- \widehat Q_{t}(x,a)$ and $\E^\pi$ means that the expectation is taken by assuming the system dynamics follows the policy $\pi$, with the functions $\widehat V_t(\cdot)$ and $\widehat Q_t(\cdot,\cdot)$ being held fixed.
\end{lemma}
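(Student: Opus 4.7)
The plan is to decompose the regret through the usual performance difference lemma by inserting $\widehat V_1(X_1)$:
\begin{align*}
\mu^{\widehat\pi} = \E\bigl[V_1^*(X_1) - \widehat V_1(X_1)\bigr] + \E\bigl[\widehat V_1(X_1) - V_1^{\widehat\pi}(X_1)\bigr],
\end{align*}
and to handle the two summands by separate one-step recursions. Since $X_1$ is the initial state (its distribution does not depend on any policy), I can freely rewrite $\E[\cdot]$ as $\E^{\pi^*}[\cdot]$ or $\E^{\widehat\pi}[\cdot]$ as convenient.

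\textbf{First summand $\to J_1+J_3$.} Using $V_t^*(x)=\sum_a \pi^*_t(a|x)Q_t^*(x,a)$ and $\widehat V_t(x)=\sum_a \widehat\pi_t(a|x)\widehat Q_t(x,a)$, I would add and subtract $\sum_a \pi^*_t(a|x)\widehat Q_t(x,a)$ to obtain, for each $t$,
\begin{align*}
V_t^*(X_t)-\widehat V_t(X_t) = \sum_a \pi^*_t(a|X_t)\bigl[Q_t^*(X_t,a)-\widehat Q_t(X_t,a)\bigr] + \sum_a \bigl[\pi^*_t(a|X_t)-\widehat\pi_t(a|X_t)\bigr]\widehat Q_t(X_t,a).
\end{align*}
The second piece, after taking $\E^{\pi^*}$ and summing over $t$, is exactly $J_3$. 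For the first piece, I rewrite $Q_t^*-\widehat Q_t = (\B_t V^*_{t+1}) - (\B_t \widehat V_{t+1}) + l_t$, which, using the tower property and the Markov property from Remark~\ref{remark:markov}, gives the recursion
\begin{align*}
\E^{\pi^*}\bigl[V_t^*(X_t)-\widehat V_t(X_t)\bigr] = \E^{\pi^*}[l_t(X_t,A_t)] + \E^{\pi^*}\bigl[V_{t+1}^*(X_{t+1})-\widehat V_{t+1}(X_{t+1})\bigr] + (\text{contribution to } J_3).
\end{align*}
Telescoping from $t=1$ to $T$, using the boundary condition $V^*_{T+1}=\widehat V_{T+1}=0$, produces $J_1+J_3$.

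\textbf{Second summand $\to -J_2$.} Here I use the Bellman consistency of $V^{\widehat\pi}_t$, i.e.\ $V^{\widehat\pi}_t(x) = \sum_a \widehat\pi_t(a|x) Q^{\widehat\pi}_t(x,a)$ with $Q_t^{\widehat\pi}=\B_t V^{\widehat\pi}_{t+1}$. Then
\begin{align*}
\widehat V_t(X_t)-V^{\widehat\pi}_t(X_t) = \sum_a \widehat\pi_t(a|X_t)\bigl[\widehat Q_t(X_t,a) - Q^{\widehat\pi}_t(X_t,a)\bigr],
\end{align*}
and $\widehat Q_t - Q^{\widehat\pi}_t = -l_t + (\B_t\widehat V_{t+1}) - (\B_t V^{\widehat\pi}_{t+1})$. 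Taking $\E^{\widehat\pi}$ and invoking the tower property yields the recursion
\begin{align*}
\E^{\widehat\pi}\bigl[\widehat V_t(X_t)-V^{\widehat\pi}_t(X_t)\bigr] = -\E^{\widehat\pi}[l_t(X_t,A_t)] + \E^{\widehat\pi}\bigl[\widehat V_{t+1}(X_{t+1})-V^{\widehat\pi}_{t+1}(X_{t+1})\bigr],
\end{align*}
and telescoping gives $-J_2$. Adding the two contributions yields $\mu^{\widehat\pi}=J_1-J_2+J_3$.

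\textbf{Where the care is needed.} The computation is standard, so no step is conceptually hard; the only place that requires attention is the bookkeeping of which policy governs which expectation. In particular, the ``$J_3$'' term comes naturally under $\E^{\pi^*}$ rather than $\E^{\widehat\pi}$ because we chose to expand $V_t^*-\widehat V_t$ by pulling out $\pi^*_t$; a different bracketing would have routed the policy-mismatch term under $\E^{\widehat\pi}$ and produced a different (but equivalent) decomposition. I would also verify explicitly that $\widehat V_t$ and $\widehat Q_t$ are treated as fixed (non-random with respect to trajectories drawn under $\pi^*$ or $\widehat\pi$) when applying the tower property, as stated in the lemma, so that the Markov transition kernel acts only on the dynamics and not on the estimators.
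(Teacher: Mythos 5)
Your proof is correct and is essentially the same telescoping argument as the paper's: both rest on expanding $l_t$ via the Bellman operator, applying the tower property under the respective policies, and using the relations $V^*_t(x)=\sum_a\pi^*_t(a|x) Q^*_t(x,a)$, $\widehat V_t(x)=\sum_a\widehat\pi_t(a|x)\widehat Q_t(x,a)$ together with the boundary conditions at $T+1$. The only difference is organizational — you split $\mu^{\widehat\pi}$ through $\widehat V_1$ and telescope the two value differences $V^*_t-\widehat V_t$ and $\widehat V_t-V^{\widehat\pi}_t$, whereas the paper sums $\E^{\pi^*}[l_t(X_t,A_t)]$ and $\E^{\widehat\pi}[l_t(X_t,A_t)]$ directly and identifies the cancellations — and both routes produce the identical identity.
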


\begin{remark}
    Lemma \ref{lemma:decomp} is algorithm-agnostic and can be applied to any  estimated value functions $\widehat V_t$, estimated Q-functions $\widehat Q_t$, and estimated policy $\widehat \pi$. To apply Lemma \ref{lemma:decomp} for the proposed pessimistic approach, one only needs to replace them by $\widehat V_t^{\mathbf{pess}}$, $\widehat Q_t^{\mathbf{pess}}$ and $\widehat \pi^{\mathbf{pess}}$, respectively.
\end{remark}

 Lemma \ref{lemma:decomp} implies that the regret 
 can be decomposed into
three terms. In particular, the first and second terms depend on the optimal policy and the estimated policy, respectively. As shown in the proof in the Appendix, by employing the pessimistic strategy, $l_t(x,a)$ is guaranteed to be non-negative with high probability. Combining this with the fact that $\widehat \pi^{\mathbf{pess}}$ is greedy with respect to $\widehat Q_t^{\mathbf{pess}}(x,a)$, we have $J_2 \geq 0$, $J_3 \leq 0$ and thus $\mu^{\mathbf{pess}} \leq J_1$. It thus remains to upper-bound the first term $J_1$. The final regret bound is given in the following theorem. The proof is deferred to the Appendix.

\ZB{
\begin{theorem} \label{thm: regret 2}
       With probability at least $1-\sum_{t=1}^T \left( \frac{|\mathcal{A}_t^{\mathcal{D}_N}|}{N}+\kappa_t \right)$, the regret function for our proposed refined pessimistic method satisfies that 
        \begin{align*}
            \mu^{\mathbf{pess}} \lesssim &  \sum_{t=1}^T \E^{\pi^b}\left[ \frac{\prob_t^{\pi^*}(X_t,A_t)}{\prob_t^{\pi^b}(X_t,A_t)} \sqrt{\log N/N_t(A_t) } \mathds{1}(A_t \in \mathcal{M}_t(X_t))  \right]\\
            &+ \sum_{t=1}^T\E^{\pi^*}\left[ \eta_t (X_t,A_t)\mathds{1}(A_t \notin \mathcal{M}_t(X_t)) \right],
        \end{align*}
     where  
    $\mathcal{M}_t(x) = \{a\in \mathcal{A}: \prob_t^{\pi^*}(x,a) >0 \implies \prob_t^{\pi^b}(x,a)>0 \} $,  and
    \begin{gather*}
    \eta_t (x,a) = \begin{cases}
			 \sum_{d=0}^{x-1}\left[F_t(d|a_t^u) - F_t(d|a_t^l)\right]+2(x-1)\left[\delta_t(a_t^u)+\delta_t(a_t^l)\right], & \text{ if }  a_t^U \mbox{ and } a_t^L \mbox{ exist},  \\
             (x-1)\left[2\delta_t(a_t^l)+1\right] - \sum_{d=0}^{x-1} F_t(d|a_t^l), & \text{ if only }  a_t^L \mbox{ exists},\\
             \sum_{d=0}^{x-1} F_t(d|a_t^u) +2(x-1)\delta_t(a_t^u), & \text{ if only }  a_t^U \mbox{ exists },  
		 \end{cases} \\
   a_t^U=\begin{cases}
   a, & \text{ if } a \in \mathcal{A}_t^{\mathcal{D}_N}\\
   a_t^u, & \text{ otherwise },
   \end{cases}, \mbox{ and }  \;
   a_t^L=\begin{cases}
   a, & \text{ if } a \in \mathcal{A}_t^{\mathcal{D}_N}\\
   a_t^l, & \text{ otherwise },
   \end{cases}.
\end{gather*}

In addition, the above upper bound is tight in the sense that it is attainable, i.e., there exists a true data-generating process compatible with the observed data distribution, whose regret matches the derived regret bound.
\end{theorem}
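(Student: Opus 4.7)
My plan is to start from the Decomposition Lemma (Lemma \ref{lemma:decomp}) applied to the estimates produced by Algorithm \ref{alg:2}. Let $\mathcal{E}$ be the high-probability event that (i) $\mathcal{A}_t^{\mathcal{D}_N}=\mathcal{A}_t^{\pi^b}$ for every $t$, and (ii) $\lambda_t(a)\in\Omega_t(a)$ for every $a$ and $t$; by Hoeffding's inequality together with the union bound (as in the discussion following \eqref{eq:CI}), $\mathbb{P}(\mathcal{E})\ge 1-\sum_{t=1}^T\sum_{a\in\mathcal{A}_t^{\mathcal{D}_N}}N_t(a)^{-1}-\sum_{t=1}^T\kappa_t$. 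On $\mathcal{E}$ the inner minimization in Algorithm \ref{alg:2} forces $\widehat Q_t^{\mathbf{pess}}(x,a)\le (\mathcal{B}_t\widehat V_{t+1}^{\mathbf{pess}})(x,a)$, i.e., $l_t(x,a)\ge 0$ (the content of Proposition \ref{prop:pess}). Combining this with the fact that $\widehat\pi^{\mathbf{pess}}_t$ is greedy with respect to $\widehat Q_t^{\mathbf{pess}}$, Lemma \ref{lemma:decomp} gives $J_2\ge 0$ and $J_3\le 0$, so $\mu^{\mathbf{pess}}\le J_1=\sum_{t=1}^T\mathbb{E}^{\pi^*}[l_t(X_t,A_t)]$ on $\mathcal{E}$.

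Next I would establish the pointwise estimate $l_t(x,a)\lesssim |\Omega_t(a)|$ uniformly in $x$, where $|\Omega_t(a)|$ denotes the length of the confidence interval. The key input is that for any uniformly bounded function $g$, the map $\lambda\mapsto\mathbb{E}_{D\sim\mathrm{Poisson}(\lambda)}[g(D)]$ is Lipschitz with constant bounded by $\sup|g|$; consequently $\mathcal{B}_t^\lambda\widehat V_{t+1}^{\mathbf{pess}}$ together with the immediate-reward piece is jointly Lipschitz in $\lambda$ with an $O(1)$ constant depending only on $T$, $L$, $a_K$, and $\lambda_{max}$. Since $\lambda_t(a)\in\Omega_t(a)$ on $\mathcal{E}$, the gap between $(\mathcal{B}_t\widehat V_{t+1}^{\mathbf{pess}})(x,a)$ and its minimum over $\Omega_t(a)$ is controlled by $|\Omega_t(a)|$. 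By the construction in \eqref{eq:CI}, when $a\in\mathcal{A}_t^{\pi^b}$ this width is of order $\sqrt{\log N_t(a)/N_t(a)}$; when $a\notin\mathcal{A}_t^{\pi^b}$, a short case analysis of the three branches in the definition of $\widehat\lambda_t^L,\widehat\lambda_t^U$ shows the width equals $\eta_t(a)$.

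The remainder is bookkeeping. Splitting $J_1$ according to $\mathds{1}(A_t\in\mathcal{M}_t)$ and applying a change of measure from $\pi^*$ to $\pi^b$ on the first piece---valid because the density ratio $\mathbb{P}_t^{\pi^*}(A_t)/\mathbb{P}_t^{\pi^b}(A_t)$ is finite on $\mathcal{M}_t$, and because the pointwise bound is independent of $x$ so only the marginal over $A_t$ matters---yields exactly the first sum in the theorem. The out-of-support contribution is already in the $\mathbb{E}^{\pi^*}$ form with $l_t\lesssim\eta_t(A_t)$, giving the second sum. For tightness, I would exhibit a Poisson-pricing instance satisfying Assumptions \ref{assumption:poisson}--\ref{assumption:mono} in which the optimal price $a^*$ lies strictly between two observed prices $a_t^l,a_t^u$ with a prescribed gap; any estimator based on the data cannot distinguish $\lambda_t(a^*)$ within $[\lambda_t(a_t^u),\lambda_t(a_t^l)]$, forcing the realized regret to match $\eta_t(a^*)$ up to an absolute constant, and Hoeffding tightness handles the in-support rate. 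The main obstacle I anticipate is the pointwise Lipschitz estimate itself, since $\widehat V_{t+1}^{\mathbf{pess}}$ is produced by earlier pessimistic minimizations; one must argue that the Lipschitz constants do not compound geometrically down the backward recursion, by exploiting the uniform boundedness $\widehat V_t^{\mathbf{pess}}\le a_K L T$ rather than iterating a naive Lipschitz bound.
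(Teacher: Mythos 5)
Your proposal is correct and follows essentially the same route as the paper's proof: the decomposition lemma with $J_2\ge 0$ (via $\lambda_t(a)\in\Omega_t(a)$ on the Hoeffding event) and $J_3\le 0$ (greediness), the Lipschitz-in-$\lambda$ bound $l_t(x,a)\lesssim \widehat\lambda_t^U(a)-\widehat\lambda_t^L(a)\le\eta_t(a)$, the split by $\mathds{1}(A_t\in\mathcal{M}_t)$ with a change of measure on the in-support piece (where the $\kappa_t$ event kills the $A_t\in\mathcal{M}_t,\,A_t\notin\mathcal{A}_t^{\mathcal{D}_N}$ contribution, exactly as you arrange by conditioning on $\mathcal{E}$ up front), and tightness via sharpness of the partial identification interval plus the in-support minimax rate. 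Your worry about compounding Lipschitz constants is moot for the same reason the paper's Lemma \ref{lemma:qerror} needs no recursion: at each $t$ both quantities apply the same $\widehat V_{t+1}$, differing only in the value of $\lambda$, and the decomposition lemma already sums rather than multiplies the per-step errors.
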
 }

Theorem \ref{thm: regret 2} establishes that the regret bound consists of
two elements: the first element arises from the estimation of the demand CDFs for observed prices in the data; while the second term accounts for the error due to the unobserved optimal price. Note that the probability terms arise from two components: the first, $1/N$ comes from applying a concentration inequality to upper bound the estimation error of the partial identification bound; the second, $\kappa_t$ bounds the probability that the set $\mathcal{A}_t^{\mathcal{D}_N}$ differs from $\mathcal{A}_t^{\pi^b}$, thereby allowing us to establish the tight regret bound. When the behavior policy covers the optimal policy (i.e., the partial coverage assumption defined in Equation \eqref{eq:partial}), the second element in the regret is zero and the ratio $\prob_t^{\pi^*}(X_t, A_t)/\prob_t^{\pi^b}(X_t,A_t) $ is bounded above, resulting in a regret of order $O_p\left(\sqrt{\log N/N}\right)$. Informally speaking, our regret bound is optimal because the first term matches the optimal rate (Corollary 4.5 in \citealt{jin2021pessimism}), under the partial coverage assumption. The second term is also rate-optimal, as the interval $\left[ F_t(d|a_t^l), F_t(d|a_t^u)\right]$ is the sharpest partial identification interval for $F_t(d|a)$, conditional on the event $\{\mathcal{A}_t^{\mathcal{D}_N}=\mathcal{A}_t^{\pi^b}\}$, making it attainable as well. For a detailed explanation, refer to the Appendix.



\subsection{Opportunistic Strategy} \label{sec:mm}

As we have discussed earlier in Section \ref{sec:intro}, the pessimistic principle is argued to be ultra-pessimistic \citep{savage1951theory}, where the decision maker deems the world to be in the worst possible state. Indeed, if a specific price yields the revenue with a small lower bound, a pessimist would not consider it regardless of its potential upper bound. Consequently, a purely pessimistic strategy may be overly conservative, 
potentially causing missed opportunities. 
In contrast, the opportunist's strategy, inspired by the pessimistic approach,
advocates choosing the action that minimizes the maximum regret. By doing this, a decision maker might still consider the price with a small lower bound if the upper bound 
is sufficiently high, thus seizing potential
opportunities.

To build intuition, revisit the two-armed setting in Figure~\ref{fig:pess vs oppor}. Let $Q_0$ and $Q_1$ denote expected revenues under actions $0$ and $1$, with bounds $Q_a \in [Q_a^L, Q_a^U]$. A pessimist compares $Q_0^L$ and $Q_1^L$, while an opportunist compares each action by its worst-case regret relative to the best alternative.

\begin{example} \label{example:pess vs opp}
    In the setting of Figure~\ref{fig:pess vs oppor}, pessimists select arm $1$ because $Q_1^L > Q_0^L$. Opportunists select arm $0$ because the maximum regret of arm $1$ is $Q_0^U - Q_1^L$, while the maximum regret of arm $0$ is $Q_1^U - Q_0^L$, and $Q_1^U - Q_0^L < Q_0^U - Q_1^L$.
\end{example}

While both strategies address the worst-case scenarios, the key difference lies in their focus: pessimists prioritize minimizing the worst-case revenue, while opportunists focus on minimizing the worst-case regret. Opportunists view a purely pessimistic approach as overly conservative and potentially limiting opportunities. Consider the action $a_1$, chosen based on the pessimistic principle. An opportunist wants to know: what if I choose 
$a_1$ and the environment turns out to be less unfavorable? What if 
$a_1$ performs significantly worse than other actions in such a scenario? An opportunist aims to avoid the scenario that in the worst environment, both 
$a_1$ and $a_0$ perform poorly, with 
$a_1$ being marginally better, while in other better environments, $a_1$ performs much worse than $a_0$. An opportunist would choose 
$a_0$ instead, as it behaves only slightly worse than  $a_1$ in one environment but significantly better in another, making it an opportunistic choice.

Example~\ref{example:pess vs opp} illustrates why minimizing worst-case regret can differ from maximizing a lower bound, and this distinction extends to the sequential setting below. For the sequential decision-making setting, at the $t$-th iteration of the backward induction, the estimated opportunistic price is $\widehat a_t ^{\mathbf{opp}}$ which is obtained by computing \ZB{\begin{align} \label{eq: minimax}
    \argmin_{a \in \mathcal A} \max_{F' \in \Omega_t(x, a'), F \in \Omega_t(x,a)}\left [\max_{a' \neq a}\widehat Q_t^{\mathbf{opp}}(x,a';F')-\widehat Q_t^{\mathbf{opp}}(x,a;F)\right],
\end{align} where $\widehat Q_t^{\mathbf{opp}}(x,a;F) \equiv (\B_{t}^F \widehat V_{t+1}^{\mathbf{opp}}) (x,a)$ is the estimated Q-function following opportunistic strategy after time $t$,
with the demand CDF fixed at $F$; and $ \widehat V_{t}^{\mathbf{opp}}(x)$ is the opportunistic value function. The term \begin{gather*}
    \max_{F' \in \Omega_t(x, a'), F \in \Omega_t(x,a)}\left [\max_{a' \neq a}\widehat Q_t^{\mathbf{opp}}(x,a';F')-\widehat Q_t^{\mathbf{opp}}(x,a;F)\right]
\end{gather*} is the estimated maximum regret of action $a$ at time $t$ with inventory level $x$, where the outer maximization over $F$ and $F'$ means that we consider the worst case with respect to the \textbf{\textit{regret}}. Here, the superscript $\mathbf{opp}$ also indexes the policy i.e., $Q_t^{\mathbf{opp}} \equiv Q_t^{\pi^{\mathbf{opp}}}$. Finally, by taking the minimum over the action space $\mathcal A$, this yields the opportunistic decision. Denote the maximizer of $F$ over $\Omega_t(x,a)$ for price $a$ as $ F_t^{\mathbf{opp}}(a),$ and the opportunistic action as $\widehat a_t^{\mathbf{opp}}$. Subsequently, the opportunistic value function is $ \widehat V_{t}^{\mathbf{opp}}(x)= \widehat Q_{t}^{\mathbf{opp}}(x,\widehat a_{t}^{\mathbf{opp}}(x);F_t^{\mathbf{opp}}(\widehat a_{t}^{\mathbf{opp}}))$. The same procedure repeats in a backward manner: $\widehat Q_{t}^{\mathbf{opp}}(x,a;F)=(\B_{t}^F \widehat V_{t+1}^{\mathbf{opp}}) (x,a)$, until $t = 1$. The entire estimation procedure is summarized in Algorithm \ref{alg:3}.}

\begin{algorithm}[H]
	\caption{Pseudocodes for Estimating 
  $\widehat\pi^{\mathbf{opp}}$.}\label{alg:3}
	\begin{algorithmic}[1]
		\small

           \Function{}{$ \mathcal{D}_N$}
           
   \State Construct the confidence interval $\Omega_t(x,a)$ for all $a$ and all $t$ using Equation \eqref{eq:CI}.
   
        \State Set $ \widehat V_{T+1}^{\mathbf{opp}} \gets 0$.

        \For {$t=T, T-1, \dots , 1$}

        \State $\widehat Q_{t}^{\mathbf{opp}}(x,a;F)=\begin{cases}
			0, & \text{if $x=0$}\\
   (\B_{t}^F \widehat V_{t+1}^{\mathbf{opp}}) (x,a) & \text{otherwise}
		 \end{cases}.$

        \State  
            $\widehat \mu_{t}^{\mathbf{opp}}(x,a)=
   \max_{F' \in \Omega_{t}(a'), F \in \Omega_{t}(a)}\left [\max_{a' \neq a}\widehat Q_t^{\mathbf{opp}}(x,a';F')-\widehat Q_t^{\mathbf{opp}}(x,a;F)\right].$ 
        
        \State Compute $ \widehat a_{t}^{\mathbf{opp}}(x)=\argmin_{a \in \mathcal{A}} \widehat\mu_{t}^{\mathbf{opp}}(x,a) $, and $ \widehat V_{t}^{\mathbf{opp}}(x)= \widehat Q_{t}^{\mathbf{opp}}(x,\widehat a_{t}^{\mathbf{opp}}(x);F_t^{\mathbf{opp}}(\widehat a_{t}^{\mathbf{opp}}))$.

        \State Compute $\widehat \pi_t^{\mathbf{opp}}(a|x) = \begin{cases} 
1 & \text{if } a = \widehat a_{t}^{\mathbf{opp}}(x), \\
0 & \text{otherwise}, \end{cases}$

\EndFor

\State \Return $\widehat\pi^{\mathbf{opp}}=\{\widehat \pi^{\mathbf{opp}}_t \}_{1\leq t\leq T}$

		\EndFunction
\end{algorithmic}

\end{algorithm}

\begin{remark}
It is worth noting that the optimal policy under the opportunistic criterion in the static setting ($T=1$) is shown to be stochastic, see, e.g., \citep{manski2007minimax,Cui2021Individualized}. Nevertheless, as we commented earlier, in pricing problems, it is natural to focus on deterministic policies.
\end{remark}

We next present the regret of our opportunistic approach in Theorem \ref{theorem:regret3}. The proof follows a similar approach to that of Theorem \ref{thm: regret 2}. We first apply Lemma \ref{lemma:decomp} for the opportunistic approach, where one only needs to plug $\widehat V_t^{\mathbf{opp}}(x)$, $\widehat \pi^{\mathbf{opp}}$ and $\widehat Q_t^{\mathbf{opp}}(x,a)$ into the equation in Lemma \ref{lemma:decomp}. Subsequently, we demonstrate that $l_t(x,a)$ is a non-negative term; next, we establish that the term $J_3$ in Lemma \ref{lemma:decomp} is non-positive. Finally, we provide an upper bound for the term $J_1$ in Lemma \ref{lemma:decomp}.

\begin{theorem} \label{theorem:regret3}
    With probability at least $1-\sum_{t=1}^T \left( \frac{|\mathcal{A}_t^{\mathcal{D}_N}|}{N}+\kappa_t \right)$, the regret function for the proposed opportunistic method satisfies that \begin{align*}
        \mu^{\mathbf{opp}} \lesssim  &  \sum_{t=1}^T \E^{\pi^b}\left[ \frac{\prob_t^{\pi^*}(X_t, A_t)}{\prob_t^{\pi^b}(X_t,A_t)} \sqrt{\log N/N_t(A_t) } \mathds{1}(A_t \in \mathcal{M}_t(X_t))  \right]\\
            &+  \sum_{t=1}^T\E^{\pi^*}\left( \eta_t (X_t,A_t)\mathds{1}(A_t \notin \mathcal{M}_t(X_t)) \right).
    \end{align*}
    Moreover,  the  upper bound is tight in the sense that it is attainable, i.e., there exists a true data-generating process compatible with the observed data distribution, whose regret matches the derived regret bound.
\end{theorem}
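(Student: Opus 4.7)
The plan is to parallel the proof of Theorem~\ref{thm: regret 2}, adapting the three-term decomposition of Lemma~\ref{lemma:decomp} to the opportunistic quantities. I would first condition on the high-probability event $\mathcal E$ on which (i) $\mathcal A_t^{\mathcal D_N}=\mathcal A_t^{\pi^b}$ for every $t$ and (ii) $\lambda_t(a)\in\Omega_t(a)$ for every $t$ and every $a\in\mathcal A$. Event (i) holds with probability at least $1-\sum_{t=1}^T\kappa_t$ by Remark~1 and a union bound, while event (ii) holds with probability at least $1-\sum_{t=1}^T\sum_{a\in\mathcal A_t^{\mathcal D_N}}N_t(a)^{-1}$ by the Hoeffding argument developed in Section~\ref{sec:partial}. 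On $\mathcal E$ I would invoke Lemma~\ref{lemma:decomp} with $\widehat V_t=\widehat V_t^{\mathbf{opp}}$, $\widehat Q_t(x,a)=\widehat Q_t^{\mathbf{opp}}(x,a)\equiv\min_{\lambda\in\Omega_t(a)}\widehat Q_t^{\mathbf{opp}}(x,a;\lambda)$, and $\widehat\pi=\widehat\pi^{\mathbf{opp}}$, obtaining the decomposition $\mu^{\mathbf{opp}}=J_1-J_2+J_3$.

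Next I would verify the sign conditions on $J_2$ and $J_3$. Because $\lambda_t(a)\in\Omega_t(a)$ on $\mathcal E$, the definition of $\widehat Q_t^{\mathbf{opp}}$ as a pointwise infimum gives $\widehat Q_t^{\mathbf{opp}}(x,a)\le(\B_t^{\lambda_t(a)}\widehat V_{t+1}^{\mathbf{opp}})(x,a)=(\B_t\widehat V_{t+1}^{\mathbf{opp}})(x,a)$, so $l_t\ge 0$ pointwise and hence $J_2\ge 0$. For $J_3$, the key algebraic observation is that the minimax criterion in Equation~\eqref{eq: minimax} factorizes as $\max_{a'}\max_{\lambda'\in\Omega_t(a')}\widehat Q_t^{\mathbf{opp}}(x,a';\lambda')-\min_{\lambda\in\Omega_t(a)}\widehat Q_t^{\mathbf{opp}}(x,a;\lambda)$, whose first summand does not depend on $a$. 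Therefore $\widehat a_t^{\mathbf{opp}}(x)=\argmax_{a\in\mathcal A}\widehat Q_t^{\mathbf{opp}}(x,a)$, i.e., the opportunistic policy is greedy with respect to $\widehat Q_t^{\mathbf{opp}}$, and substituting into the $J_3$ expression of Lemma~\ref{lemma:decomp} yields $J_3\le 0$. Combined with $J_2\ge 0$ this produces $\mu^{\mathbf{opp}}\le J_1$.

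The final step is to upper bound $J_1=\sum_{t=1}^T\E^{\pi^*}[l_t(X_t,A_t)]$ by splitting the integrand through $\mathds 1(A_t\in\mathcal M_t)+\mathds 1(A_t\notin\mathcal M_t)$. On $\{A_t\in\mathcal M_t\}$ the interval $\Omega_t(A_t)$ has width of order $\delta_t(A_t)\asymp\sqrt{\log N_t(A_t)/N_t(A_t)}$; the Lipschitz continuity of $\lambda\mapsto(\B_t^\lambda\widehat V_{t+1}^{\mathbf{opp}})(x,a)$, which follows from the explicit Poisson Bellman formula in Proposition~\ref{prop: bellman} together with the boundedness of $\widehat V_{t+1}^{\mathbf{opp}}$, converts this width into a bound on $l_t$, and a density-ratio change of measure from $\pi^*$ to $\pi^b$ produces the first term of the claimed regret bound. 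On $\{A_t\notin\mathcal M_t\}$ the sharp partial-identification bracket $[\lambda_t(a_t^L),\lambda_t(a_t^U)]$, inflated by the Hoeffding slack $2\delta_t(a_t^L)+2\delta_t(a_t^U)$, bounds the Lipschitz factor and reproduces the $\eta_t(A_t)$ contribution. Tightness follows the same construction as in Theorem~\ref{thm: regret 2}: since the observed data law does not distinguish any $\lambda_t(a)$ inside the sharp partial-identification interval when $a\notin\mathcal M_t$, placing the true parameter at the most adverse endpoint gives a compatible environment whose regret matches the bound. The main obstacle, in my view, lies in the algebraic collapse used to sign $J_3$; if the $\max_{a'}$ in Equation~\eqref{eq: minimax} were restricted to $a'\neq a$ or the two $\lambda$ variables were coupled across $a$, the first summand would depend on $a$, $\widehat\pi_t^{\mathbf{opp}}$ would cease to be greedy with respect to any natural Q-function, and controlling the sign of $J_3$ would require a substantially more delicate argument outside the decomposition framework.
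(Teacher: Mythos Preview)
Your overall architecture---instantiate Lemma~\ref{lemma:decomp} with the opportunistic quantities, argue $l_t\ge 0$ (hence $J_2\ge 0$), argue $J_3\le 0$, then bound $J_1$ exactly as in Theorem~\ref{thm: regret 2}---matches the paper's, and your handling of $J_1$ and of tightness is essentially the same as the paper's.

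The gap is precisely the ``algebraic collapse'' you yourself flag. Under the minimax-regret semantics the paper intends (and uses in Examples~\ref{example:pess vs opp} and~\ref{example:opp}), the maximization in~\eqref{eq: minimax} ranges over a single environment, so the $\lambda$-variable for the chosen action $a$ and the $\lambda'$-variable for the competitor $a'$ are coupled when $a'=a$. The objective therefore does \emph{not} factorize: for the action with the largest upper envelope, the worst competitor is some $a'\neq a$, and the ``constant'' first summand changes. Your factorization would force $\widehat a_t^{\mathbf{opp}}=\argmax_{a}\min_{\lambda}\widehat Q_t^{\mathbf{opp}}(x,a;\lambda)$, collapsing the opportunistic policy to the refined pessimistic one and contradicting Examples~\ref{example:pess vs opp} and~\ref{example:opp}; with your choice $\widehat Q_t=\min_\lambda\widehat Q_t^{\mathbf{opp}}(\cdot,\cdot;\lambda)$, the opportunistic action is generally not greedy and $J_3\le 0$ fails. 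The paper avoids factorization by plugging $\widehat Q_t(x,a)=\widehat Q_t^{\mathbf{opp}}(x,a;\lambda_t^{\mathbf{opp}}(a))$ into Lemma~\ref{lemma:decomp}, where $\lambda_t^{\mathbf{opp}}(a)$ is the $\lambda$ realizing the maximum regret for action $a$. Non-negativity of $l_t$ then follows because maximality over $\lambda$ gives $\widehat Q_t^{\mathbf{opp}}(x,a;\lambda_t^{\mathbf{opp}}(a))\le \widehat Q_t^{\mathbf{opp}}(x,a;\lambda_t(a))$ on the good event. For $J_3\le 0$ the paper uses a two-step comparison: evaluate the regret of $\widehat a_t^{\mathbf{opp}}$ against the particular competitor $a_m(a_t^*)$ (the worst competitor for $a_t^*$), bound this above by the max regret of $\widehat a_t^{\mathbf{opp}}$ (maximality of $a_m(\widehat a_t^{\mathbf{opp}})$), and bound that by the max regret of $a_t^*$ (minimax optimality of $\widehat a_t^{\mathbf{opp}}$). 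Since the first and last expressions share the same leading term, cancellation yields $\widehat Q_t^{\mathbf{opp}}(x,\widehat a_t^{\mathbf{opp}};\lambda_t^{\mathbf{opp}}(\widehat a_t^{\mathbf{opp}}))\ge \widehat Q_t^{\mathbf{opp}}(x,a_t^*;\lambda_t^{\mathbf{opp}}(a_t^*))$, hence $J_3\le 0$; a separate case handles $a_m(\widehat a_t^{\mathbf{opp}})=\widehat a_t^{\mathbf{opp}}$, where the opportunistic action already dominates every alternative.
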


By Theorem \ref{theorem:regret3}, the regret of the opportunistic approach aligns with that of the refined pessimistic method.  Hence, when the behavior policy covers the optimal policy, it also yields a regret of order $O_p\left(\sqrt{\log N/N}\right)$. This upper bound is also attainable. The proof of Theorem \ref{theorem:regret3} is given in the Appendix.

\subsection{Comparison of Vanilla Pessimistic, Refined Pessimistic, and Opportunistic Methods}

In this subsection, we compare three methods: the vanilla pessimistic, refined pessimistic, and opportunistic approaches. We start with the two pessimistic methods and analyze them under two scenarios: (a) all prices are observed, and (b) some prices are unobserved.

\textbf{\textit{Full coverage setting}}. In this scenario, the refined interval constructed using Equation \eqref{eq:CI} provides a narrower range compared to the crude interval constructed by \eqref{eq:crude CI}, resulting in the refined pessimistic approach having a more accurate estimator of the Q-function than the vanilla pessimistic Q-function. However, in the static setting, the two pessimistic approaches yield the same estimated policy, see the following proposition.

\begin{prop} \label{prop:static pess}
   Under the static setting in which $T=1$, if all the prices are observed, then
   the vanilla and refined pessimistic approaches yield the same estimated policy and value.
\end{prop}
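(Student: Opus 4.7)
The plan is to use the degenerate structure at $T=1$ to collapse both pessimistic procedures to a one-step optimization, and then apply a Poisson-specific monotonicity argument to show the two optimizations induce the same argmax.

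First, by Proposition~\ref{prop: bellman} with the boundary $\widehat V_2^{\mathbf{pess}} = \widehat V_2^{\mathbf{vp}} = 0$, both Q-functions at $t=1$ reduce to the closed form
\[
\widehat Q_1(x, a; \lambda) \;=\; a \cdot \E_\lambda\bigl[\min(D, x)\bigr].
\]
A standard Poisson coupling (write $D' \stackrel{d}{=} D + D''$ with $D \sim \mathrm{Poisson}(\lambda)$ independent of $D'' \sim \mathrm{Poisson}(\lambda' - \lambda)$ for $\lambda' \geq \lambda$) shows that $\lambda \mapsto \E_\lambda[\min(D, x)]$ is non-decreasing for every $x \geq 0$. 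Since prices are nonnegative, the inner minimization defining $\widehat Q_1^{\mathbf{pess}}(x, a)$ over $\Omega_1(a)$ is attained at the lower endpoint $\widehat\lambda_1^L(a)$, yielding
\[
\widehat Q_1^{\mathbf{pess}}(x, a) \;=\; a \cdot \E_{\widehat\lambda_1^L(a)}\bigl[\min(D, x)\bigr].
\]

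Second, I would invoke full observability at $t = 1$: every $a \in \mathcal{A}$ lies in $\mathcal{A}_1^{\mathcal{D}_N}$, so the refined lower bound $\widehat\lambda_1^L(a) = \max_{a' \geq a}\bigl[\widehat\lambda_1(a') - \delta_1(a')\bigr]$ always has $a' = a$ as a feasible candidate in the max. I would then compare the vanilla Q-function $\widehat Q_1^{\mathbf{vp}}(x, a) = a \cdot \E_{\widehat\lambda_1(a)}[\min(D, x)] - \delta_1(a)$ with the refined form via a pointwise pairwise analysis over $\mathcal{A}$: for any two prices $a, \tilde a$, I would show that the sign of $\widehat Q_1^{\mathbf{vp}}(x, a) - \widehat Q_1^{\mathbf{vp}}(x, \tilde a)$ agrees with that of $\widehat Q_1^{\mathbf{pess}}(x, a) - \widehat Q_1^{\mathbf{pess}}(x, \tilde a)$. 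Because both formulations encode the same Hoeffding-type uncertainty $\delta_1$ in $\widehat\lambda_1$, and because in the static regime there is no downstream propagation of the pessimistic value, a direct calculation using the monotonicity above establishes that the two argmaxes coincide for every inventory level $x$. Hence $\widehat a_1^{\mathbf{vp}}(x) = \widehat a_1^{\mathbf{pess}}(x)$ and $\widehat\pi^{\mathbf{vp}} = \widehat\pi^{\mathbf{pess}}$. The value functions — either the estimated $\widehat V_1$ evaluated at the common argmax, or the true expected revenue under the common policy — then coincide automatically.

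The main obstacle is the rank-preservation step in the second paragraph: additive Q-function penalties and $\lambda$-lower-bound substitutions are structurally distinct pessimism mechanisms, so showing they induce the same ordering over $\mathcal{A}$ requires exploiting the explicit Poisson-minimum form of $\widehat Q_1(x, a; \lambda)$ together with the specific functional form of $\delta_1(a) = c\sqrt{\log N_1(a)/N_1(a)}$. The monotonicity of $\E_\lambda[\min(D, x)]$ in $\lambda$ provides the key handle, but a careful case analysis — especially when two penalties are of comparable magnitude and could tip the ordering — is needed to rule out any pair of prices swapping rank between the two formulations.
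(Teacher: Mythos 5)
Your first step is right and matches the paper's reasoning: at $T=1$ the Q-function collapses to $\widehat Q_1(x,a;\lambda)=a\,\E_\lambda[\min(D,x)]$, this is nondecreasing in $\lambda$ (your coupling argument is a cleaner justification than the paper's bare assertion of monotonicity), so the minimum over $\Omega_1(a)$ sits at the lower endpoint and the refined upper bounds are irrelevant. The gap is in your second step, and it comes from comparing the wrong pair of objects. You take the vanilla pessimistic Q-function to be the additive-penalty form of Algorithm 2, namely $a\,\E_{\widehat\lambda_1(a)}[\min(D,x)]-\delta_1(a)$, and then need a rank-preservation claim relative to the $\lambda$-substitution form $a\,\E_{\widehat\lambda_1^L(a)}[\min(D,x)]$. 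You flag this as the main obstacle but do not resolve it, and in fact it is false: the additive penalty enters the Q-value linearly, while the $\lambda$-shift is amplified by the factor $a$ (the derivative of $a\,\E_\lambda[\min(D,x)]$ in $\lambda$ is $a\,\p_\lambda(D\le x-1)$). For a concrete failure take $x$ large so that $\E_\lambda[\min(D,x)]\approx\lambda$, and two prices $a_1=1<a_2=2$ with $\widehat\lambda_1(a_1)=10$, $\delta_1(a_1)=0.1$, $\widehat\lambda_1(a_2)=5.2$, $\delta_1(a_2)=0.4$ (consistent with monotonicity, and no borrowing occurs): the additive-penalty scores are $9.9$ versus $10.0$, preferring $a_2$, while the $\lambda$-substitution scores are $9.9$ versus $9.6$, preferring $a_1$. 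No case analysis will rescue that route.

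What the paper actually compares in this proposition is two $\lambda$-substitution procedures: the ``vanilla'' one minimizes over the crude per-price interval $[\widehat\lambda_1^l(a),\widehat\lambda_1^u(a)]$ of Equation \eqref{eq:crude CI}, and the refined one over $\Omega_1(a)$ (this is how the vanilla method is used in the example immediately following the proposition). Under that reading the argument is short: both procedures evaluate each price at a lower endpoint; the refined lower endpoint differs from the crude one only when price $a$ borrows it from some strictly higher price $a'$ with $\widehat\lambda_1^l(a')>\widehat\lambda_1^l(a)$, and in that case $a'$ weakly dominates $a$ under both procedures (higher price and a weakly higher lower endpoint, using monotonicity of $a\,\E_\lambda[\min(D,x)]$ in both arguments), so any price whose score is changed by the refinement is never the unique argmax of either procedure and the maximal value is unchanged. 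If one insists on Algorithm 2's additive-penalty definition of the vanilla method, the proposition as stated fails by the counterexample above; that is an ambiguity in the paper rather than something your proof can repair, but the proof you propose does not close the gap under either reading.
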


Intuitively, in the static setting, there is no need to balance the immediate revenue with future revenue. For any fixed price $a$, the expected revenue decreases as the demand CDF, $F(d|a)$, increases for all $d$. Thus, for each price, the worst-case scenario arises when $F(d|a)$ takes its highest value within the interval. Consequently, improvements in the crude lower bounds for certain prices do not impact the estimated policies for both pessimistic methods. Now consider the case where the upper bounds are refined, such that price $a_1$ borrows information from a higher price $a_2$, resulting in both prices sharing the same upper bound. In this scenario, the vanilla pessimistic method will choose $a_2$ because it offers a higher price and a lower crude upper bound; the refined pessimistic approach will also select $a_2$, since $a_1$ and $a_2$ share the same refined upper bound, and $a_1 < a_2$. By a similar argument, the two approaches yield the same value function.

However, this coincidence occurs only in the static setting. In the \textbf{\textit{dynamic}} setting, the two strategies can produce different solutions. This is because, for any fixed price, the expected revenue is not always a decreasing function of the demand CDF (see Proposition \ref{prop: bellman}). We illustrate this using the following example.

\begin{example}
    Suppose the initial inventory level is $1$ and $T=2$. By Proposition \ref{prop: bellman}, the estimated Q-function with $F^{(0)}$ fixed at time $1$ reduces to \begin{align*}
      \widehat Q_1(1,a;F^{(0)})=a +\left[  \widehat V_2(1)-a \right]F^{(0)}.
    \end{align*}  
By Proposition \ref{prop:static pess}, we have $\widehat V_2^{\mathbf{vp}}(1) = \widehat V_2^{\mathbf{pess}}(1)$. Therefore, we 
omit the superscripts and use $\widehat V_2(1)$ and $\widehat Q_1(1,a;F^{(0)})$ for both pessimistic methods. Assume that for any price $a \in \mathcal{A}$,  $\widehat V_2(1)>a$, then $\widehat Q_1(1,a;F^{(0)})$ is an increasing function of $F^{(0)}$. In such cases, the worst-case scenario occurs when $F^{(0)}$ takes its lowest value in the interval. Suppose we have two prices, $a_1<a_2$, and $a_2$ is the true optimal pricing at time $1$; $a_1$ is the price selected by the vanilla pessimistic approach, i.e., $ \widehat Q_1(1,a_1;\widehat F^l_1(0|a_1)) > \widehat Q_1(1,a_2;\widehat F^l_1(0|a_2))$. Consider the scenario that $\widehat F^l_1(0|a_1)>\widehat F^l_1(0|a_2)$, so that the refined lower bound for price $a_2$ is $\widehat F^l_1(0|a_1)$, i.e., borrowing the lower bound from $a_1$. Given that (i) $a_1$ and $a_2$ now share the same lower bound; (ii) the worst-case scenario occurs at this lower bound; and (iii) $a_1<a_2$, the refined pessimistic approach would correctly choose the optimal $a_2$.
\end{example}

We have shown how the two pessimistic approaches differ under the full coverage setting. Next, we differentiate them in a scenario where some prices are unobserved.

\textit{\textbf{Non-full coverage setting}}. If $a \notin \mathcal{A}_t^{\mathcal{D}_N}$, a confidence interval can still be constructed using Equation \eqref{eq:CI} for price $a$, allowing it to be considered in decision-making. In contrast, the vanilla pessimistic approach excludes price $a$ from the optimal pricing strategy. However, the following proposition reveals an interesting phenomenon: in many scenarios, the refined pessimistic approach is unlikely to choose the unobserved price.

\begin{prop} \label{prop:pess unable}
    Suppose $a_1 \notin \mathcal{A}_t^{\mathcal{D}_N}$. At time $t$, if there exists price $a_2 \in \mathcal{A}_t^{\mathcal{D}_N}$ such that the upper bound of the interval for $a_1$ is constructed using $a_2$ as a proxy, then the refined pessimistic approach would prefer $a_2$ over $a_1$, regardless of whether $a_1$ is optimal.
\end{prop}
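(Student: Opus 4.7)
The plan is to show directly that the refined pessimistic Q-value at $a_1$ dominates that at $a_2$, via a two-part argument: a set containment $\Omega_t(a_1)\subseteq\Omega_t(a_2)$ and a pointwise inequality $\widehat Q_t^{\mathbf{pess}}(x,a_1;\lambda)\geq \widehat Q_t^{\mathbf{pess}}(x,a_2;\lambda)$ for every $\lambda$. Since the refined pessimistic rule picks the action maximizing $\widehat Q_t^{\mathbf{pess}}(x,\cdot)=\min_{\lambda\in\Omega_t(\cdot)}\widehat Q_t^{\mathbf{pess}}(x,\cdot;\lambda)$, these two ingredients force the claim.

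First I would unpack the hypothesis. ``The lower bound of $\Omega_t(a_2)$ is constructed using $a_1$'' means $\widehat\lambda_t^L(a_2)=\max_{a'\geq a_2}\widehat\lambda_t^l(a')=\widehat\lambda_t^l(a_1)$, so in particular $a_1\geq a_2$. Since $\{a':a'\geq a_1\}\subseteq\{a':a'\geq a_2\}$, the definition in Equation \eqref{eq:CI} gives $\widehat\lambda_t^L(a_1)\leq \widehat\lambda_t^L(a_2)$; on the other hand $a_1\geq a_1$ is in the set defining $\widehat\lambda_t^L(a_1)$, so $\widehat\lambda_t^L(a_1)\geq \widehat\lambda_t^l(a_1)=\widehat\lambda_t^L(a_2)$. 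Hence $\widehat\lambda_t^L(a_1)=\widehat\lambda_t^L(a_2)$. The same monotone-set argument applied to the upper endpoint, using $\{a'\leq a_2\}\subseteq\{a'\leq a_1\}$, yields $\widehat\lambda_t^U(a_1)\leq\widehat\lambda_t^U(a_2)$. Combining these gives the desired containment $\Omega_t(a_1)\subseteq\Omega_t(a_2)$.

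Next I would establish the pointwise inequality. From Proposition \ref{prop: bellman} (and the definition of $\B_t^\lambda$), for $x\geq 1$ the estimated Q-function decomposes as
\begin{equation*}
\widehat Q_t^{\mathbf{pess}}(x,a;\lambda)=a\cdot\Mean[\min(D_\lambda,x)]+\sum_{d=0}^{x-1}\widehat V_{t+1}^{\mathbf{pess}}(x-d)\,e^{-\lambda}\frac{\lambda^d}{d!},
\end{equation*}
where $D_\lambda\sim\mbox{Poisson}(\lambda)$ and the second summand is free of $a$. Thus
\begin{equation*}
\widehat Q_t^{\mathbf{pess}}(x,a_1;\lambda)-\widehat Q_t^{\mathbf{pess}}(x,a_2;\lambda)=(a_1-a_2)\,\Mean[\min(D_\lambda,x)]\geq 0
\end{equation*}
for every $\lambda\geq 0$, since $a_1\geq a_2$ and $\Mean[\min(D_\lambda,x)]\geq 0$. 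For $x=0$ both sides are zero, so the inequality is trivial.

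Finally I would chain the two ingredients:
\begin{equation*}
\widehat Q_t^{\mathbf{pess}}(x,a_1)=\min_{\lambda\in\Omega_t(a_1)}\widehat Q_t^{\mathbf{pess}}(x,a_1;\lambda)\geq \min_{\lambda\in\Omega_t(a_2)}\widehat Q_t^{\mathbf{pess}}(x,a_1;\lambda)\geq \min_{\lambda\in\Omega_t(a_2)}\widehat Q_t^{\mathbf{pess}}(x,a_2;\lambda)=\widehat Q_t^{\mathbf{pess}}(x,a_2),
\end{equation*}
where the first inequality uses $\Omega_t(a_1)\subseteq\Omega_t(a_2)$ (minimizing over a smaller set can only increase the value) and the second uses the pointwise bound. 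Hence the refined pessimistic rule, which greedily maximizes $\widehat Q_t^{\mathbf{pess}}(x,\cdot)$, weakly prefers $a_1$ to $a_2$ regardless of whether $a_2$ happens to coincide with the true optimal price. There is no substantive obstacle; the only delicate step is the purely combinatorial argument in the first paragraph ensuring $\widehat\lambda_t^L(a_1)=\widehat\lambda_t^L(a_2)$ from the hypothesis that $a_1$ is the proxy, which must be written out carefully so the set-containment direction for the upper bound is not conflated with that for the lower bound.
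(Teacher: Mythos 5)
Your proposal is correct and follows essentially the same route as the paper: establish $\Omega_t(a_1)\subseteq\Omega_t(a_2)$ from the proxy hypothesis, note that $\widehat Q_t^{\mathbf{pess}}(x,a_1;\lambda)\geq\widehat Q_t^{\mathbf{pess}}(x,a_2;\lambda)$ pointwise since $a_1\geq a_2$, and chain the two minima. Your write-up is in fact slightly more careful than the paper's, since you verify the endpoint comparisons from Equation \eqref{eq:CI} explicitly and make the $(a_1-a_2)\E[\min(D_\lambda,x)]$ decomposition of the Q-function difference precise.
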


We provide a brief explanation for Proposition \ref{prop:pess unable}.
If $a_2$ serves as a proxy for $a_1$ in constructing its upper bound, then it must hold that $a_1 < a_2$. By Equation \eqref{eq:CI}, we have that for all $x$, $\Omega_t(x,a_2) \subseteq \Omega_t(x,a_1)$. Thus, for any $F \in \Omega_t(x,a_1)$, $\widehat Q_t(x,a_2;F) \geq \widehat Q_t(x,a_1; F)$. Subsequently, we have  \begin{gather*}
    \min_{F \in \Omega_t(x,a_2)}\widehat Q_t(x,a_2, F)\geq \min_{F \in \Omega_t(x,a_2)}\widehat Q_t(x,a_1, F) \geq \min_{F \in \Omega_t(x,a_1)}\widehat Q_t(x,a_1, F),
\end{gather*} where the last inequality holds because of $ \Omega_t(x,a_2) \subseteq \Omega_t(x,a_1)$, for all $x$. We can thus conclude that the pessimistic strategy can never choose $a_1$ under this specific setting.

Although Proposition \ref{prop:pess unable} suggests that under the specific setting, an optimal price $a$ cannot be selected using a pessimistic approach; this finding does not imply that the refined pessimistic approach cannot improve the vanilla pessimistic approach in the presence of unobserved price. Example \ref{example: pess able} below demonstrates this point.

\begin{example}\label{example: pess able}
    For illustrative purposes, we consider a static setting in which $T=1$. By Proposition \ref{prop: bellman}, the worst-case scenario occurs when $F(\cdot|a)$ takes the value of the upper bound  within the interval, for all $a$. Consider the scenario where the unobserved optimal price  $a^*$ is the highest price in the set $\mathcal{A}$, meaning that no other price provides information to establish a upper bound on $F(\cdot|a^*)$. Given that $a^* \varepsilon   \geq \max_{a \in \mathcal{A}:a \neq a^*} a\widehat F^L(d|a) $, then the refined pessimistic approach would correctly pick $a^*$. 
\end{example}

In summary, our proposed refined pessimistic estimator offers a more accurate approximation of the Q-function for observed prices, reducing decision-making uncertainty. Furthermore, when the unobserved price is the highest, our refined pessimistic approach retains the potential to identify it, unlike the vanilla pessimistic method, which fails in this scenario. This is particularly advantageous, as in most applications, the highest prices are likely unobserved. Lastly, as demonstrated in Theorem \ref{thm: regret 2}, the regret of our refined pessimistic approach is rate optimal in the presence of unobserved price, further highlighting its superiority.

We next examine the opportunistic approach in the setting where some prices are unobserved. The opportunistic method can select any unobserved price if it has the potential for large expected revenue. We illustrate this point using the following example.

\begin{example} \label{example:opp}
    Consider a static setting with  prices $a_1 < a_2 $. If $a_1$ is unobserved, its demand CDF is partially identified within $\left[\varepsilon,\widehat F^U(d|a_2) \right]$. The opportunistic approach will select $a_1$ if \begin{gather} \label{eq:example opp}
        a_2\left[2x-\sum_{d=0}^{x-1} \left(\widehat F^U(d|a_2)+\widehat F^L(d|a_2)\right) \right] \leq a_1\left[2x-\sum_{d=0}^{x-1} \widehat F^U(d|a_2)-x\varepsilon \right].
    \end{gather} 
Similarly, when $a_2$ is unobserved, 
its demand CDF is partially identified within $\left[\widehat F^L(d|a_1), 1- \varepsilon \right],$ the opportunistic approach will select $a_2$ if \begin{gather*}
        a_1\left[2x-\sum_{d=0}^{x-1} \left(\widehat F^U(d|a_1)+\widehat F^L(d|a_1)\right) \right] \leq a_2\left[x-\sum_{d=0}^{x-1} \widehat F^L(d|a_1)+x\varepsilon \right].
    \end{gather*} 
Thus, when either $a_1$ or $a_2$ represents the true optimal price but is unobserved, the refined pessimistic approach will not select them (Proposition \ref{prop:pess unable}). In such cases, the opportunistic approach gives the decision maker a chance to seize the opportunity and achieve higher revenue.
\end{example} Example \ref{example:opp} demonstrates that, unlike the pessimistic approach, the opportunistic approach allows any price to be selected, thereby avoiding overly conservative actions. This advantage is further demonstrated in our numerical results in Section \ref{sec:sims}. While the opportunistic approach can be beneficial when the optimal price is unobserved, this advantage does not make it universally superior. In certain situations, being too aggressive may backfire, making the more cautious pessimistic approach, which offers downside protection, a better choice. 
\begin{example}
    Consider the same setting as in Example \ref{example:opp} with $a_2$ being the true optimal price and $a_1$ unobserved. By Proposition \ref{prop:pess unable}, the refined pessimistic approach will correctly select $a_2$. However, when $\varepsilon$ is small and can only provide a crude bound such that Equation \eqref{eq:example opp} holds, the opportunistic approach would incorrectly select $a_1$. 
\end{example}
The example above illustrates that being overly aggressive can be detrimental. To conclude this section, we note that the effectiveness of either strategy depends on the specific context and the decision-maker's risk tolerance. This is further discussed in Section \ref{sec:discussion}.

\section{Numerical Studies} \label{sec:sims}

\subsection{Synthetic Data} \label{sec:synthetic data}

In this section, we conduct a comprehensive simulation study to evaluate the empirical performance of the proposed methods. Throughout the experiments, we consider a setting with $N=50$ independent trajectories, each initialized with inventory level $X_{i,1}=15$. The action space consists of a discrete price grid $\mathcal{A} = \{1,2,\dots,10\}$, and we evaluate performance over time horizons $T \in \{10, 15, 20\}$. Throughout our implementation, to construct the partial identification bounds, we fix $\varepsilon$ in Equation \ref{eq:CI} at $0.02$ and set $\delta_t(a) = 0.1 \sqrt{\frac{\log N}{N_t(a)}}$. The state transition follows
$X_{i,t+1} = \max\{X_{i,t} - \min(D_{i,t}, X_{i,t}),\, 0\},$
and the reward is given by
$R_{i,t} = \min(D_{i,t}, X_{i,t}) \times A_{i,t}.$

\ZB{
\textbf{\textit{Demand models}}.
We investigate two demand distributions  in our simulations. First, we consider a Poisson model:
\[
D_{i,t} \mid A_{i,t} = a \sim \mathrm{Pois}(\lambda_t(a)),
\qquad \lambda_t(a) = \frac{11 - a}{2}, \quad \forall t.
\]
This specification ensures that the demand distribution (and hence its CDF) is monotone in the price. Second, we consider a Negative Binomial model:
\[
D_{i,t} \mid A_{i,t} = a \sim \mathrm{NB}(\eta_t(a), 10),
\]
where $\eta_t(a)$ denotes the mean demand and $10$ is the dispersion (size) parameter. We specify the mean function as
\[
\eta_t(a) = \frac{14 - 0.6a - 0.05a^2}{4}, \quad \forall t.
\]
This model allows for over-dispersion relative to the Poisson case while preserving a monotone relationship between price and the demand distribution.

\textbf{\textit{Behavior policies}}.
To thoroughly evaluate the robustness of the proposed estimator under a wide range of data-coverage patterns, we examine four scenarios with distinct behavior policies to represent different missing-data mechanisms.

Specifically, the first three policies are constructed by excluding different subsets of prices from the observed data; for all remaining prices in $\mathcal{A}$, the behavior policy assigns uniform sampling probabilities. In addition, we consider a bad policy designed to be suboptimal.

\noindent\textbf{Scenario 1.} Prices $\{1,5,10\}$ are missing.\\[0.5em]
\noindent\textbf{Scenario 2.} Prices $\{2,4,8\}$ are missing.\\[0.5em]
\noindent\textbf{Scenario 3.} Prices $\{3,6,7,9\}$ are missing.\\[0.5em]
\noindent\textbf{Scenario 4.} We consider a suboptimal policy that selects the rounded value of half the optimal price.

This design ensures that, across the four scenarios, every price is excluded at least once, thereby systematically covering a broad spectrum of missingness patterns. 

\textbf{\textit{Competing methods}}.
We compare the proposed method with the oracle optimal policy, the standard pessimistic approach, Conservative Q-Learning (CQL; \citealp{kumar2020conservative}), and Batch-Constrained Q-learning (BCQ; \citealp{fujimoto2019off}). For CQL, we adopt the formulation in Equation (4) of \citet{kumar2020conservative}, using a uniform distribution as the reference distribution. The original BCQ framework in \citet{fujimoto2019off} was designed for continuous action spaces and relies on parametrized models for both the policy and the Q-function. To better adapt the algorithm to our discrete pricing setting, we modify BCQ as follows. At each time step $t$, we generate a candidate action set $\mathcal{A}_t^{\mathrm{BCQ}}$ by sampling $|\mathcal{A}_t^{\mathcal{D}_N}|/2$ actions from the estimated behavior policy $\widehat\pi_t^b(\cdot \mid x)$. The action is then selected by maximizing the estimated Q-function over this restricted set:
\[
\widehat a_t^{\mathrm{BCQ}}(x) = \arg\max_{a \in \mathcal{A}_t^{\mathrm{BCQ}}} \widehat Q_t^{\mathrm{BCQ}}(x,a),
\]
where
\[
\widehat Q_t^{\mathrm{BCQ}}(x,a) = (\widehat{\mathcal{B}}_t \widehat V_{t+1}^{\mathrm{BCQ}})(x,a),
\qquad
\widehat V_t^{\mathrm{BCQ}}(x) = \widehat Q_t^{\mathrm{BCQ}}(x, \widehat a_t^{\mathrm{BCQ}}(x)).
\]
This ensures that the learned policy remains close to the support of the behavior policy.}

\begin{figure}[t]
    \centering
    \includegraphics[width=0.9\textwidth]{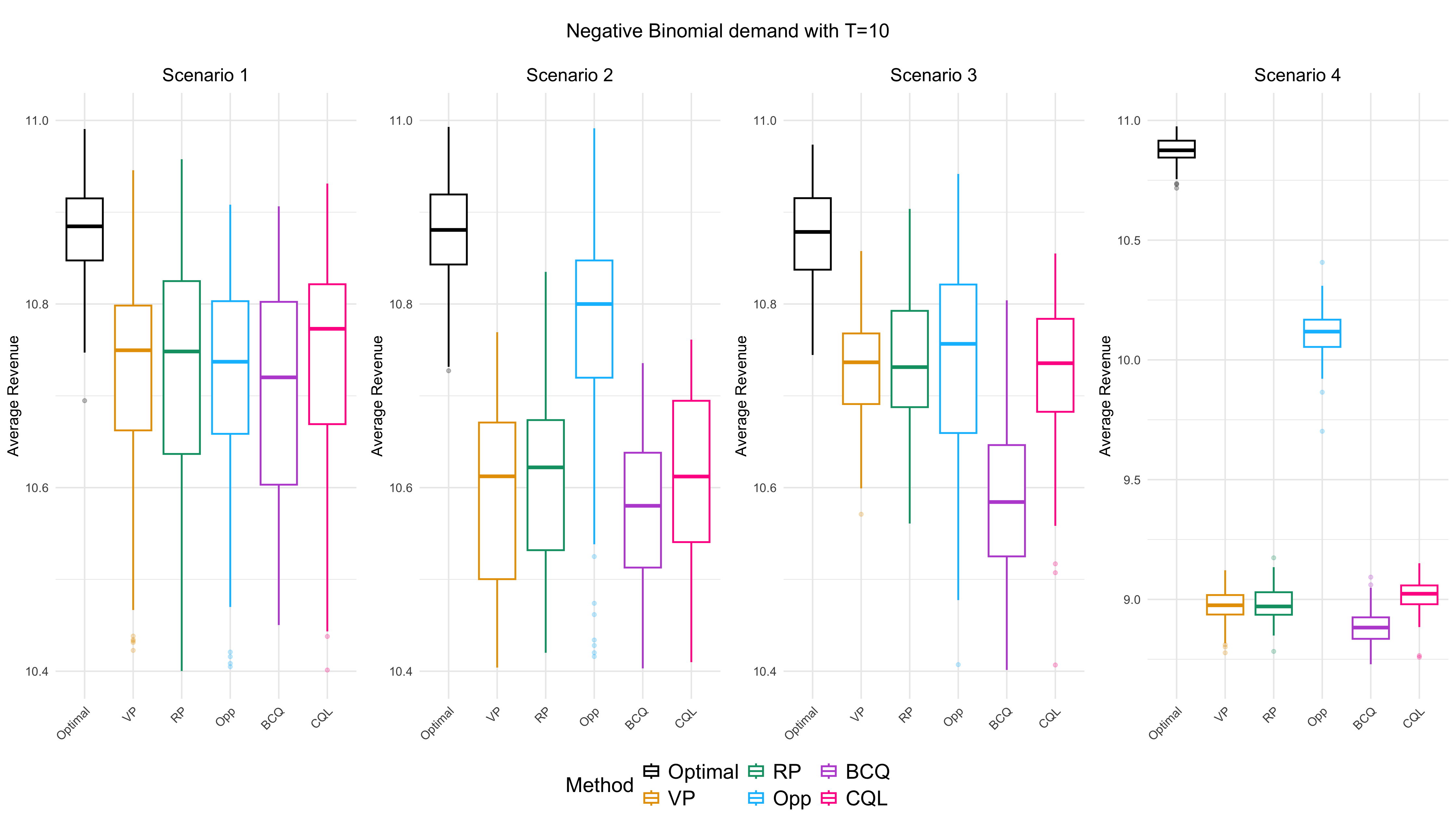}
    \caption{Empirical value functions (average revenues) under four behavior policies with time horizon $T=10$ for the \textbf{\textit{Negative Binomial}} demand model.}
    \label{fig:nb10}
\end{figure}

\begin{figure}[t]
    \centering
    \includegraphics[width=0.9\textwidth]{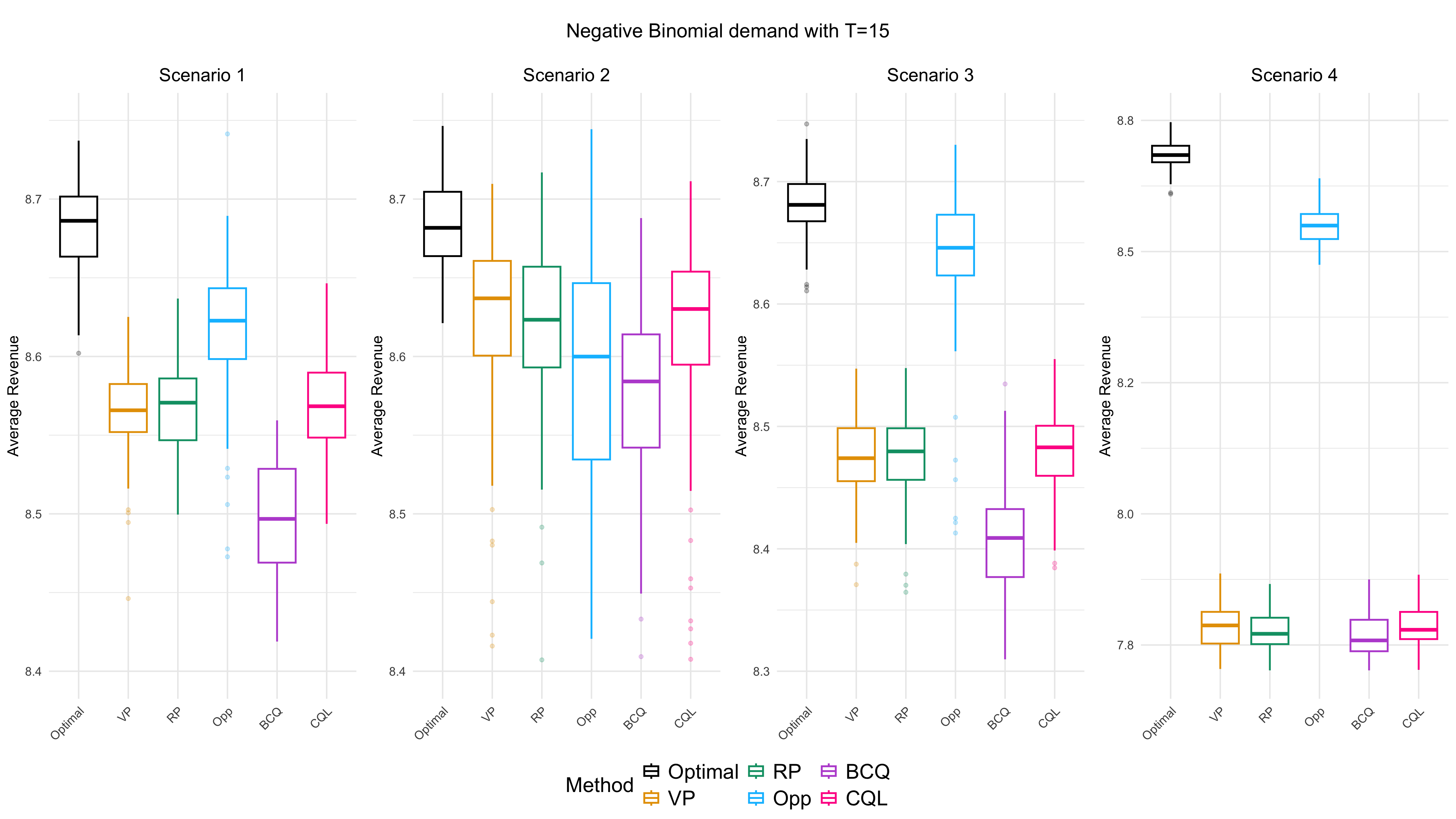}
    \caption{Empirical value functions (average revenues) under four behavior policies with time horizon $T=15$ for the \textbf{\textit{Negative Binomial}} demand model.}
    \label{fig:nb15}
\end{figure}

\begin{figure}[t]
    \centering
    \includegraphics[width=0.9\textwidth]{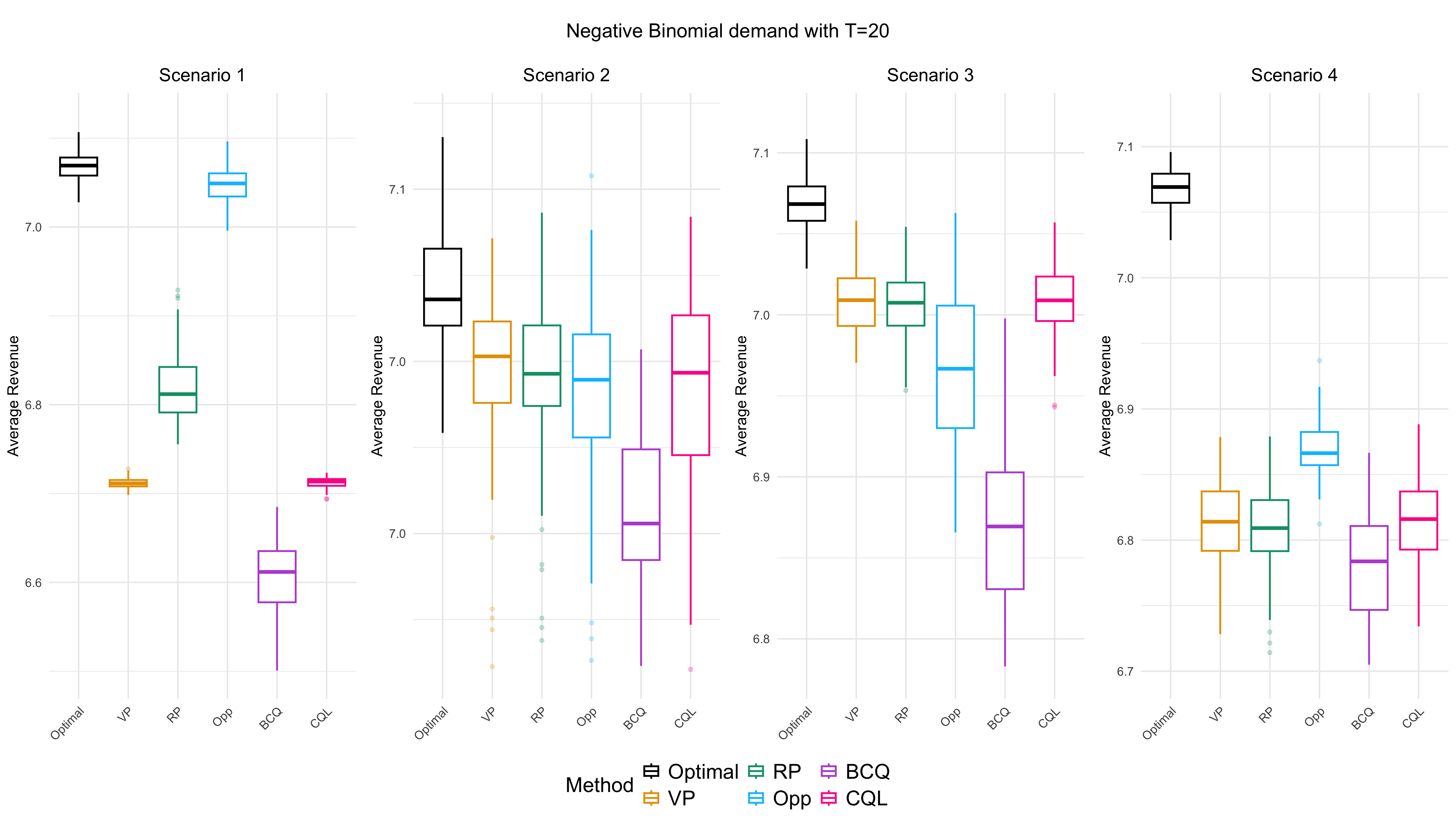}
    \caption{Empirical value functions (average revenues) under four behavior policies with time horizon $T=20$ for the \textbf{\textit{Negative Binomial}} demand model.}
    \label{fig:nb20}
\end{figure}

\begin{figure}[t]
    \centering
    \includegraphics[width=0.9\textwidth]{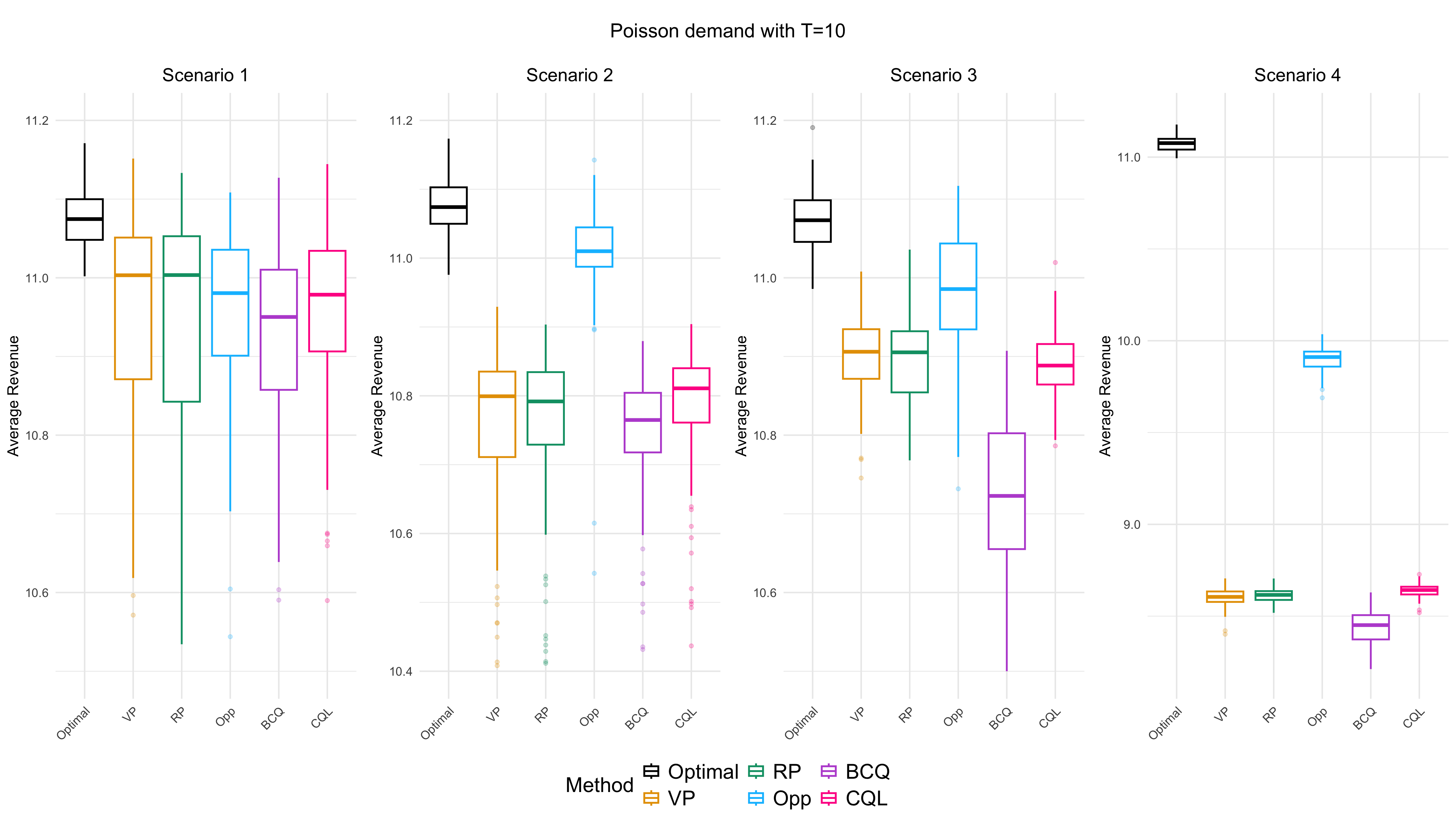}
    \caption{Empirical value functions (average revenues) under four behavior policies with time horizon $T=10$ for the \textbf{\textit{Poisson}} demand model.}
    \label{fig:poisson10}
\end{figure}

\begin{figure}[t]
    \centering
    \includegraphics[width=0.9\textwidth]{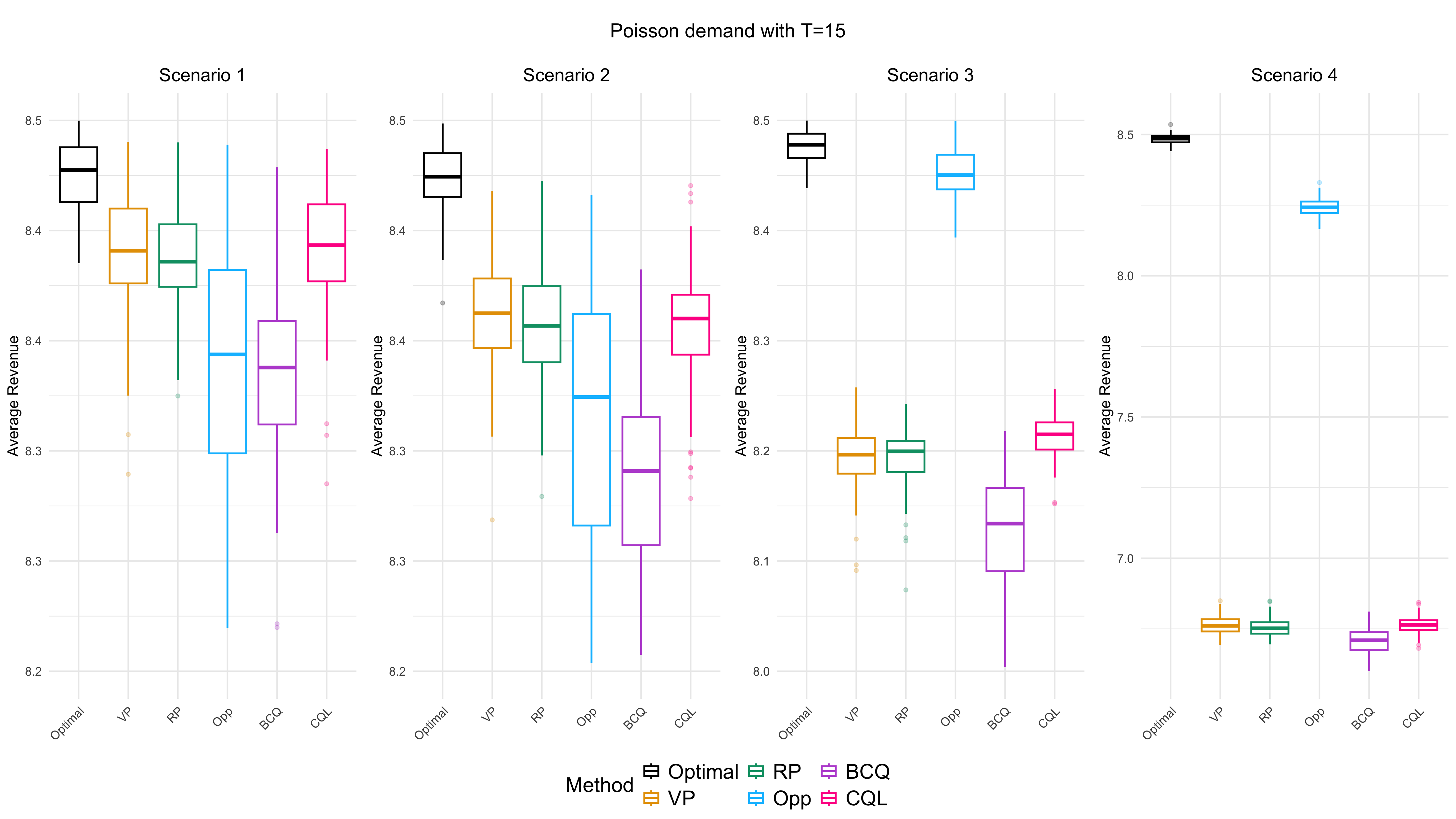}
    \caption{Empirical value functions (average revenues) under four behavior policies with time horizon $T=15$ for the \textbf{\textit{Poisson}} demand model.}
    \label{fig:poisson15}
\end{figure}

\begin{figure}[t]
    \centering
    \includegraphics[width=0.9\textwidth]{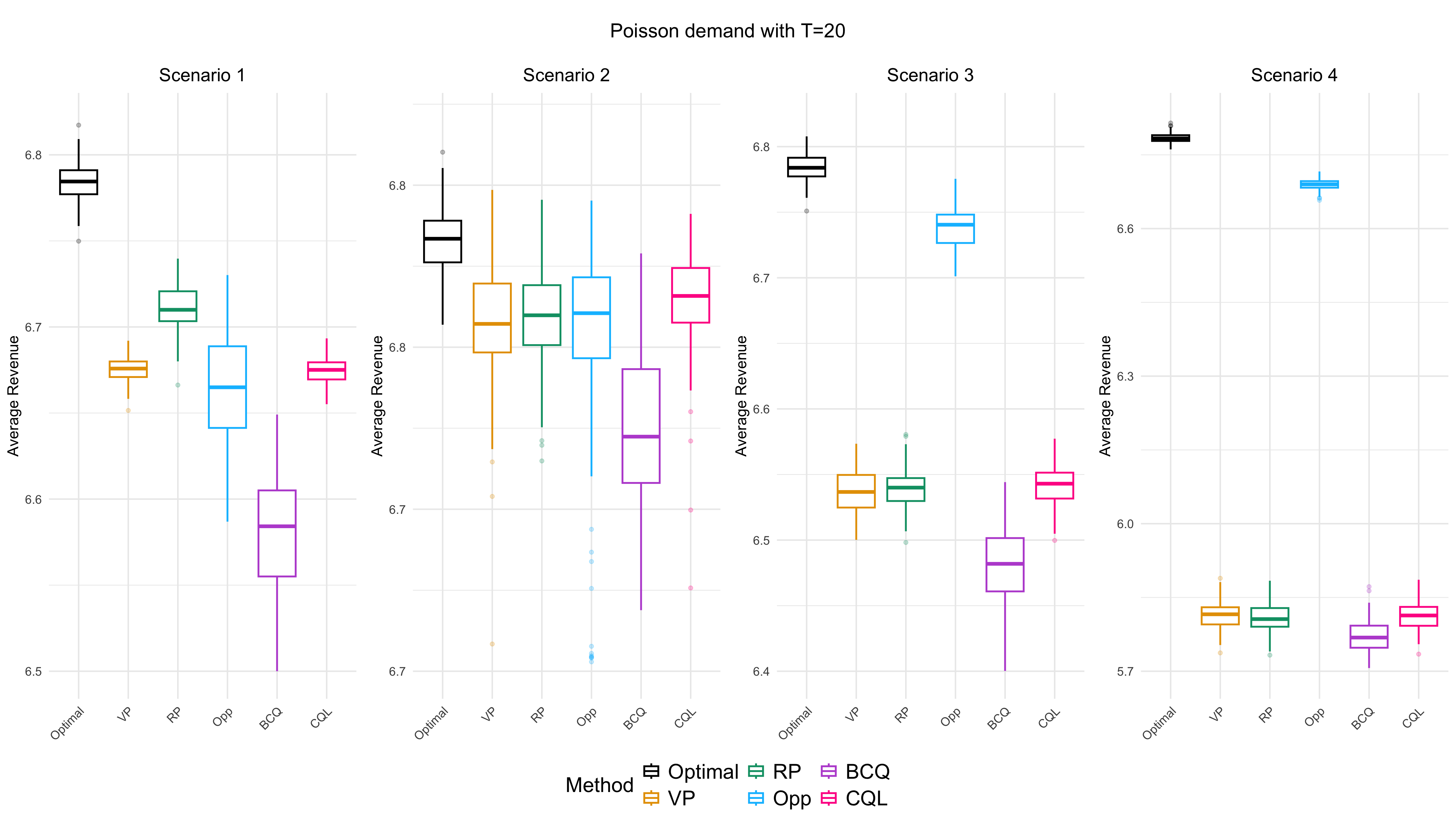}
    \caption{Empirical value functions (average revenues) under four behavior policies with time horizon $T=20$ for the \textbf{\textit{Poisson}} demand model.}
    \label{fig:poisson20}
\end{figure}

Figures \ref{fig:nb10}--\ref{fig:nb20} and \ref{fig:poisson10}--\ref{fig:poisson20} present the empirical value functions (average revenues) of all six methods across four behavior policies for time horizons $T \in \{10,15,20\}$ under the \textbf{\textit{Negative Binomial}} and \textbf{\textit{Poisson}} demand models.

We briefly summarize our main findings. First, the proposed opportunistic method outperforms other methods in most scenarios, especially in the most challenging scenario 4, where the behavior policy is deliberately suboptimal. Moreover, in several settings, the opportunistic method achieves performance comparable to the oracle optimal policy, even when multiple prices are missing from the offline data. For example, this occurs for $T=10$ under behavior policies (ii) and (iii), and for $T=15$ under behavior policy (iii) in the Poisson model, as well as for $T=20$ under behavior policy (i) in the Negative Binomial model. Second, the proposed refined pessimistic method outperforms the vanilla pessimistic approach in several cases, for example, in Scenario 1 when $T=20$ for both the Negative Binomial and Poisson demand models, highlighting the benefit of incorporating our proposed partial identification bounds into the pessimistic strategy.


During the simulations, we also found that the opportunistic approach is able to select missing prices in almost all scenarios. In contrast, the refined pessimistic method can select a missing price only in scenario 1. This observation is fully consistent with our theoretical analysis in Proposition~\ref{prop:pess unable} and Example~\ref{example: pess able}: in scenario 1, the highest price $10$ is missing, and the refined pessimistic approach can select a missing action only when the missing price is the largest  in the candidate price set $\mathcal{A}$. Finally, the vanilla pessimistic method, BCQ, and CQL never select a missing price in any scenario, which is again consistent with their theoretical guarantees and expected behavior.

\subsection{Real Data Analysis} \label{sec:real data}

\ZB{
In this section, we analyze an airline ticket dataset\footnote{The dataset is available at \texttt{https://www.kaggle.com/datasets/dilwong/flightprices}.} from Expedia, focusing on pricing for non-basic economy fares on non-stop flights from New York (JFK) to Charlotte (CLT).

After basic data cleaning, we obtain a dataset with $N=179$ trajectories and $T=10$ time points per trajectory. Each trajectory corresponds to a single flight (i.e., the flight number is fixed within a trajectory), while different trajectories correspond to different flight numbers. Within each trajectory, the $T$ time points correspond to different search dates prior to departure.

We discretize the prices into 10 levels for convenience. For each price level, we report the number of observations (frequency) and the sample-average demand. As shown in Table~\ref{tab:demand} in Appendix, prices 642, 757, 901, and 1272 are relatively under-represented in the offline dataset; in particular, price 757 appears only twice. Accordingly, we treat price 757 as unobserved when evaluating all the methods. Finally, Table~\ref{tab:demand} also suggests that the monotonicity assumption for the mean demand is likely to hold.

\begin{figure}[H]
    \centering
    \includegraphics[width=0.8\textwidth]{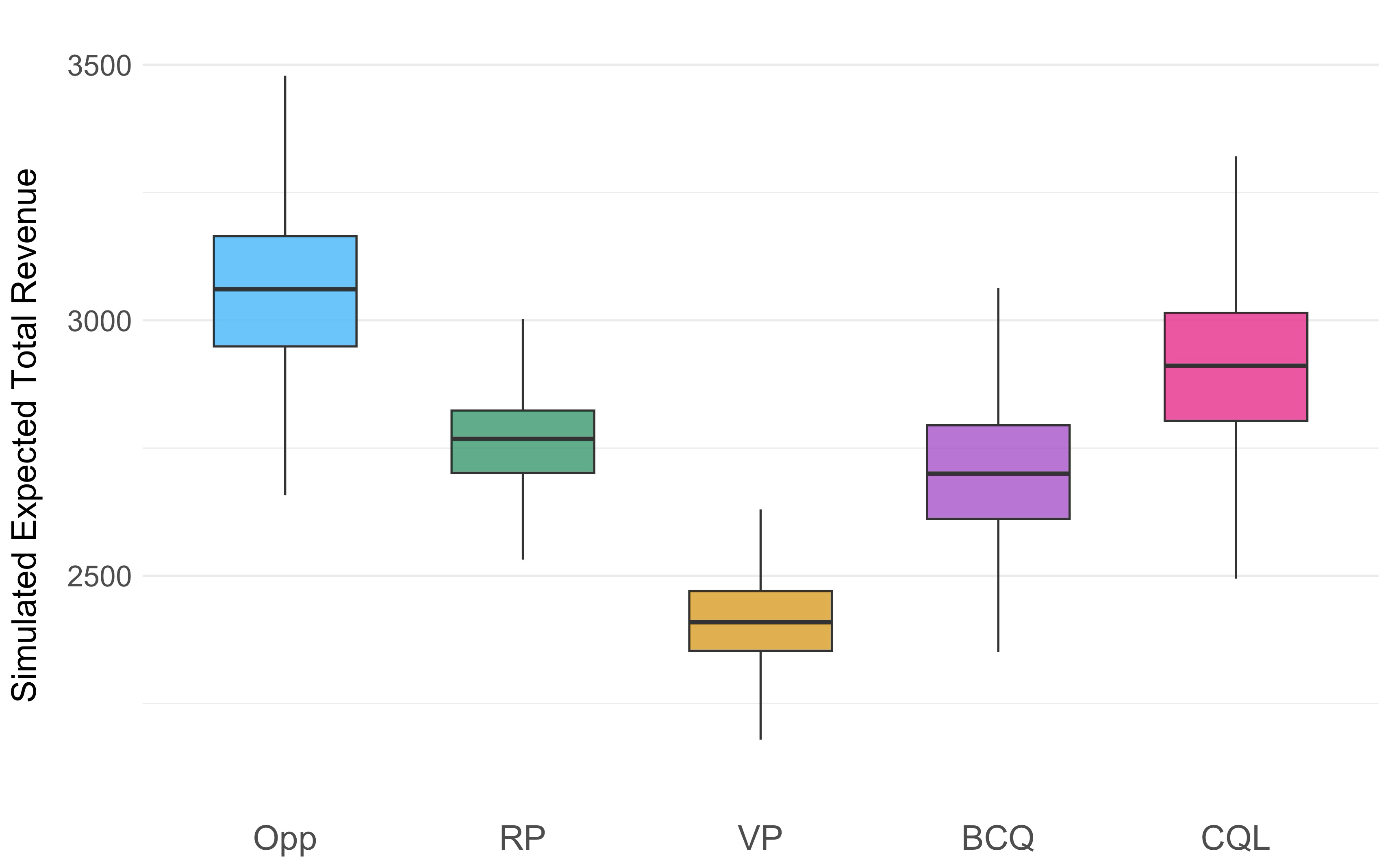}
    \caption{Empirical total revenue across five methods based on 500 Monte Carlo simulations.}
    \label{fig:airline}
\end{figure}

We apply the proposed refined pessimistic and opportunistic approaches to this dataset, and compare them with the vanilla pessimistic approach as well as BCQ and CQL. To evaluate performance, we use Monte Carlo simulation to calculate empirical total revenues under a Poisson demand model calibrated from the offline data. As shown in Figure~\ref{fig:airline}, the proposed opportunistic method achieves the best performance. The refined pessimistic approach ranks third, performing slightly worse than CQL. As before, the vanilla pessimistic approach has the lowest empirical revenue.
Tables~\ref{tab:opp_pricing} and~\ref{tab:cql_pricing} in the Appendix report the estimated optimal prices across time points and inventory levels under the opportunistic approach and CQL, respectively. The comparison shows that the opportunistic approach can select the unobserved price 757 for some states, whereas CQL cannot.
}


\section{Discussion} \label{sec:discussion}

\textit{\textbf{Pessimism versus Opportunism}}. In statistical decision theory, no single strategy uniformly outperforms others, as each approach—whether pessimistic, or opportunistic—brings unique strengths and limitations to decision-making under uncertainty. Pessimism emphasizes robustness against worst-case scenarios, while opportunism aims to maximize expected utility based on available information and opportunities. Each strategy's effectiveness depends on the specific context and risk preferences involved. For small or conservative companies, especially those with limited financial resources, the pessimistic approach might be favored. These companies prioritize avoiding financial losses that could threaten their survival. By focusing on minimizing potential risks and preparing for worst-case scenarios, they aim to maintain stability and ensure continued operations without facing the severe consequences of bankruptcy. 

On the other hand, well-established companies or those with ambitious growth objectives may lean toward the opportunistic approach. These companies have more resources and capacity to withstand risks. They are willing to take calculated chances to maximize potential gains and seize opportunities that align with their growth strategies. This approach allows them to capitalize on favorable market conditions, innovations, or strategic moves that could significantly enhance their market position or profitability. In both cases, the choice of approach reflects not only the company's financial standing and risk tolerance but also its broader strategic goals and outlook for managing uncertainty in the business environment.

\textit{\textbf{Static Interval}}. Unlike classical confidence bounds, the partial identification interval for an unobserved price does not necessarily shrink to the true value as the sample size increases. Consequently, the proposed pessimistic and opportunistic approaches may impose excessive penalties on unobserved but potentially optimal prices. 

That said, given the fundamental challenge posed by unobserved prices, the results in Theorems~\ref{thm: regret 2} and \ref{theorem:regret3} provide the sharpest attainable guarantees in our setting. Any further improvement therefore requires additional structure. We briefly describe several structural assumptions and explain how they can be leveraged to strengthen our results in Section \ref{sec:structure} of the Appendix.

	\textit{\textbf{Feature-based Learning and Continuous Pricing
}}. In this article, we assume that demand depends only on time and price. Future work could further consider the \textbf{\textit{feature-based}} demand-learning problem \citep{broder2012dynamic,keskin2014dynamic,javanmard2019dynamic,ban2021personalized,bastani2022meta,qi2022offline}. In addition, this paper considers only a discrete price space, leaving the more challenging continuous-pricing setting for future investigation. Another interesting direction is how to efficiently combine offline and online learning to achieve sustainable growth and profitability.


\bibliographystyle{apalike}
\bibliography{references}

\newpage

\begin{center}
    {\LARGE\bf Supplemental Materials for \\``A Tale of Two Cities: Pessimism and Opportunism in Offline Dynamic Pricing''}
\end{center}

\section{Extensions and Additional Simulations} \label{sec:extension}

\subsection{Censored Demand}
Earlier analyses assume that demand is fully observed. This assumption holds in some settings, such as online or digital goods markets. However, in some cases, the true demand cannot be observed when the inventory level is lower than demand. In such cases, only $\min(X, D)$ is observed—a phenomenon known as demand censoring \citep{huh2011adaptive,ban2020confidence,qi2022offline,bu2023offline}. Under this situation, one can still apply a partial identification approach to address the unobserved price. However, the procedure for constructing the partial identification set should be modified to account for demand censoring. Specifically, tools that incorporate the censoring mechanism should be used, and we propose employing the Kaplan–Meier estimator \citep{kaplan1958nonparametric} to construct such an interval. We remark that the Kaplan–Meier estimator is commonly adopted to handle censored demand in pricing problems, see, e.g., \citet{huh2011adaptive,bu2023offline}. For a thorough treatment of the Kaplan–Meier estimator and survival analysis, we refer readers to \citet{kalbfleisch2002statistical}.

We now describe how to construct the partial identification interval based on the Kaplan–Meier estimator. For any price $a \in \mathcal{A}_t^{\mathcal{D}_N}$, the Kaplan–Meier estimator for the demand CDF $F_t(d|a)$ is  \begin{gather*}
    \widehat F^{\text{KM}}_t(d|a)= 1-\prod_{i: A_{i,t}=a} \left(1- \frac{\mathds{1}(D_{i,t}=d,D_{i,t} <X_{i,t})}{\mathds{1}(D_{i,t}\geq d)}\right).
\end{gather*} Then, following the same procedure as in Equation~\eqref{eq:CI}, we can construct the interval $\left[\widehat F_t^L(d|a), \widehat F_t^U(d|a)\right]$ based on $\widehat F_t^l$ and $\widehat F_t^u$. Note that here $\widehat F_t^l$ and $\widehat F_t^u$ should be modified accordingly. Specifically, unlike the previous analysis, we now redefine
$\widehat F_t^l(d|a') = \widehat F_t^{\text{KM}}(d|a') - \delta_t(a')$ and $\widehat F_t^u(d|a') = \widehat F_t^{\text{KM}}(d|a') + \delta_t(a')$. Finally, the uncertainty quantifier $\delta_t(a)$ can still be chosen on the order of $c\sqrt{\log N / N_t(a)}$ for some positive constant $c$. This choice is justified by the  concentration inequalities for the Kaplan–Meier estimator, see, e.g., \citet{foldes1981strong, bitouze1999dvoretzky}. After obtaining such a partial identification interval, one can directly apply Algorithms \ref{alg:2} and \ref{alg:3} for the refined pessimistic and opportunistic approaches, respectively.

\subsection{Piecewise Monotonicity Assumption}

The proposed approach relies on the monotonicity assumption. As noted earlier, this assumption is relatively mild, plausible, and it holds in a wide range of applications. Nonetheless, it may not hold for certain products, such as Veblen goods. In this section, we relax the monotonicity assumption to piecewise monotonicity, allowing for a more general and flexible framework. We now formally introduce the following piecewise monotonicity assumption.

\begin{assumption}[Piecewise Monotonicity] \label{assumption:picewise} There exists a known price threshold $a_p$ such that, for any $t$ and any $d$, 
\begin{align*}
    F_t(d \mid a) \le F_t(d \mid a^+), \quad \text{whenever } a < a^+ \le a_p, \\
 \mbox{and }
F_t(d \mid a) \ge F_t(d \mid a^+), \quad \text{whenever } a_p \le a < a^+.
\end{align*}
\end{assumption}

Assumption~\ref{assumption:picewise} relaxes the global monotonicity condition in Assumption~\ref{assumption:mono} by allowing the conditional demand distribution to be monotone in opposite directions below and above a known price threshold $a_p$. When $a_p = a_{\max}$, Assumption~\ref{assumption:picewise} reduces to Assumption~\ref{assumption:mono}. This assumption captures settings where demand decreases with price at lower price levels, as in standard goods, but increases beyond $a_p$ due to prestige effects typical of luxury or Veblen goods. Please refer to Figure \ref{fig:piecewise-mono} for a further demonstration.

\begin{figure}[H]
    \centering
    \begin{tikzpicture}
        \begin{axis}[
            width=0.85\linewidth,
            height=5cm,
            xmin=0, xmax=10,
            ymin=0, ymax=1,
            xlabel={Price $a$},
            ylabel={$F_t(d\mid a)$},
            axis lines=left,
            ticks=none,
            clip=false,
        ]
            \addplot[very thick, blue, smooth] coordinates {
                (0.5,0.15)
                (2,0.30)
                (4,0.55)
                (5,0.70)
                (6,0.62)
                (8,0.40)
                (9.5,0.25)
            };

            \addplot[dashed, gray] coordinates {(5,0) (5,1)};
            \node[anchor=south, gray] at (axis cs:5,1.02) {$a_p$};

            \node[anchor=west, blue] at (axis cs:0.6,0.86) {non-decreasing for $a\le a_p$};
            \node[anchor=west, blue] at (axis cs:5.8,0.12) {non-increasing for $a\ge a_p$};
        \end{axis}
    \end{tikzpicture}
    \caption{Schematic illustration of Assumption~\ref{assumption:picewise}: for any fixed $(t,d)$, the conditional CDF $F_t(d\mid a)$ is monotone non-decreasing in price up to the threshold $a_p$, and monotone non-increasing beyond $a_p$.}
    \label{fig:piecewise-mono}
\end{figure}

We now describe how to construct the partial identification interval based on this assumption. For simplicity, we assume $a_p \in \mathcal{A}_t^{\mathcal{D}_N}$, $\forall t$. 
For any price $a < a_p$, one can directly apply the procedure discussed in the earlier section to construct the interval for $F_t(\cdot|a)$. For $a > a_p$, however, the previously introduced interval $\left[\widehat F_t^L(d \mid a),\, \widehat F_t^U(d \mid a)\right]$ for $F_t(d \mid a)$ should be modified to
:  \begin{align*} 
\begin{split}
   & \widehat F_t^L(d|a)= \begin{cases}
        \max_{a'> a}\widehat F_t^l(d|a'), & \mbox{ if } \left\{\widehat F_t^l(d|a')\right\}_{a'> a} \mbox{ is non-empty },\\
        \varepsilon,& \mbox{ otherwise}.
    \end{cases} \\
      & \widehat F_t^U(d|a)= \begin{cases}
        \min_{a_p \leq a'< a}\widehat F_t^u(d|a'), & \mbox{ if } \left\{\widehat F_t^u(d|a')\right\}_{a_p \leq a'< a} \mbox{ is non-empty },\\
        1-\varepsilon,& \mbox{ otherwise},
    \end{cases}
    \end{split}.
\end{align*} Finally, the interval corresponding to the cut-off price $a_p$ requires special treatment, since for any price $a \neq a_p$, and for all $t$ and $d$, we have $F_t(d \mid a) \le F_t(d \mid a_p)$. Thus, only the lower bound for $F_t(d \mid a_p)$ can be refined, and this result in the following interval for $F_t(d \mid a_p)$: \begin{gather*}
    \widehat F_t^L(d|a_p)= \max_a \widehat F_t^l(d|a), \mbox{ and } \widehat F_t^U(d|a_p)= \widehat F_t^u(d|a_p).
\end{gather*}
After obtaining such a partial identification interval, one can directly apply Algorithms \ref{alg:2} and \ref{alg:3} for the refined pessimistic and opportunistic approaches, respectively.

\subsection{Additional Structural Assumptions} \label{sec:structure}

To tighten identification and improve estimation of $Q_t(x,a)$, one can impose several structurally motivated constraints:  \\
(a) Lipschitz continuity: there exists a positive constant $L$ such that $|Q_t(x,a)-Q_t(x,a')|\le L|a-a'|$ for all $t$, $x$, $a$, and $a'$;  \\
(b) Temporal stability: for some tolerance $\delta\ge 0$, and for all $t$, $a$, and $d$,
\[
|F_t(d\mid a)-F_{t+1}(d\mid a)|\le \delta.
\]

Collectively, these assumptions impose mild smoothness and stability, which regularize estimation and tighten the resulting identification bounds. For example, under Assumption~(a) Lipschitz continuity, for an unobserved price $a$ at time $t$, we can still construct a confidence bound for $Q_t(x,a)$:
\begin{gather*}
    \Bigl[Q_t^L(x,a),Q_t^U(x,a)
    \Bigr],
\end{gather*} 
where \begin{align*}
 Q_t^L(x,a)=\max\bigl[\widehat Q_t(x,a_t^l)-\Delta_t^a(a_t^l),\;\widehat Q_t(x,a_t^u)-\Delta_t^a(a_t^u)\bigr], \\
 Q_t^U(x,a)=\min\bigl[\widehat Q_t(x,a_t^l)+\Delta_t^a(a_t^l),\;\widehat Q_t(x,a_t^u)+\Delta_t^a(a_t^u)\bigr], 
\end{align*} and  $\Delta_t^a(a')=L|a-a'|+\delta_t(a')$.
Based on this confidence bound, we can apply the proposed opportunistic and pessimistic approaches to obtain an estimated policy.

\subsection{Sensitivity Analysis}

We further investigate the robustness of the proposed method when the monotonicity assumption~\ref{assumption:mono} is violated. We again consider a Negative Binomial demand model
\[
D_{i,t} \mid A_{i,t} = a \sim \mathrm{NB}(\eta_t(a), 10),
\]
and we consider two mean functions.

\begin{figure}[H]
    \centering
    \includegraphics[width=\textwidth]{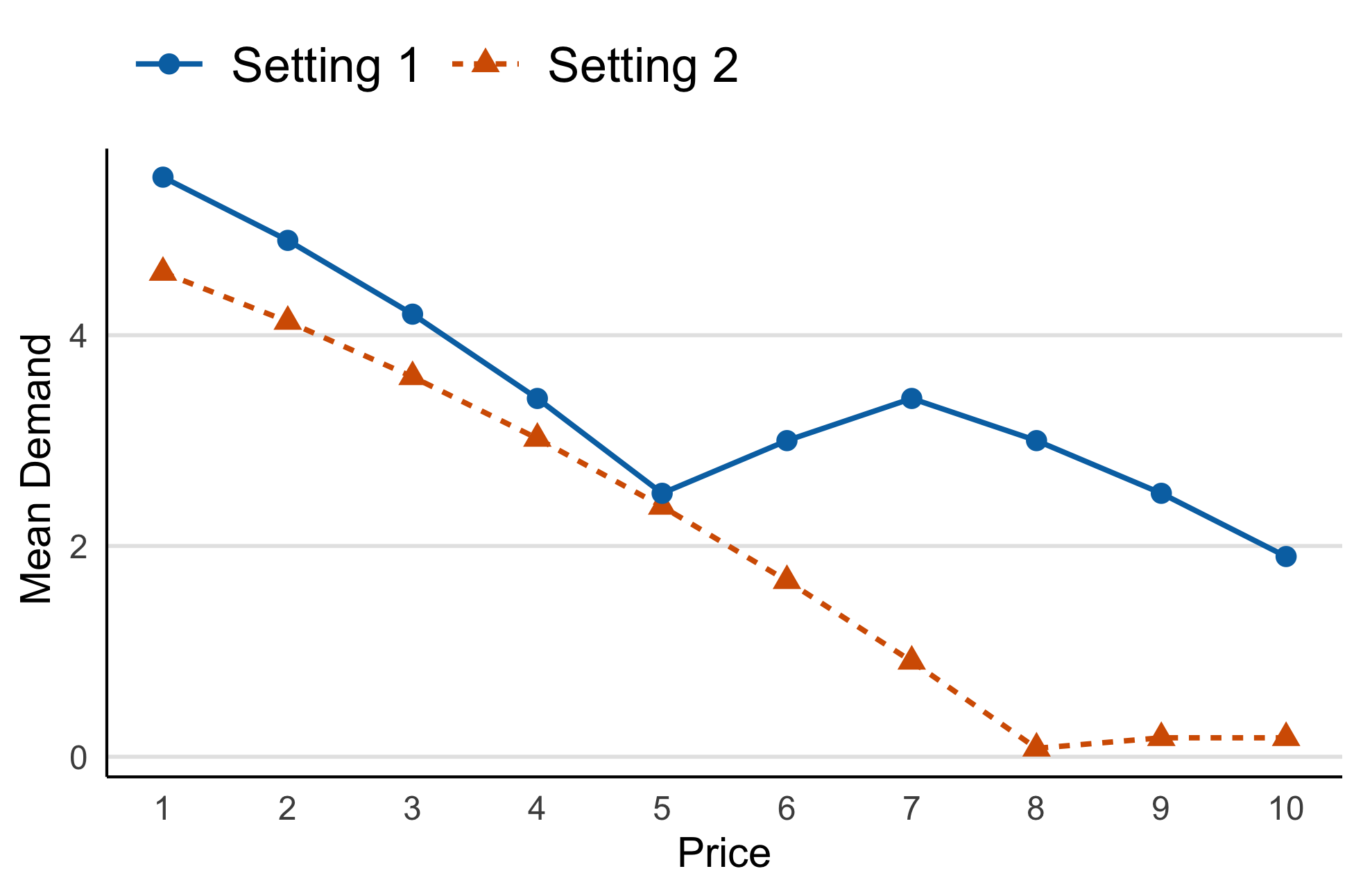}
    \caption{Mean demand curve across two settings.}
    \label{fig:sens_demand}
\end{figure}

In the first setting,
\[
\eta_t(a)
=
\begin{cases}
6 - 0.45a - 0.05a^2, & a \le 5,\\[6pt]
\eta_t(5) + 0.55(a-5) - 0.05(a-5)^2, & 5 < a \le 7,\\[6pt]
\eta_t(7) - 0.35(a-7) - 0.05(a-7)^2, & a > 7,
\end{cases}
\quad \forall t.
\]
In this case, demand decreases with price for $a\le5$, increases over the intermediate range $5<a\le7$, and decreases again for $a>7$.

In the second setting,
\[
\eta_t(a)
=
\begin{cases}
\dfrac{20-1.5a-0.12a^2}{4}, & a \le 8,\\[8pt]
\eta_t(8) + 0.1, & a>8,
\end{cases}
\quad \forall t.
\]
Under this specification, $\eta_t(a)$ decreases with $a$ for $a\le 8$ but exhibits an upward shift for $a>8$. Hence, the mean demand is not globally monotone in price, and Assumption~\ref{assumption:mono} does not hold. See Figure \ref{fig:sens_demand} for the mean demand curves under the two settings. For both settings, we use the same four behavior policies as in the main simulation study. In addition, we fix the initial inventory level at $X_{i,1}=15$ and set the time horizon to $T=10$.

\begin{figure}[H]
    \centering
    \includegraphics[width=\textwidth]{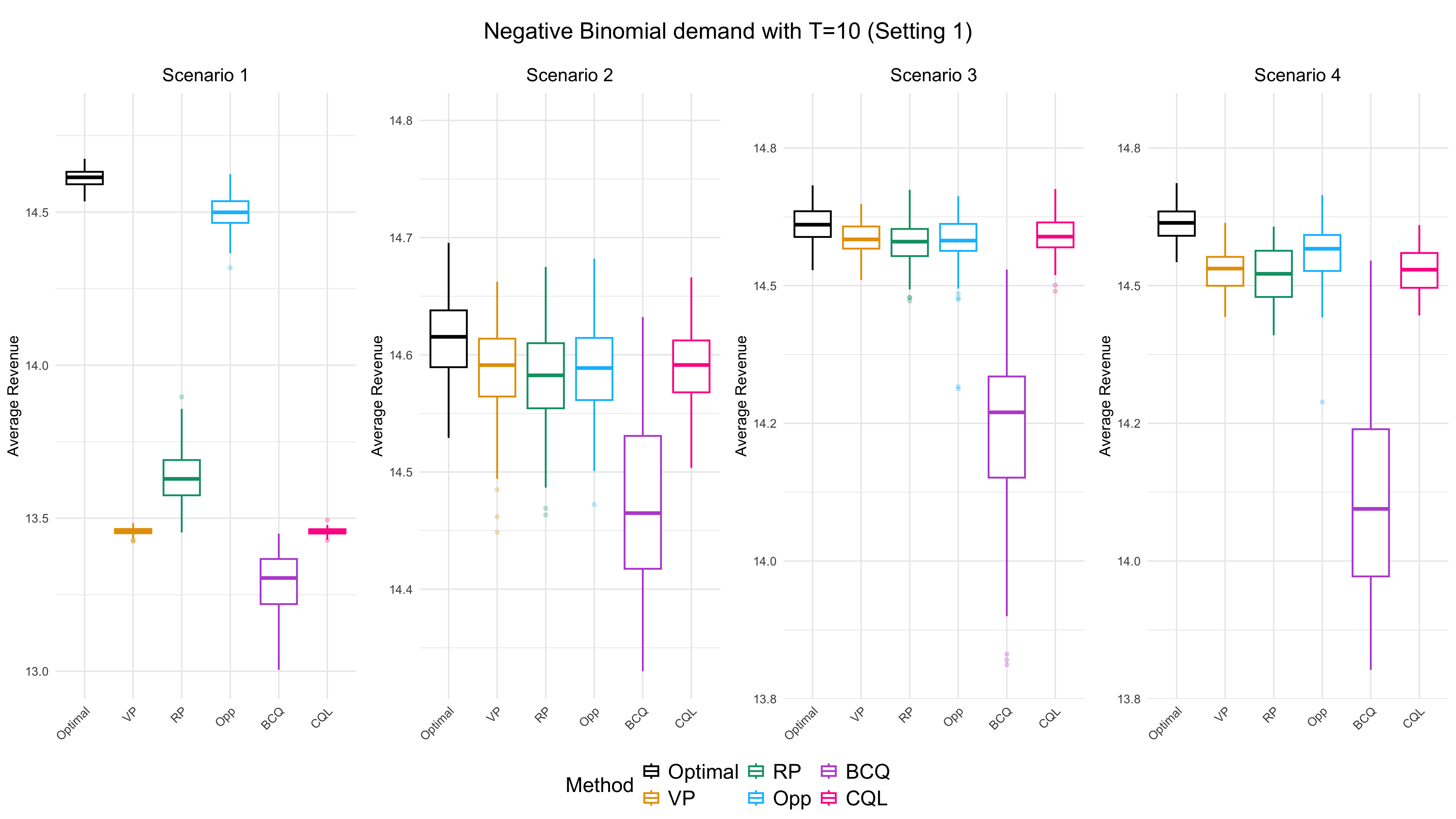}
    \caption{Sensitivity analysis: Empirical value functions (average revenues) under four behavior policies with time horizon $T=10$ for the \textbf{\textit{Negative Binomial}} demand model under Setting 1.}
    \label{fig:sens1}
\end{figure}

The simulation results are summarized in Figures \ref{fig:sens1} and \ref{fig:sens2} for Settings 1 and 2, respectively. In Setting 1, the opportunistic approach still outperforms other methods across all scenarios, and the refined pessimistic method also performs well. This suggests that the opportunistic approach can still be effective even when monotonicity is violated. In Setting 2, unlike in the monotone setting, the opportunistic approach no longer consistently outperforms competing methods across scenarios: it outperforms other methods only in Scenario 3 and is outperformed in Scenario 4. However, the proposed refined pessimistic method remains robust. This indicates that violations of monotonicity can slightly affect the performance of the opportunistic approach.

Setting 2 affects the proposed opportunistic approach more than Setting 1, possibly because the violation of monotonicity in Setting 1 does not affect the optimal price, whereas in Setting 2, the optimal price is located in the non-monotone region. Overall, these results indicate that while the monotonicity assumption is important for the theoretical guarantees of our methods, the methods can still perform reasonably well in practice even when this assumption is violated to some extent.

\begin{figure}[H]
    \centering
    \includegraphics[width=\textwidth]{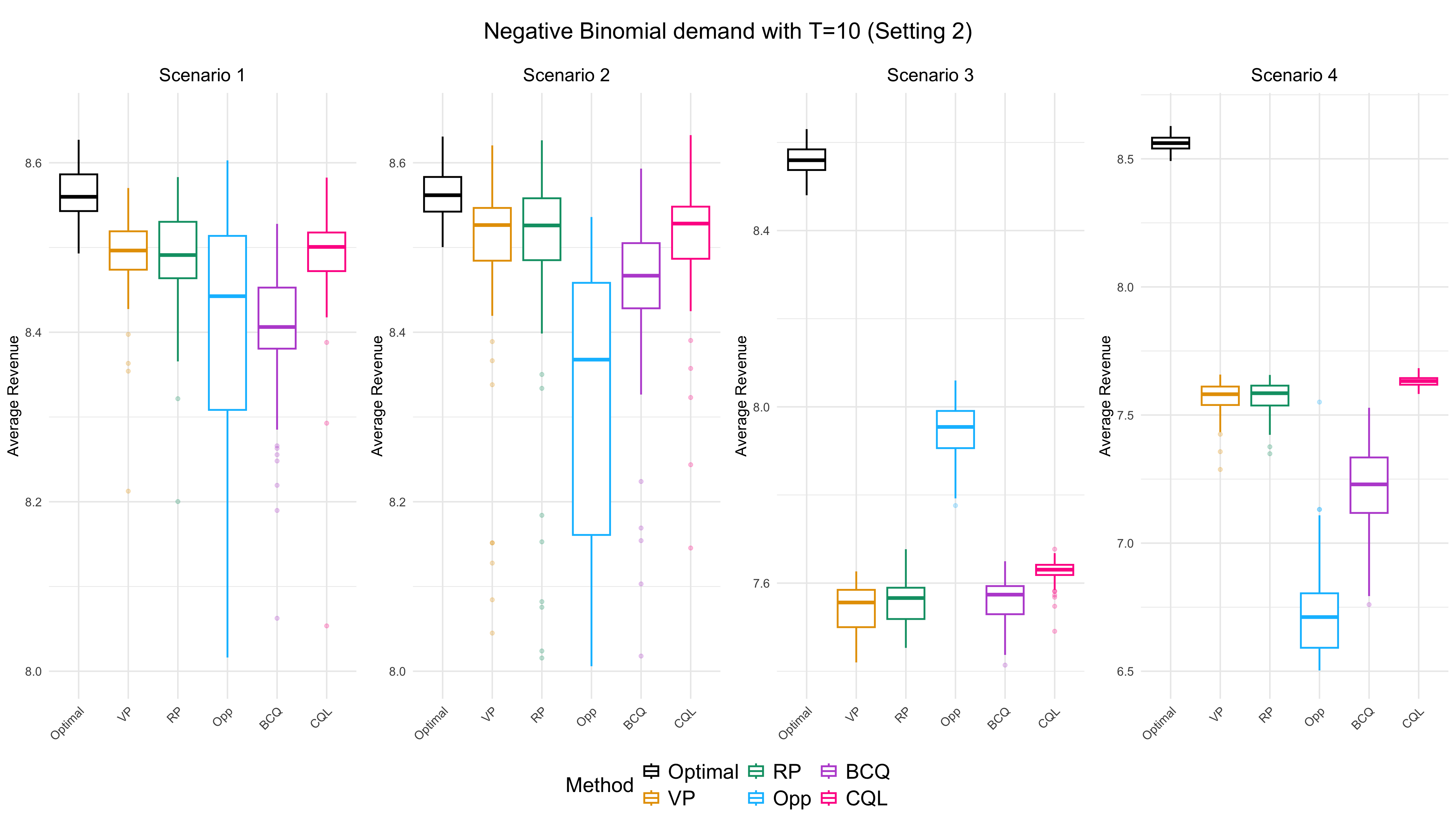}
    \caption{Sensitivity analysis: Empirical value functions (average revenues) under four behavior policies with time horizon $T=10$ for the \textbf{\textit{Negative Binomial}} demand model under Setting 2.}
    \label{fig:sens2}
\end{figure}

\subsection{Additional Details for Airline Tickets Data}

\begin{table}[H]
\centering
\caption{Observed prices with their frequencies and corresponding average demand.}
\label{tab:demand}
\begin{tabular}{ccc}
\toprule
Price & Frequency & Average Demand \\
\midrule
149  & 86  & 0.69 \\
189  & 276 & 0.55 \\
224  & 425 & 0.60 \\
255  & 463 & 0.67 \\
296  & 186 & 0.60 \\
383  & 305 & 0.56 \\
642  & 18  & 0.28 \\
757  & 2   & 0.50  \\
901  & 13  & 0.23 \\
1272 & 16  & 0.25  \\
\bottomrule
\end{tabular}
\end{table} 

\begin{table}[H]
\centering
\caption{Estimated optimal prices over time under the opportunistic approach.}
\label{tab:opp_pricing}
\begin{tabular}{c|ccccccccc}
\toprule
& \multicolumn{9}{c}{Inventory level} \\
\cmidrule(lr){2-10}
Time & 1 & 2 & 3 & 4 & 5 & 6 & 7 & 8 & 9 \\
\midrule
1  & 1272 & 1272 & 1272 & 1272 & 1272 & 1272 & 1272 & 1272 & 1272 \\
2  & 1272 & 1272 & 1272 & 1272 & 1272 & 1272 & 1272 & 1272 & 1272 \\
3  & 1272 & 1272 & 1272 & 1272 & 1272 & 1272 & 1272 & 1272 & 1272 \\
4  & 1272 & 1272 & 1272 & 1272 & 1272 & 1272 & 1272 & 1272 & 1272 \\
5  & 1272 & 1272 & 1272 & 1272 & 1272 & 1272 & 1272 & 1272 & 1272 \\
6  & 1272 & 1272 & 1272 & 1272 & 1272 & 1272 & 1272 & 1272 & 1272 \\
7  & 1272 & 1272 & 1272 & 1272 & 1272 & 1272 & 1272 & 1272 & 1272 \\
8  & 757  & 1272 & 1272 & 1272 & 1272 & 1272 & 1272 & 1272 & 1272 \\
9  & 757  & 1272 & 383  & 1272 & 1272 & 1272 & 1272 & 1272 & 1272 \\
10 & 383  & 383  & 1272 & 1272 & 1272 & 1272 & 1272 & 1272 & 1272 \\
\bottomrule
\end{tabular}
\end{table}

\begin{table}[H]
\centering
\caption{Estimated optimal prices over time under CQL.}
\label{tab:cql_pricing}
\begin{tabular}{c|ccccccccc}
\toprule
& \multicolumn{9}{c}{Inventory level} \\
\cmidrule(lr){2-10}
Time & 1 & 2 & 3 & 4 & 5 & 6 & 7 & 8 & 9 \\
\midrule
1  & 1272 & 1272 & 1272 & 1272 & 1272 & 1272 & 1272 & 1272 & 1272 \\
2  & 901  & 1272 & 1272 & 1272 & 1272 & 1272 & 1272 & 383  & 1272 \\
3  & 1272 & 1272 & 1272 & 1272 & 1272 & 1272 & 1272 & 383  & 1272 \\
4  & 901  & 1272 & 1272 & 1272 & 1272 & 1272 & 1272 & 383  & 1272 \\
5  & 901  & 901  & 1272 & 1272 & 1272 & 1272 & 383  & 1272 & 1272 \\
6  & 901  & 901  & 1272 & 1272 & 1272 & 1272 & 383  & 1272 & 1272 \\
7  & 383  & 1272 & 1272 & 1272 & 1272 & 383  & 1272 & 1272 & 1272 \\
8  & 901  & 1272 & 1272 & 1272 & 901  & 1272 & 1272 & 1272 & 1272 \\
9  & 1272 & 383  & 1272 & 1272 & 1272 & 1272 & 1272 & 1272 & 1272 \\
10 & 1272 & 383  & 1272 & 1272 & 1272 & 1272 & 1272 & 1272 & 1272 \\
\bottomrule
\end{tabular}
\end{table}

\section{Proofs and Auxiliary Lemmas}

Throughout, we use $c$ and $C$ to denote a generic constant that can vary from line to line. Moreover, to ease the notation, sometimes we write the estimated action $\widehat a_t^{\mathbf{pess}}(x)$ and $\widehat a_t^{\mathbf{opp}}(x)$ as $\widehat a_t^{\mathbf{pess}}$ and $\widehat a_t^{\mathbf{opp}}$, respectively, when there is no confusion. In what follows, $\widehat \pi$, $\widehat V_t(x)$ and $\widehat Q_t(x,a)$ all represent generic estimators, which can be obtained using any algorithms.

\subsection{Proof of Proposition \ref{prop: bellman}} \label{appendix:prop}

\begin{proof}

By the Bellman optimality equation \eqref{eq: bellman}, we have \begin{align*}
    Q_t^*(x,a)=& \E \left[  R_t+ V^*_{t+1}(X_{t+1})   |X_t=x,A_t=a \right] \\
    =& \underbrace{\E \left( R_t  |X_t=x,A_t=a \right) }_{\mbox{expected immediate reward}} + \underbrace{\E \left[  V^*_{t+1}(X_{t+1})|X_t=x,A_t=a \right]}_{\mbox{expected future cumulative reward}}.
\end{align*} 
We first derive the term immediate rewards. Denote $M_t$ as the random variable of the number of sold items in time $t$, and $D_t $ the demand for price $a$, so that $M_t=\min(X_t, D_t)$. Thus the reward $R_t= A_t M_t$, and the reward function \begin{align*}
    \E \left( R_t  |X_t=x,A_t=a \right)=a\E \left(  M_t|X_t=x,A_t=a \right).
\end{align*} 

It remains to calculate $\E \left(  M_t|X_t=x,A_t=a \right)$,  \begin{align*}
    &\E \left(  M_t|X_t=x,A_t=a \right)\\
     = & \; \E(D_t|a, D_t \leq x ) \p(D_t \leq x|a) +x\p(D_t \geq x+1|a)\\
     = & \; \sum_{d=0}^x d \; \p(D_t=d|a)   \; \p^{-1}(D_t \leq x|a)  \p(D_t \leq x|a)+x(1-F_t(x|a))\\
     = & \; \sum_{d=0}^x d \; \p(D_t=d|a) +x(1-F_t(x|a))=x-\sum_{d=0}^{x-1}F_t(d|a)
\end{align*} 
Similarly, we have \begin{align*}
   & \E \left[  V^*_{t+1}(X_{t+1})|X_t=x,A_t=a \right]=\sum_{d=0}^{x-1} V_{t+1}^*(x-d) \p(D_t=d|a)\\
     = & \; \sum_{d=0}^{x-1} \left[V_{t+1}^*(x-d)-V_{t+1}^*(x-d-1)\right] F_t(d|a).
\end{align*} Here, the summation goes up to $x-1$, using the fact that $V_t(0)=0$ for any $t$.

Finally, we have \begin{align*}
    Q_t^*(x,a)=a\left[x-\sum_{d=0}^{x-1}F_t(d|a)\right]+\sum_{d=0}^{x-1} \left[V_{t+1}^*(x-d)-V_{t+1}^*(x-d-1)\right] F_t(d|a).
\end{align*} \end{proof}

\subsection{Proposition \ref{prop:pess} and its Proof}

\begin{prop} \label{prop:pess}
    There exists some universal constant $c$, such that with probability at least $1-\sum_{t=1}^T \left( \frac{|\mathcal{A}_t^{\mathcal{D}_N}|}{N}+\kappa_t \right)$, \begin{gather*}
        \widehat{Q}_t^{\mathbf{pess}}(x,a)\leq Q_t^*(x,a), \mbox{ for all } t, x, \mbox{ and } a.
    \end{gather*}
\end{prop}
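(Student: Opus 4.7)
The plan is to establish the pessimism bound by a backward induction from $t=T+1$ down to $t=1$, working on the high-probability event $\mathcal{E}$ that the coverage $\lambda_t(a) \in \Omega_t(a)$ holds simultaneously for all $t$ and all $a \in \mathcal{A}$. The whole proposition reduces to two monotonicity facts about the Bellman operator, once this event is in hand.

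First I would bound $\prob(\mathcal{E})$. For each observed price $a \in \mathcal{A}_t^{\mathcal{D}_N}$, a Hoeffding-type inequality (using the bounded demand implied by Assumption \ref{assumption:poisson}(iii), together with a truncation argument or a Bennett/Chernoff bound for Poisson tails) gives $|\widehat\lambda_t(a) - \lambda_t(a)| \leq \delta_t(a)$ with probability at least $1 - cN_t(a)^{-1}$. On the intersection of these events, Assumption \ref{assumption:mono} and the construction in \eqref{eq:CI} imply $\lambda_t(a') \in [\widehat\lambda_t^L(a'), \widehat\lambda_t^U(a')]$ for every $a' \in \mathcal{A}$, including unobserved prices, since any proxy $a \geq a'$ used in the upper bound satisfies $\lambda_t(a) \leq \lambda_t(a')$ and symmetrically for lower bounds; the fallbacks $\lambda_{min}$ and $\lambda_{max}$ are valid by assumption. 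A union bound over $t$ and $a \in \mathcal{A}_t^{\mathcal{D}_N}$ then yields $\prob(\mathcal{E}) \geq 1 - c\sum_{t=1}^T \sum_{a \in \mathcal{A}_t^{\mathcal{D}_N}} N_t(a)^{-1}$.

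On $\mathcal{E}$, I would prove by backward induction that $\widehat V_t^{\mathbf{pess}}(x) \leq V_t^*(x)$ and $\widehat Q_t^{\mathbf{pess}}(x,a) \leq Q_t^*(x,a)$ for all $t$, $x$, $a$. The base case $\widehat V_{T+1}^{\mathbf{pess}} \equiv 0 \equiv V_{T+1}^*$ is immediate. For the inductive step, since $\lambda_t(a) \in \Omega_t(a)$, plugging this specific value into the minimum gives
\begin{equation*}
\widehat Q_t^{\mathbf{pess}}(x,a) = \min_{\lambda \in \Omega_t(a)}(\B_t^\lambda \widehat V_{t+1}^{\mathbf{pess}})(x,a) \leq (\B_t^{\lambda_t(a)} \widehat V_{t+1}^{\mathbf{pess}})(x,a) = (\B_t \widehat V_{t+1}^{\mathbf{pess}})(x,a).
\end{equation*}
The explicit form of $\B_t$ in Proposition \ref{prop: bellman} shows that it is monotone in its value-function argument pointwise in $x$, since the continuation piece $\sum_{d=0}^{x-1} g(x-d)e^{-\lambda_t(a)}\lambda_t(a)^d/d!$ is a nonnegative linear functional of $g$. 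Combined with the inductive hypothesis $\widehat V_{t+1}^{\mathbf{pess}} \leq V_{t+1}^*$, this gives $(\B_t \widehat V_{t+1}^{\mathbf{pess}})(x,a) \leq (\B_t V_{t+1}^*)(x,a) = Q_t^*(x,a)$. Finally, $\widehat V_t^{\mathbf{pess}}(x) = \widehat Q_t^{\mathbf{pess}}(x,\widehat a_t^{\mathbf{pess}}) \leq Q_t^*(x,\widehat a_t^{\mathbf{pess}}) \leq V_t^*(x)$ closes the induction.

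The only genuinely nontrivial step is verifying the coverage event $\mathcal{E}$, because the refined intervals in \eqref{eq:CI} are built as running minima and maxima of shifted Hoeffding bounds across different prices, and one must argue that a single noise realization makes $\lambda_t(a)$ lie below every valid upper candidate and above every valid lower candidate simultaneously. This is where the monotonicity in Assumption \ref{assumption:mono} is used crucially, together with the union bound yielding exactly the $\sum_t \sum_a N_t(a)^{-1}$ factor. The remainder is the standard ``pessimism by induction'' template and requires no further computation.
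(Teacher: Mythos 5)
Your proposal is correct and follows essentially the same backward-induction argument as the paper: establish the coverage event $\lambda_t(a)\in\Omega_t(a)$ for all $t,a$ via Hoeffding plus a union bound, then in the inductive step combine evaluation of the minimum at the true $\lambda_t(a)$ with monotonicity of the Bellman operator in its value-function argument. The only nitpick is that in justifying coverage for unobserved prices you swap the roles of the proxies (by Equation \eqref{eq:CI}, upper bounds for $\lambda_t(a)$ are borrowed from prices \emph{below} $a$ and lower bounds from prices \emph{above} $a$), but the symmetric argument is clearly what you intend and the conclusion is unaffected.
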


\begin{proof}

Note that the above inequality holds for all $a \in \mathcal{A}$, because we are able to construct a confidence interval $\Omega_t(x,a)$ for all $a \in \mathcal{A}$. 

We use mathematical induction to prove this argument. At time point $T+1$, by definition, we have $\widehat{V}_{T+1}^{\mathbf{pess}}(x)= V_{T+1}^*(x)=0$. Then at time $T$, it follows that \begin{align*}
    & \widehat{Q}_T^{\mathbf{pess}}(x,a)=\min_{F \in \Omega_t(x,a)} (\B_T^F \widehat{V}_{T+1})(x,a) =\min_{F \in \Omega_t(x,a)} (\B_T^F V^*_{T+1})(x,a) \\
    \leq \; & (\B_TV^*_{T+1})(x,a)={Q}_T^*(x,a),
\end{align*} where the inequality holds because, for all $a$, $d,$ and $F_t(d|a) \in \Omega_t(x,a)$ with probability at least $1-\sum_{t=1}^T  \frac{|\mathcal{A}_t^{\mathcal{D}_N}|}{N}$.

Now assume $\widehat{Q}_{t+1}^{\mathbf{pess}}(x,a)\leq Q_{t+1}^*(x,a),$ it remains to show that 
$\widehat{Q}_{t}^{\mathbf{pess}}(x,a)\leq Q_{t}^*(x,a).$ Since $\widehat{Q}_{t+1}^{\mathbf{pess}}(x,a)\leq Q_{t+1}^*(x,a),$ we also have $\widehat{V}_{t+1}^{\mathbf{pess}}(x)\leq V_{t+1}^*(x),$ it then follows that \begin{align*}
    & \widehat{Q}_{t}^{\mathbf{pess}}(x,a)=\min_{F \in \Omega_{t}(a)} (\B_{t}^F \widehat{V}^{\mathbf{pess}}_{t+1})(x,a) \\
    \leq \; & \min_{F \in \Omega_{t}(a)} (\B_{t}^F V^*_{t+1})(x,a) \\
    \leq \; & (\B_{t}V^*_{t+1})(x,a)={Q}_{t}^*(x,a),
\end{align*} where the first inequality holds 
by using the property of the Bellman operator, i.e., \begin{align*}
    \widehat{V}_{t+1}^{\mathbf{pess}}(x)\leq V_{t+1}^*(x) \implies (\B_{t}^F \widehat{V}^{\mathbf{pess}}_{t+1})(x,a) \leq (\B_{t}^F V^*_{t+1})(x,a).
\end{align*} The last inequality holds  with probability at least $1-c\sum_{k=t}^T\frac{|\mathcal{A}_k^{\mathcal{D}_N}|}{N}$. The proof is completed by applying the union bound for all $t$.
\end{proof}

\subsection{Proof of Lemma \ref{lemma:decomp}} \label{appendix:decomp}

\begin{proof}

Recall that $\mu^{\widehat \pi}=\E\left[V_1^*(X_1)-V_1^{\widehat \pi}(X_1)\right]$, and the aim is to show that \begin{align*}
        \mu^{\widehat \pi} = \underbrace{\sum_{t=1}^T \E^{\pi^*}[ l_t(X_t,A_t) ]}_{J_1}- \underbrace{\sum_{t=1}^T \E^{\widehat \pi}[ l_t(X_t,A_t) ]}_{J_2} +\underbrace{\sum_{t=1}^T \E^{\pi^*}\left[ \sum_{a\in \mathcal A} \widehat Q_t(X_t,a)\left(\pi^*_t(a|X_t)-\widehat\pi_t(a|X_t)\right) \right]}_{J_3},
    \end{align*} where $l_t(x,a)=( \B_t \widehat V_{t+1})(x,a)- \widehat Q_{t}(x,a).$ Note that the expectations $\E^{\pi^*}$ and $\E^{\widehat \pi}$ are taken with respect to the trajectories induced by the optimal policy and the estimated policy, respectively, with the functions $\widehat V_t(\cdot)$ and $\widehat Q_t(\cdot,\cdot)$ 
    being held fixed.
    
    By the definition of the Bellman operator, we have \begin{align*} 
       &\E^{\pi^*}[ l_t(X_t,A_t) ]=  \E^{\pi^*}\left[\E\left(R_t+\widehat V_{t+1}(X_{t+1})|X_t,A_t \right)-\widehat Q_{t}(X_t,A_t) \right] \\
      = \; & \E^{\pi^*}\left(R_t \right)+\E^{\pi^*}\left[ \widehat V_{t+1}(X_{t+1}) \right]-\E^{\pi^*}\left[\widehat Q_{t}(X_t,A_t) \right],
    \end{align*}  where the last equality follows the law of iterated expectations. Thus, we have \begin{align*} 
       &\E^{\pi^*}[ l_t(X_t,A_t) ]+\E^{\pi^*}[ l_{t+1}(X_{t+1},A_{t+1}) ] \\
      = \; & \E^{\pi^*}\left(R_t \right)-\E^{\pi^*}\left[\widehat Q_{t}(X_t,A_t) \right]+\E^{\pi^*}\left(R_{t+1} \right)+\E^{\pi^*}\left[ \widehat V_{t+2}(X_{t+2}) \right]\\
      &\; + \underbrace{\E^{\pi^*}\left[ \widehat V_{t+1}(X_{t+1}) \right]-\E^{\pi^*}\left[\widehat Q_{t+1}(X_{t+1},A_{t+1}) \right]}_{=\E^{\pi^*}\left[\sum_{a\in \mathcal A} \widehat Q_{t+1}(X_{t+1},a)\left(\widehat\pi_t(a|X_{t+1})-\pi^*_t(a|X_{t+1})\right)\right]}.
    \end{align*}
Thus, term $J_1$ is equivalent to \begin{align*}
    &\sum_{t=1}^T \E^{\pi^*}[ l_t(X_t,A_t) ]\\
    =\; & \underbrace{\sum_{t=1}^T \E^{\pi^*}\left(R_t \right)}_{=\E(V_1^*(X_1))}+\sum_{t=2}^T \E^{\pi^*}\left[ \sum_{a\in \mathcal A} \widehat Q_t(X_t,a)\left(\widehat\pi_t(a|X_t)-\pi^*_t(a|X_t)\right) \right]-\E^{\pi^*}\left[\widehat Q_{1}(X_1,A_1) \right].
\end{align*} The term $J_2$ can be analogously showed to be \begin{align*}
    &\sum_{t=1}^T \E^{\widehat\pi}[ l_t(X_t,A_t) ]\\
    =\; & \underbrace{\sum_{t=1}^T \E^{\widehat\pi}\left(R_t \right)}_{\E(V_1^{\widehat\pi}(X_1))}+\underbrace{\sum_{t=2}^T \E^{\widehat\pi}\left[ \sum_{a\in \mathcal A} \widehat Q_t(X_t,a)\left(\widehat\pi_t(a|X_t)-\widehat\pi_t(a|X_t)\right) \right]}_{0}-\E^{\widehat\pi}\left[\widehat Q_{1}(X_1,A_1) \right].
\end{align*} Thus $J_1-J_2$ is equal to \begin{align*}
    & \underbrace{\E(V_1^{*}(X_1))-\E(V_1^{\widehat\pi}(X_1))}_{=\mu^{\widehat \pi}}+\sum_{t=2}^T \E^{\pi^*}\left[ \sum_{a\in \mathcal A} \widehat Q_t(X_t,a)\left(\widehat\pi_t(a|X_t)-\pi^*_t(a|X_t)\right) \right]\\
    +\;& \underbrace{\E^{\widehat\pi}\left[\widehat Q_{1}(X_1,A_1) \right]-\E^{\pi^*}\left[\widehat Q_{1}(X_1,A_1) \right]}_{=\E^{\pi^*}\left[ \sum_{a\in \mathcal A} \widehat Q_1(X_1,a)\left(\widehat\pi_1(a|X_1)-\pi^*_1(a|X_1)\right) \right]}\\
    =\; & \mu^{\widehat \pi}+\sum_{t=1}^T \E^{\pi^*}\left[ \sum_{a\in \mathcal A} \widehat Q_t(X_t,a)\left(\widehat\pi_t(a|X_t)-\pi^*_t(a|X_t)\right) \right].
\end{align*} Finally, we have $J_1-J_2+J_3=\mu^{\widehat \pi}$, this completes the proof.
\end{proof}

\subsection{Lemma \ref{lemma:qerror} and its Proof} \label{appendix:lemma2}

\begin{lemma} \label{lemma:qerror}
For any $t,$ $x \in \mathcal{X}$, $a \in \mathcal A $, and  $F \in \Omega_t(x,a) $,
    $$\left|(\B_t \widehat V_{t+1})(x,a)-\widehat Q_{t}(x,a;F)\right| \leq c|F_t(d|a)-F^{(d)}| .$$
\end{lemma}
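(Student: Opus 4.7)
The plan is to view both sides of the target inequality as values of the same function at different arguments. Defining $f(\mu) := \widehat Q_t(x,a;\mu)$, Proposition \ref{prop: bellman} gives $(\B_t \widehat V_{t+1})(x,a) = f(\lambda_t(a))$, while $\widehat Q_t(x,a;\lambda) = f(\lambda)$ by construction. Hence the lemma reduces to showing that $f$ is Lipschitz in $\mu$ on $[\lambda_{\min},\lambda_{\max}]$ with a constant $c$ independent of $(t,x,a)$, after which the mean value theorem yields the stated bound directly.

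To bound $|f'(\mu)|$, I would first rewrite $f$ in a probabilistic form using the boundary convention $\widehat V_{t+1}(0)=0$:
\begin{equation*}
    f(\mu) \;=\; a\,\E_{D\sim\text{Poisson}(\mu)}[\min(D,x)] \;+\; \E_{D\sim\text{Poisson}(\mu)}\bigl[\widehat V_{t+1}(x - \min(D,x))\bigr].
\end{equation*}
Each term is of the form $\E_{D\sim\text{Poisson}(\mu)}[h(D)]$ for a bounded $h:\mathbb{Z}_{\geq 0}\to\R$. Using the elementary identity $\tfrac{d}{d\mu}\bigl[e^{-\mu}\mu^d/d!\bigr] = p_{d-1}(\mu) - p_d(\mu)$ (with $p_{-1}\equiv 0$) and re-indexing, I would derive the clean summation-by-parts formula
\begin{equation*}
\frac{d}{d\mu}\,\E_{D\sim\text{Poisson}(\mu)}[h(D)] \;=\; \E_{D\sim\text{Poisson}(\mu)}\bigl[h(D+1) - h(D)\bigr],
\end{equation*}
so the Lipschitz constant of each piece is controlled by $\sup_d |h(d+1)-h(d)|$. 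For the immediate-reward piece this difference equals $a\cdot\mathds{1}(d<x)\leq a_{\max}$; for the future-reward piece it is at most $2\sup_y|\widehat V_{t+1}(y)|$.

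The remaining technical point is a deterministic bound on $|\widehat V_{t+1}|$. This follows by a quick backward induction starting from $\widehat V_{T+1}\equiv 0$: at every step the estimate is obtained by plugging values $\lambda\in[\lambda_{\min},\lambda_{\max}]$ into the Bellman recursion of Proposition \ref{prop: bellman}, which preserves the crude bound $|\widehat V_t(x)| \leq (T-t+1)\,a_{\max} L$, where $L$ is the maximum inventory level from Section \ref{sec: background}. Combining the pieces gives $|f'(\mu)| \leq a_{\max} + 2Ta_{\max}L =: c$, and the mean value theorem completes the argument. I do not anticipate any real obstacle here; the only care needed is in correctly executing the summation-by-parts identity (which is where the $h(D+1)-h(D)$ differences appear and where the subtle boundary term from $p_0'(\mu)=-p_0(\mu)$ is absorbed via the convention $p_{-1}\equiv 0$) and in verifying the uniform bound on $\widehat V_{t+1}$.
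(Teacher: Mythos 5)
Your proposal is correct, and it proves the same underlying fact as the paper --- that $\lambda \mapsto \widehat Q_t(x,a;\lambda)$ is Lipschitz on $[\lambda_{\min},\lambda_{\max}]$ uniformly in $(t,x,a)$ --- but by a different technical device. The paper expands the difference into three sums $K_1,K_2,K_3$ of terms $e^{-\lambda_t(a)}\lambda_t(a)^d - e^{-\lambda}\lambda^d$, rewrites the infinite tail sum $K_2$ via $\sum_{d>x} p_d = 1-\sum_{d\le x}p_d$ so that only indices $d\le x$ remain, and then invokes the Lipschitz continuity of $h(\lambda)=e^{-\lambda}\lambda^d$ for bounded $d$. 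You instead write $\widehat Q_t(x,a;\mu)$ as $a\,\E_{D\sim\mathrm{Poisson}(\mu)}[\min(D,x)]+\E[\widehat V_{t+1}(x-\min(D,x))]$ and differentiate the whole expectation via the identity $\frac{d}{d\mu}\E[h(D)]=\E[h(D+1)-h(D)]$, which handles the infinite sum automatically and reduces the Lipschitz constant to $\sup_d|h(d+1)-h(d)|$. Your route yields an explicit and cleaner constant, $c\le a_{\max}(1+2TL)$, and --- importantly --- you supply the backward-induction bound $|\widehat V_{t+1}|\le (T-t)a_{\max}L$, which the paper's proof implicitly needs for its $K_3$ term (whose coefficients involve $\widehat V_{t+1}(x-d)$) but never verifies. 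The one caveat is that the lemma is stated for a fully generic $\widehat V_{t+1}$, so the constant $c$ necessarily depends on $\sup_y|\widehat V_{t+1}(y)|$; your induction establishes this bound for the estimators actually produced by the paper's algorithms, which is the relevant case. No gap.
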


\begin{proof}
    
Following Proposition \ref{prop: bellman}, under the true Bellman operator, we have \begin{align*}
        (\B_t \widehat V_{t+1})(x,a)= a x 
    + \sum_{d=0}^{x-1}
    \bigl[
        \widehat V_{t+1}(x-d)
        - \widehat V_{t+1}(x-d-1)
        - a
    \bigr] F_t(d|a)
    \end{align*}  Then for any $F \in \Omega_t(x,a) $, and 
\begin{align*}
   & (\B_t \widehat V_{t+1})(x,a)-\widehat Q_{t}(x,a;F ) =\sum_{d=0}^{x-1}
    \bigl[
        \widehat V_{t+1}(x-d)
        - \widehat V_{t+1}(x-d-1)
        - a
    \bigr] (F_t(d|a)-F^{(d)})\\
   \leq & \; \max\{a_{\max}-a, a-a_{\min}\} \sum_{d=0}^{x-1} |F_t(d|a)-F^{(d)}|
\end{align*}

\end{proof}

\subsection{Proof of Theorem \ref{thm: regret 2}} \label{appendix:regret2}

\begin{proof} Since the theorem concerns the regret bound of the pessimistic approach, throughout  this proof, all the estimated value functions and Q-functions follow the estimated pessimistic policy $\widehat \pi ^{\mathbf{pess}}$.

We first show that the term $J_2$
in Lemma \ref{lemma:decomp} is non-negative. By the union bound, we have with  probability at least $1-\sum_{t=1}^T \frac{|\mathcal{A}_t^{\mathcal{D}_N}|}{N}$, the true demand CDF $F_t(d|a)$ lies in the interval $\Omega_t(x,a)$, uniformly over $t$ and $d \leq x-1$. 
Since $\widehat{Q}_T^{\mathbf{pess}}(x,a)=\min_{F \in \Omega_t(x,a)} (\B_T^F \widehat{V}_{T+1})(x,a)$, we have
    \begin{align*}
    l_t(x,a)=(\B_t \widehat V_{t+1}^{\mathbf{pess}})(x,a)-\widehat Q_{t}^{\mathbf{pess}}(x,a) \geq 0,
\end{align*}  with the same probability, uniformly in $t$.

Since the estimated policy is greedy with respect to the estimated Q-function $\widehat Q_{t}^{\mathbf{pess}}(x,a)$, we have $J_3 \leq 0$.
Thus $\mu^{\mathbf{pess}}(x)$ is upper bounded by $J_1$. To bound $J_1$, we next derive an upper bound for 
$l_t(x,a)$. Using the same technique in the proof of Lemma \ref{lemma:qerror} in Section \ref{appendix:lemma2}, $l_t(x,a)$ is 
of the same order of $\widehat  F_t^U(a)-\widehat  F_t^L(a)$, hence we have
\begin{align*}
    & l_t(x,a) \leq c [ \widehat  F_t^U(a)-\widehat  F_t^L(a)],
\end{align*}
for some positive constant $c$.

In the following, we only consider the case that both $\widehat a_t^U$ and $\widehat a_t^L$ exist, i.e., the confidence interval $\Omega_t(x,a)$ can be constructed using other prices or $a$ itself. We can tackle similarly the other two cases: (a) when $\widehat{a}_t^U$ exists but $\widehat{a}_t^L$ does not, and (b) when $\widehat{a}_t^L$ exists but $\widehat{a}_t^U$ does not. We thus omit the details to save the space.
By definition, it follows that \begin{align*}
    & \widehat  F_t^U(d|a)-\widehat  F_t^L(d|a)=\widehat F_t(d|\widehat a_t^U) +\delta_t(\widehat a_t^U)-\widehat F_t(d|\widehat a_t^L) +\delta_t(\widehat a_t^L)\\
    \leq \; & \widehat F_t( d|a_t^U) +\delta_t( a_t^U)-\widehat F_t( d|a_t^L) +\delta_t( a_t^L) \\
    \leq \; &  F_t(d| a_t^U) +2\delta_t( a_t^U)- F_t( d|a_t^L) +2\delta_t( a_t^L),
\end{align*} 
where the first inequality holds due to the observation that $\widehat a_t^U$ and $\widehat a_t^L$ are the prices which provide the tightest interval; and the second inequality follows from concentration inequality, with the same probability, uniformly in $t$. 

Now let  \begin{align*}
   \eta_t(x,a) \equiv 2(x-1)\left[\delta_t( a_t^U)+\delta_t( a_t^L)\right]+\sum_d^{x-1} \left[F_t(d| a_t^U) - F_t( d|a_t^L)\right],
\end{align*} 

Together with Lemma \ref{lemma:decomp}, this yields the result that with probability at least $1-\sum_{t=1}^T\sum_{a\in\mathcal A _t^{\mathcal{D}_N}}  \frac{1}{N_t(a)}$, \begin{align*}
    &\mu^{\mathbf{pess}}(x) \leq  \sum_{t=1}^T \E^{\pi^*}[ l_t(X_t,A_t) ] \leq \sum_{t=1}^T \E^{\pi^*}\left( \eta_t (X_t,A_t)  \right)\\
    =& \;\sum_{t=1}^T \E^{\pi^*}\left[ \eta_t (X_t,A_t) \mathds{1}(A_t \in \mathcal{M}_t(X_t)) \right]+\sum_{t=1}^T \E^{\pi^*}\left[ \eta_t (X_t,A_t) \mathds{1}(A_t \notin \mathcal{M}_t(X_t)) \right]\\ 
    =& \; \sum_{t=1}^T \E^{\pi^b}\left[  \frac{\prob_t^{\pi^*}(X_t, A_t)}{\prob_t^{\pi^b}(X_t,A_t)} \eta_t (X_t,A_t) \mathds{1}(A_t \in \mathcal{M}_t(X_t))  \right]+ \sum_{t=1}^T\E^{\pi^*}\left( \eta_t (X_t,A_t)\mathds{1}(A_t \notin \mathcal{M}_t(X_t)) \right)\\
    =& \; \underbrace{\sum_{t=1}^T \E^{\pi^b}\left[  \frac{\prob_t^{\pi^*}(X_t, A_t)}{\prob_t^{\pi^b}(X_t,A_t)} \eta_t (X_t,A_t) \mathds{1}(A_t \in \mathcal{M}_t(X_t), A_t \in \mathcal{A}_t^{\mathcal{D}_N} )  \right]}_{H_1}\\
    & +\underbrace{\sum_{t=1}^T \E^{\pi^b}\left[  \frac{\prob_t^{\pi^*}(X_t, A_t)}{\prob_t^{\pi^b}(X_t,A_t)} \eta_t (X_t,A_t) \mathds{1}(A_t \in \mathcal{M}_t(X_t), A_t \notin \mathcal{A}_t^{\mathcal{D}_N} )  \right]}_{H_2}\\
    & + \underbrace{\sum_{t=1}^T\E^{\pi^*}\left( \eta_t (X_t,A_t)\mathds{1}(A_t \notin \mathcal{M}_t(X_t)) \right)}_{H_3}.
\end{align*} The second equality holds as on the event $\{A_t \in \mathcal{M}_t(X_t) \}$, $\prob_t^{\pi^*}(X_t, A_t)/\prob_t^{\pi^b}(X_t,A_t)$ is bounded from above. This allows us to apply the change of measure to replace $\E^{\pi^*}$ with $\E^{\pi^b}$.

We now first analyze $H_1$, since $A_t \in \mathcal{A}_t^{\mathcal{D}_N}$, by definition, we have $A_t^U=A_t^L=A_t$, and hence $\eta_t(X_t,A_t)$ reduces to $4(X_t-1)\delta_t(A_t)$. Therefore, we have \begin{gather*}
    H_1 = \sum_{t=1}^T 4\E^{\pi^b}\left[  (X_t-1)\frac{\prob_t^{\pi^*}(X_t, A_t)}{\prob_t^{\pi^b}(X_t,A_t)} \sqrt{\log N/N_t(A_t) } \mathds{1}(A_t \in \mathcal{M}_t(X_t), A_t \in \mathcal{A}_t^{\mathcal{D}_N} )  \right]\\
    \leq  \sum_{t=1}^T 4 \E^{\pi^b}\left[ (X_t-1) \frac{\prob_t^{\pi^*}(X_t, A_t)}{\prob_t^{\pi^b}(X_t,A_t)} \sqrt{\log N/N_t(A_t) } \mathds{1}(A_t \in \mathcal{M}_t(X_t))  \right].
\end{gather*}

We next focus on $H_2$. First, notice that with probability at least $1-\kappa_t$, $\mathcal{A}_t^{\pi^b}=\mathcal{A}_t^{\mathcal{D}_N}$. Thus, conditional on the event $\mathcal{A}_t^{\pi^b}=\mathcal{A}_t^{\mathcal{D}_N}$, if $A_t \in \mathcal{M}_t(X_t)$, but $A_t \notin \mathcal{A}_t^{\mathcal{D}_N}$, this implies that the ratio $\frac{\prob_t^{\pi^*}(X_t, A_t)}{\prob_t^{\pi^b}(X_t,A_t)}=0$. It follows that $H_2=0$ with probability at least $1-\kappa_t$. Thus, so far we have shown that \begin{align*}
    \mu^{\mathbf{pess}}(x) \leq & \sum_{t=1}^T 4\E^{\pi^b}\left[ (X_t-1)  \frac{\prob_t^{\pi^*}(X_t, A_t)}{\prob_t^{\pi^b}(X_t,A_t)} \sqrt{\log N/N_t(A_t) } \mathds{1}(A_t \in \mathcal{M}_t(x))  \right]\\
     &\;  + \sum_{t=1}^T\E^{\pi^*}\left( \eta_t (X_t,A_t)\mathds{1}(A_t \notin \mathcal{M}_t(x)) \right).
\end{align*}

Finally, we show that this upper bound can be achieved. Since the action and state spaces in our setting are discrete, it can be formulated as the linear MDP \citep{jin2021pessimism}. The first part $H_1$ achieves the minimax optimal 
rate for offline linear MDP 
up to multiplicative factors of the dimension and horizon,
following Theorems 4.4, 4.6, and Corollary 4.5 for linear MDP in \citet{jin2021pessimism}. Specifically, Theorem 4.4 established the upper bound for the linear MDP regret, while Theorem 4.6 provided the matching lower bound. Additionally, Corollary 4.5 demonstrated that the regret converges at a rate of $N^{-1/2}$, up to logarithmic factors, i.e., $\sqrt{\log N / N}$.

It now remains to analyze the third term $H_3$, which corresponds to the error due to the unobserved optimal price. In what follows, we only deal with the scenario that both $a^U$ and $a^L$ exist so that
\begin{align*}
    \eta_t (x,a) =  \sum_d^{x-1} \left[\underbrace{F_t(d|a_t^u) -F_t(d|a_t^l)}_{L_1}\right]+2(x-1)\left[\underbrace{\delta_t(a_t^u)+\delta_t(a_t^l)}_{L_2}\right],
\end{align*} since the other two cases can be handled analogously. The term $L_1$ is the length of the interval $[ F_t(d|a_t^l), F_t(d|a_t^u)]$ for $F_t(d|a)$. 
Conditional on the event $\{\mathcal{A}_t^{\pi^b}=\mathcal{A}_t^{\mathcal{D}_N}\}$, the interval $[ F_t(d|a_t^l), F_t(d|a_t^u)]$ is the sharpest interval, as illustrated in the main paper. In other words, any value in the interval $[ F_t(d|a_t^l), F_t(d|a_t^u)]$ cannot be excluded as the potential true value of parameter. Thus, $F_t(d|a)$ can be either $ F_t(d|a_t^l)$ or $ F_t(d|a_t^u)$. Consequently, term $L_1$ is attainable. Combining this result with the fact that the second term $L_2$ arises from approximating the expectation with the empirical mean, which is rate-optimal, we complete the proof.
\end{proof}

\newpage

\subsection{ Proof of Theorem \ref{theorem:regret3}} \label{appendix:mm regret}

\begin{proof} Since this proof focuses on the regret bound of the opportunistic approach. Throughout, all the estimated value functions and the estimated Q-functions follows the estimated opportunistic policy $\widehat \pi ^{\mathbf{opp}}$.

The proof consists of three parts. We first show that $l_t(x,a)$ is a non-negative term; then we show that the term $J_3$ in Lemma \ref{lemma:decomp} is non-positive. Finally, we upper bound the term $J_1$ in Lemma \ref{lemma:decomp}.

We now show that $l_t(x,a)$ is a non-negative term. Recall that for for any $a \in \mathcal{A}$ and any $t$, the opportunistic approach first seek the value of $F \in \Omega_t(x,a), F'\in \Omega_t(x, a')$ and $a' \in \mathcal{ A}$ to maximize the regret, i.e., \begin{align*} 
   \max_{F' \in \Omega_t(x, a'), F \in \Omega_t(x,a)}\left [\max_{a'}\widehat Q_t^{\mathbf{opp}}(x,a';F')-\widehat Q_t^{\mathbf{opp}}(x,a;F)\right].
\end{align*} Thus we have \begin{align*}
    \max_{a'\in \mathcal{A}} \widehat Q_t^{\mathbf{opp}} (x,a'; F_t^{\mathbf{opp}}(a^+))- \widehat Q_t ^{\mathbf{opp}} (x,a; F_t^{\mathbf{opp}}(a)) \geq \max_{a'\in\mathcal{ A}} \widehat Q_t^{\mathbf{opp}} (x,a'; F_t^{\mathbf{opp}}(a^+))- \widehat Q_t ^{\mathbf{opp}} (x,a;F_t(d|a)),
\end{align*} since the pair $( F_t^{\mathbf{opp}}(a^+),  F_t^{\mathbf{opp}}(a))$ is the maximizer of the corresponding optimization problem, for a fixed $a$. It then follows that \begin{align*}
    \widehat Q_t ^{\mathbf{opp}} (x,a; F_t^{\mathbf{opp}}(a)) \leq \widehat Q_t ^{\mathbf{opp}} (x,a,F_t(d|a))= (\B_t \widehat V_{t+1}^{\mathbf{opp}})(x,a).
\end{align*} Recall that $l_t(x,a)=(\B_t \widehat V_{t+1}^{\mathbf{opp}})(x,a)-\widehat Q_t ^{\mathbf{opp}} (x,a; F_t^{\mathbf{opp}}(a))$ for the opportunistic approach. Thus we have proved that $l_t(x,a) \geq 0$ for any $x$ and $a$.

Now we show that $J_3\leq 0$. For $a_t^*$, its estimated maximum regret is \begin{gather*}
    \widehat Q_t^{\mathbf{opp}} (x,a_m(a_t^*); F_t^{\mathbf{opp}}(a_m(a_t^*)))-\widehat Q_t ^{\mathbf{opp}} (x,a_t^*; F_t^{\mathbf{opp}}(a_t^*)),
\end{gather*} where $a_m(a_t^*) \in \argmax_{a'  \in \mathcal{A}} \left[\widehat Q_t^{\mathbf{opp}} (x,a', F_t^{\mathbf{opp}}(a^+))- \widehat Q_t ^{\mathbf{opp}} (x,a_t^*; F_t^{\mathbf{opp}}(a_t^*))\right]$. 

Similarly, the estimated maximum regret for $\widehat a_t^{\mathbf{opp}} $ is  \begin{gather*}
    \widehat Q_t^{\mathbf{opp} } (x,a_m(\widehat a_t^{\mathbf{opp}}); F_t^{\mathbf{opp}}(a_m(\widehat a_t^{\mathbf{opp}})))-\widehat Q_t ^{ \mathbf{opp}} (x,\widehat a_t^{\mathbf{opp}}; F_t^{\mathbf{opp}}(\widehat a_t^{\mathbf{opp}})).
\end{gather*} We first consider the scenario that $a_m(\widehat a_t^{\mathbf{opp}}) \neq \widehat a_t^{\mathbf{opp}}$. Under this scenario, the maximization problem over $\lambda$'s is separable, thus we have 
\begin{align} \label{eq:opp max}
    \notag & \widehat Q_t^{\mathbf{opp}} (x,a_m(a_t^*); F_t^{\mathbf{opp}}(a_m(a_t^*)))-\widehat Q_t ^{\mathbf{opp}} (x,\widehat a_t^{\mathbf{opp}}; F_t^{\mathbf{opp}}(\widehat a_t^{\mathbf{opp}}))\\
    \leq & \; \widehat Q_t^{\mathbf{opp}} (x,a_m(\widehat a_t^{\mathbf{opp}}); F_t^{\mathbf{opp}}(a_m(\widehat a_t^{\mathbf{opp}})))-\widehat Q_t ^{\mathbf{opp}} (x,\widehat a_t^{\mathbf{opp}}; F_t^{\mathbf{opp}}(\widehat a_t^{\mathbf{opp}})).
\end{align} The inequality follows from the fact that the pair $\left[ F_t^{\mathbf{opp}}(a_m(\widehat a_t^{\mathbf{opp}})), a_m(\widehat a_t^{\mathbf{opp}})\right]$ is the maximizer of the regret given the fixed pair $\left[ F_t^{\mathbf{opp}}(\widehat a_t^{\mathbf{opp}}),\widehat a_t^{\mathbf{opp}}\right]$.

We next consider the case that $a_m(\widehat a_t^{\mathbf{opp}}) = \widehat a_t^{\mathbf{opp}}$.  This setting happens only if the action $\widehat a_t^{\mathbf{opp}}$ is so powerful that its worst performance surpasses the best performance of any other action. Thus we have, 
\begin{align*}
    & \max_{F' \in \Omega_t(x, a')} \max_{a' \neq \widehat a_t^{\mathbf{opp}}}\widehat Q_t^{\mathbf{opp}} (x,a';F')-\widehat Q_t ^{\mathbf{opp}} (x,\widehat a_t^{\mathbf{opp}}; F_t^{\mathbf{opp}}(\widehat a_t^{\mathbf{opp}})) \\
    \leq & \; 0=  \widehat Q_t^{\mathbf{opp}} (x,a_m(\widehat a_t^{\mathbf{opp}}); F_t^{\mathbf{opp}}(a_m(\widehat a_t^{\mathbf{opp}})))-\widehat Q_t ^{\mathbf{opp}} (x,\widehat a_t^{\mathbf{opp}}; F_t^{\mathbf{opp}}(\widehat a_t^{\mathbf{opp}})).
\end{align*}

In either case, Equation \eqref{eq:opp max} holds. Thus we have \begin{align*}
    & \widehat Q_t^{\mathbf{opp}} (x,a_m(a_t^*); F_t^{\mathbf{opp}}(a_m(a_t^*)))-\widehat Q_t ^{\mathbf{opp}} (x,\widehat a_t^{\mathbf{opp}}; F_t^{\mathbf{opp}}(\widehat a_t^{\mathbf{opp}}))\\
    \leq & \; \widehat Q_t^{\mathbf{opp}} (x,a_m(\widehat a_t^{\mathbf{opp}}); F_t^{\mathbf{opp}}(a_m(\widehat a_t^{\mathbf{opp}})))-\widehat Q_t ^{\mathbf{opp}} (x,\widehat a_t^{\mathbf{opp}}; F_t^{\mathbf{opp}}(\widehat a_t^{\mathbf{opp}})) \\
    \leq & \; \widehat Q_t^{\mathbf{opp}} (x,a_m(a_t^*); F_t^{\mathbf{opp}}(a_m(a_t^*)))-\widehat Q_t ^{\mathbf{opp}} (x,a_t^*; F_t^{\mathbf{opp}}(a_t^*)).
\end{align*} The first inequality follows from Equation \eqref{eq:opp max}, while the second inequality is due to $\widehat a_t^{\mathbf{opp}}$ is chosen such that the maximum regret is minimum (Equation \eqref{eq: minimax}). Thus we have \begin{align} \label{eq:opp greedy}
    \notag & \widehat Q_t^{\mathbf{opp}} (x,a_m(a_t^*); F_t^{\mathbf{opp}}(a_m(a_t^*)))-\widehat Q_t ^{\mathbf{opp}} (x,\widehat a_t^{\mathbf{opp}}; F_t^{\mathbf{opp}}(\widehat a_t^{\mathbf{opp}})) \\
    \notag \leq & \; \widehat Q_t^{\mathbf{opp}} (x,a_m(a_t^*); F_t^{\mathbf{opp}}(a_m(a_t^*)))-\widehat Q_t ^{\mathbf{opp}} (x,a_t^*; F_t^{\mathbf{opp}}(a_t^*)), \\
    \implies & \widehat Q_t ^{\mathbf{opp}} (x,\widehat a_t^{\mathbf{opp}}; F_t^{\mathbf{opp}}(\widehat a_t^{\mathbf{opp}})) 
    \geq \; \widehat Q_t ^{\mathbf{opp}} (x,a_t^*; F_t^{\mathbf{opp}}(a_t^*)). 
\end{align} Recall that $J_3=\sum_{t=1}^T \E^{\pi^*}\left[ 
    \widehat Q_t ^{\mathbf{opp}} (x,a_t^*; F_t^{\mathbf{opp}}(a_t^*))-\widehat Q_t ^{\mathbf{opp}} (x,\widehat a_t^{\mathbf{opp}}; F_t^{\mathbf{opp}}(\widehat a_t^{\mathbf{opp}})) \right]$ for the opportunistic approach, which is non-positive according to Equation \eqref{eq:opp greedy}.

To summarize, we have so far shown that (i) $l_t(x,a)\geq 0$; (ii) and the term $J_3$ is non-positive. It now remains to upper bound $l_t(x,a)$, using the same technique in the proof of Lemma \ref{lemma:qerror} in Section \ref{appendix:lemma2} and Theorem \ref{thm: regret 2} ,
\begin{align*}
    & l_t(x,a) \leq c [ \widehat  F_t^U(a)-\widehat  F_t^L(a)].
\end{align*} 
It thus remains to upper bound $\widehat  F_t^U(a)-\widehat  F_t^L(a)$, and show that it is rate optimal, which can be analogously demonstrated using the method employed in the proof of Theorem \ref{thm: regret 2}. We omit the details to save the space. This completes the proof.
\end{proof}

\end{document}